\documentclass{article} %
\usepackage{conference,times}

\usepackage{amsmath,amsfonts,bm}

\def\eqref#1{equation~\ref{#1}}

\def\1{\bm{1}}

\def\rw{{\textnormal{w}}}

\def\vx{{\bm{x}}}

\def\mW{{\bm{W}}}

\DeclareMathAlphabet{\mathsfit}{\encodingdefault}{\sfdefault}{m}{sl}
\SetMathAlphabet{\mathsfit}{bold}{\encodingdefault}{\sfdefault}{bx}{n}

\def\sR{{\mathbb{R}}}

\usepackage[hidelinks]{hyperref}
\usepackage{url}
\usepackage{amssymb}
\usepackage{graphicx}
\graphicspath{{figures/}}
\usepackage{booktabs}
\usepackage{multirow}
\usepackage{algorithm}
\usepackage[noend]{algorithmic}   %
\usepackage{float}          %
\usepackage{caption}        %
\usepackage{wrapfig}        %
\usepackage{pgf}           %
\usepackage{tikz}          %
\usetikzlibrary{calc,arrows.meta}
\usepackage{amsthm}
\newtheorem{proposition}{Proposition}
\newtheorem{theorem}{Theorem}

\newtheorem{corollary}{Corollary}

\usepackage{cleveref} %
\crefname{appendix}{Appendix}{Appendices}
\Crefname{appendix}{Appendix}{Appendices}
\usepackage{enumitem}

\title{Activation Denoising: A Robustness View on\\ Parallel vs Sequential LLM Quantization}

\author{%
Yan Scholten\textsuperscript{1}, Rachel Lawrence\textsuperscript{2}, James Hensman\textsuperscript{2}, Stephan Günnemann\textsuperscript{1},\\
\textbf{Alicia Curth\textsuperscript{2,$*$}, Riccardo Grazzi\textsuperscript{2,$*$}}\\%[3pt]
\textsuperscript{1}Dept.\ of Computer Science \& Munich Data Science Institute, Technical University of Munich\\
\textsuperscript{2}Microsoft Research Cambridge, UK\\%[2pt]
\small\texttt{\{y.scholten,s.guennemann\}@tum.de}, \\
\small\texttt{\{rachel.lawrence,jameshensman,aliciacurth,t-rgrazzi\}@microsoft.com}%
}

\footnotetext[1]{Equal supervision}

\iclrfinalcopy %
\begin{document}

\maketitle

\vspace{-0.5cm}
\begin{abstract}
\vspace{-0.15cm}
Post-training quantization is a powerful tool for compressing large language models. The most scalable methods quantize every layer in parallel, but quantization errors then compound through the residual stream, as no layer corrects for the errors of the layers before it. Sequential quantization accounts for this error compounding by re-calibrating each layer on the already-quantized outputs of its predecessors, yielding stronger results but at the cost of a serial schedule that becomes a bottleneck at scale. As a solution, we propose \emph{parallel quantization with activation denoising}, which recovers much of the sequential benefit while keeping quantization fully parallel. Rather than re-calibrating layer-by-layer, we take a robustness perspective and model the upstream error as noise, regularizing to be robust to it through a preprocessing step followed by metric-weighted rounding. Applied at every layer, this regularization forms a depth-compounding smoothness penalty that dampens how strongly quantization errors amplify through the~model. Unlike orthogonal rotations commonly used in quantization, which must preserve the model's function, we multiply the weights by a more general linear transformation. We find that the two are complementary and their effects compound. Empirically, our robustness regularization recovers a significant part of sequential quantization's benefit in a single parallel pass, at a fraction of its time. Overall, by treating compounding quantization errors as a robustness problem, we offer a principled foundation for more efficient and accurate LLM quantization at~scale.
\end{abstract}
\vspace{-0.5cm}

\section{Introduction}
\label{sec:introduction}
\vspace{-0.1cm}

Post-training quantization (PTQ) is a powerful tool for deploying large language models (LLMs) under memory and bandwidth budgets, mapping high-precision weights to a few bits without retraining \citep{dettmers2022int8,dettmers2023kbit,frantar2023optq}. Quantization usually involves two steps: (1) a forward pass that collects each layer's input activations, and (2)~rounding the weights onto the quantization grid, a challenging optimization problem that dominates quantization time. The most scalable PTQ methods therefore round all layers independently and in~parallel.

\begin{figure}[t!]
    \centering
    \vspace{-0.5cm}
    \input{figures/0_hero/hero.tex}
    \vspace{-0.3cm}
    \caption{\textbf{A robustness perspective on the parallel-sequential quantization gap.}
    Quantization errors enter the residual stream at every layer and accumulate with depth. Parallel quantization~(left) is efficient but ignores this error; sequential quantization~(right) corrects it, but only one layer after another.
    Instead of measuring and correcting this error layer by layer, we regularize every layer against noise~(middle): we inject random noise in a single extra forward pass and use its moments to construct a denoising filter $\mathbf{F}_\ell$ that is absorbed into each layer's weights before rounding. Every layer thereby becomes more robust to noise, at no additional inference cost.
    Overall, our findings suggest that accumulated quantization error can be treated as a robustness problem: regularizing all layers against noise in parallel offers an efficient alternative to correcting the error sequentially.}
    \label{fig:hero}
\end{figure}

The key problem of existing parallel methods is that they sacrifice accuracy for efficiency: each layer is calibrated on clean inputs and never corrects for quantization errors it receives at inference time from previous layers, so errors compound through the model \citep{arai2025qep}. Sequential quantization corrects this by calibrating each layer on outputs of the layers quantized before it \citep{frantar2023optq,li2025gptaq}. This becomes a bottleneck at scale since layers must then be quantized one after another. Recovery fine-tuning \citep{tseng2024quipsharp,malinovskii2024pvtuning} can partially recover lost accuracy, but only as a separate costly training~stage.

As a solution, we take a robustness perspective on the gap between parallel and sequential quantization and propose \emph{parallel quantization with activation denoising}~(\autoref{fig:hero}): instead of correcting for the upstream quantization error one layer at a time, we make each layer robust to upstream perturbations. In a single additional forward pass through the full-precision model, we inject random noise into the activations and propagate it through the model. Then, for each weight matrix in parallel, we construct a linear denoising filter that estimates cleaner from noisy inputs and is absorbed into~the~weights. After this step, we round the modified weights in a noise-aware metric, independently and in parallel.
The denoising filter acts as a depth-compounding smoothness penalty that limits how strongly errors from quantized layers amplify through the LLM at inference.

Activation denoising adds weighted column-wise $L_2$ regularization, complementing rotations. Rotations reduce weight outliers by redistributing magnitude across channels \citep{chee2023quip,ashkboos2024quarot}, but preserve the Frobenius norm. To preserve the model's function, each rotation must be paired with its inverse, either fused into adjacent weights or applied online at inference. Our denoising filter instead absorbs its regularization into a general linear weight transform. Empirically, it lowers the Frobenius norm and improves over both random and learned rotations. \\
Our main contributions~are:

\begin{itemize}[itemsep=1pt,topsep=-3pt,leftmargin=0.5cm] 
    \item \textbf{Activation denoising framework.} We propose a quantization objective that makes each layer robust to perturbed inputs and generalizes existing parallel and sequential objectives. We show that it reduces exactly to metric-weighted rounding with a transformed target and metric, allowing the use of existing rounding algorithms  and adding no inference cost.
    \item \textbf{Noise injection as a robustness regularizer.} To enable parallel and robust quantization, we estimate the required statistics from a single additional forward pass with injected Gaussian noise at the input of each weight, and prove that the local injection induces a weighted column-wise $L_2$ penalty toward down-scaled targets.
    \item \textbf{Empirical validation.} We demonstrate that parallel activation denoising consistently improves $3$- and $4$-bit scalar and $2$-bit vector quantization across models from $1$B to $13$B parameters, recovering a significant part of the benefit of sequential quantization while all layers are still quantized in parallel.
    Finally, we show that the remaining gap to sequential quantization can be closed through cheap trade-offs rather than a full sequential pass. 
    
\end{itemize}

Overall, addressing quantization errors from a robustness perspective pushes the efficiency--accuracy frontier of LLM quantization, narrowing the gap to sequential quantization at a fraction of its time.

\vspace{-0.6cm}
\section{Related work}
\vspace{-0.1cm}
\label{sec:related-work}

\textbf{Post-training quantization for LLMs.} PTQ compresses a pretrained LLM using a small calibration set without retraining \citep{yao2022zeroquant,dettmers2022int8,dettmers2023kbit}. Below 4 bits, the dominant approach minimizes a layer-wise reconstruction error with second-order information \citep{nagel2020upordown,frantar2022optimalbrain,frantar2023optq}, recently refined with Kronecker-factored curvature \citep{tseng2025modelpreserving,lee2026kronq}. Complementary directions protect salient or outlier weights \citep{lin2024awq,dettmers2024spqr,kim2024squeezellm}, allocate mixed precision \citep{huang2025slimllm,zhao2024atom}, learn quantization parameters block-wise \citep{shao2024omniquant,liu2024qllm}, or replace the uniform grid with floating-point formats \citep{liu2023llmfp4} or vector or trellis codes \citep{egiazarian2024extreme,vanbaalen2024gptvq,tseng2024quipsharp,tseng2024qtip,ouderaa2024pyramid}.  Our proposed framework is general and supports both scalar and vector quantization. Moreover, we focus on finetuning-free PTQ, avoiding the extra training stage of quantization-aware training and recovery fine-tuning \citep{liu2024llmqat,malinovskii2024pvtuning} or the inference cost of full-precision low-rank corrections \citep{scetbon2024lowrank}.

\textbf{Outliers and incoherence processing.}
Outliers in weights and activations limit low-bit precision \citep{wei2022outlier,bondarenko2023quantizable,zhang2024magr} and can be migrated between activations and weights by equivalent channel-wise scaling \citep{xiao2023smoothquant,wei2023outliersuppressionplus}. Incoherence processing instead rotates weights and activations using random orthogonal or Hadamard matrices \citep{chee2023quip,tseng2024quipsharp}, which spread outliers without changing the model output, but modify the forward pass. Exploiting the computational invariance of transformers \citep{ashkboos2024slicegpt}, some of these rotations can be fully fused into weights, leaving the forward pass unchanged but tying the rotations across layers \citep{ashkboos2024quarot}. Recent work optimizes these transformations rather than sampling them \citep{lin2024duquant,liu2025spinquant,akhondzadeh2025kurtail,shao2025dart,gadhikar2025optrot,sun2025flatquant}. These transformations make weights and activations easier to quantize while preserving the full-precision model's output. Our denoising instead changes each layer's mapping to compensate for upstream errors and reduces the weights' Frobenius norm, complementing the outlier redistribution of orthogonal~rotations.

\textbf{Cross-layer error compensation.}
Sequential methods compensate for the error accumulated in already-quantized layers 
\citep{nagel2020upordown,li2021brecq,ding2025cbq}: GPTAQ \citep{li2025gptaq} matches each quantized layer to the full-precision output within GPTQ's column-wise updates, QEP \citep{arai2025qep} achieves the same through a closed-form weight correction applied before rounding, Qronos \citep{zhang2025qronos} corrects it within the rounding updates, and CoreQ \citep{cha2026coreq} derives the correction strength in closed form. The corrections above require measuring the accumulated error in a fully sequential pass. We instead replace the measurement with a noise model estimated in parallel and account for it in the rounding target and metric; our objective recovers those of GPTQ, GPTAQ, and QEP as special cases (\autoref{sec:wiener}). Our random noise model relates to randomly dropping activation quantization during reconstruction \citep{wei2022qdrop}, but acts as a quantization-time regularizer that requires no gradient-based~optimization.

\vspace{-0.3cm}
\section{Background}
\label{sec:background}
\vspace{-0.25cm}

\textbf{Post-training quantization.}
Post-training quantization (PTQ) compresses a trained model by mapping its weights from high precision (e.g.\ $16$ bits) to a few bits, without retraining. Let $\mathcal{G}_b$ denote the set of weight matrices representable with $b$ bits per weight: uniform scalar quantization rounds each weight to one of $2^b$ levels of a uniform grid, with scale and zero-point shared within a row or group, whereas vector quantization rounds groups of $g$ weights jointly to one of $2^{bg}$ entries. Since rounding weights independently ignores how their errors interact, a standard PTQ objective instead minimizes the layer's output error on a small calibration set: for weights $\mathbf{W}\in\mathbb{R}^{d_{\mathrm{out}}\times d}$ and inputs $\mathbf{x}\in\mathbb{R}^{d}$ from a calibration set $\mathcal{D}$, $\min_{\hat{\mathbf{W}}\in\mathcal{G}_b}\mathbb{E}_{\mathbf{x}\sim\mathcal{D}}\|(\hat{\mathbf{W}}-\mathbf{W})\mathbf{x}\|^2$. To second order, and treating each layer independently, this approximates the KL divergence between the full-precision and the quantized model, decoupling row by row within each weight matrix $\mathbf{W}$, with curvature $\mathbf{H}=\mathbb{E}[\mathbf{x}\mathbf{x}^\top]$.

\textbf{Metric-weighted rounding.}
More generally, this is a metric-weighted rounding problem: given a continuous target $\mathbf{W}^\star$, a positive-definite metric $\mathbf{M}$, and a grid $\mathcal{G}_b$,
{
\setlength{\abovedisplayskip}{2pt}
\setlength{\belowdisplayskip}{1pt}
\begin{equation}
\min_{\hat{\mathbf{W}}\in\mathcal{G}_b}\ \operatorname{tr}\!\big[(\hat{\mathbf{W}}-\mathbf{W}^\star)\,\mathbf{M}\,(\hat{\mathbf{W}}-\mathbf{W}^\star)^\top\big],
\label{eq:gptq-operator}
\end{equation}
}%
where $\mathbf{M}$ decides which weight perturbations are cheap. In the simple case above $\mathbf{M}=\mathbf{H}$ and $\mathbf{W}^\star =\mathbf{W}$, while in general they can be different. \autoref{eq:gptq-operator} is an expensive combinatorial optimization problem. Existing rounding algorithms solve it approximately: GPTQ~\citep{frantar2023optq}, for example, greedily quantizes one column at a time and feeds each column's rounding error onto the remaining columns through $\mathbf{M}^{-1}$. We write $\operatorname{Round}^{\,b}_{\mathbf{M}}(\mathbf{W}^\star)$ for the approximate solution returned by any such rounding algorithm in the metric $\mathbf{M}$.

\textbf{Sequential vs.\ parallel quantization.}
Layer-wise PTQ methods differ in whether a layer is calibrated on clean inputs $\mathbf{x}$ or on perturbed inputs $\tilde{\mathbf{x}}$ it receives once the earlier layers are quantized. Parallel quantization uses $\mathbf{x}$, so all layers quantize independently but ignore the upstream error. Existing sequential quantization compensates for the upstream error either symmetrically, $\mathbb{E}\|(\hat{\mathbf{W}}-\mathbf{W})\tilde{\mathbf{x}}\|^2$ using only $\tilde{\mathbf{x}}$, or asymmetrically, $\mathbb{E}\|\hat{\mathbf{W}}\tilde{\mathbf{x}}-\mathbf{W}\mathbf{x}\|^2$ using both $\tilde{\mathbf{x}}$ and $\mathbf{x}$ (see e.g.\ \citep{li2025gptaq}).

\section{Activation denoising quantization: A filtering perspective}
\label{sec:wiener}

Quantization errors compound across layers: each layer's rounding error propagates downstream. Existing parallel methods ignore the upstream error during quantization. Sequential quantization corrects it, but has to round the layers one after another, which prevents parallelizing the rounding step. This leaves a gap between efficient parallel and accurate sequential quantization. As a solution, our main idea is to denoise each layer's input activations in parallel. To this end, we first propose a denoising quantization objective and show that it reduces to a linear weight transform followed by rounding in a new metric, at zero additional inference cost.

\vspace{-0.05cm}
\subsection{Quantization with activation denoising}
\vspace{-0.05cm}

We formulate quantization with activation denoising using the following objective:
{%
\setlength{\abovedisplayskip}{3pt}
\setlength{\belowdisplayskip}{2pt}
\begin{equation}
\min_{\hat{\mathbf{W}}\in\mathcal{G}_b}\;\; \mathbb{E}_{\mathbf{x},\boldsymbol{\eta}}\big\|\hat{\mathbf{W}}\big(\mathbf{x}+\lambda\,\boldsymbol{\eta}\big)-\mathbf{W}\big(\mathbf{x}+\rho\,\boldsymbol{\eta}\big)\big\|^2 \;+\; \alpha\,\|\hat{\mathbf{W}}-\mathbf{W}\|_F^2 ,
\label{eq:wiener-objective}
\end{equation}
}%
where $\mathbf{W}$ are the full-precision weights and $\hat{\mathbf{W}}\in\mathcal{G}_b$ the quantized weights on the $b$-bit grid, $\mathbf{x}$~are the clean input activations from the full-precision model and $\boldsymbol{\eta}$ a perturbation of them, with the expectation over their joint distribution ($\mathbf{x}\,{\sim}\,\mathcal{D}$). The first term is a reconstruction objective on the layer's output: the quantized weights receive the noisy input $\mathbf{x}+\lambda\boldsymbol{\eta}$ and must match the full-precision output evaluated on the target input $\mathbf{x}+\rho\boldsymbol{\eta}$. The input noise scale $\lambda\in[0,1]$ sets how much error the quantized layer's input carries, and the target noise scale $\rho\in[0,1]$ how much of it the target keeps. 
The second term is a proximity penalty on the weights, where $\alpha\,{\ge}\,0$ is a proximity~ridge.

\textbf{Partial vs.\ full denoising.} Keeping less noise in the target than in the input ($\rho\,{<}\,\lambda$) asks the layer to denoise its input. Here, $\rho\,{=}\,0$ is an asymmetric correction that requires the layer to remove the noise entirely, and $\rho\,{=}\,\lambda$ yields a symmetric correction, where the layer adapts to the noise without removing it. Values in between ($0\,{<}\,\rho\,{<}\,\lambda$) correspond to partial denoising. We treat $\lambda$ and $\rho$ as hyperparameters and find that partial denoising typically performs best in practice, even when $\boldsymbol{\eta}$ is the exact sequential error (see \Cref{app:sweep-range}).

\textbf{The activation denoising objective establishes a general and flexible quantization framework.}
Notably, the objective we aim to minimize in \autoref{eq:wiener-objective} generalizes existing parallel and sequential quantization objectives into a flexible framework. Specifically, we can recover the parallel GPTQ objective \citep{frantar2023optq} for $\lambda{=}0$ and $\rho{=}0$. When using the sequential error $\boldsymbol{\eta}_{\mathrm{seq}}=\tilde{\mathbf{x}}-\mathbf{x}$ (where $\tilde{\mathbf{x}}$ is the input when already quantized up to this layer), we can further recover sequential GPTQ ($\lambda{=}1$, $\rho{=}1$), as well as sequential GPTAQ \citep{li2025gptaq} and QEP \citep{arai2025qep} for $\lambda{=}1$ with $1{-}\rho$ being their correction strength hyperparameter. We show this in detail in \cref{sec:relation}.
In contrast to these baselines, our framework is more general and supports general noise models beyond the sequential $\boldsymbol{\eta}_{seq}$ and flexible noise scales $\lambda$ and $\rho$. Together, this allows us to find better trade-offs at the efficiency-accuracy frontier of LLM quantization.

\vspace{-0.05cm}
\subsection{Denoising quantization in practice: Preprocess then round}
\vspace{-0.05cm}

In the following, we show how to solve the optimization problem in \autoref{eq:wiener-objective} by transforming the weights and rounding them in a new metric using existing rounding algorithms.
\begin{theorem}[Pretransform-round equivalence]
\label{prop:pretransform-exact}
For $\alpha>0$, the optimization problem in \autoref{eq:wiener-objective} is equivalent to the one in \autoref{eq:gptq-operator} with a specific metric and target: it is equivalent to
{%
\setlength{\abovedisplayskip}{4pt}
\setlength{\belowdisplayskip}{2pt}
\begin{equation*}
\min_{\hat{\mathbf{W}}\in\mathcal{G}_b}\ \operatorname{tr}\!\big[(\hat{\mathbf{W}}-\tilde{\mathbf{W}})\,\mathbf{G}\,(\hat{\mathbf{W}}-\tilde{\mathbf{W}})^\top\big], \quad \tilde{\mathbf{W}} \triangleq \mathbf{W}\mathbf{F}
\end{equation*}
}
where
\begin{equation}
\begin{aligned}
\mathbf{G}\;\triangleq\;\mathbf{H}+\alpha\mathbf{I}+\lambda^2\mathbf{N}+\lambda(\mathbf{C}+\mathbf{C}^\top),\quad
\mathbf{F}\;\triangleq\;\big(\mathbf{H}+\alpha\mathbf{I}+\lambda\mathbf{C}+\rho\mathbf{C}^\top+\rho\lambda\mathbf{N}\big)\,\mathbf{G}^{-1},
\end{aligned}
\label{eq:wiener-filter}
\end{equation}
and $\mathbf{H}=\mathbb{E}[\mathbf{x}\mathbf{x}^\top]$ is the input covariance, $\mathbf{C}=\mathbb{E}[\mathbf{x}\boldsymbol{\eta}^\top]$ the signal-noise cross-covariance, and $\mathbf{N}=\mathbb{E}[\boldsymbol{\eta}\boldsymbol{\eta}^\top]$ the noise covariance.
\end{theorem}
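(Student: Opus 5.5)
The plan is to expand the objective in \autoref{eq:wiener-objective} as a quadratic form in $\hat{\mathbf{W}}$ and complete the square. Writing $\mathbf{u}=\mathbf{x}+\lambda\boldsymbol{\eta}$ and $\mathbf{v}=\mathbf{x}+\rho\boldsymbol{\eta}$, the first term equals
\begin{equation*}
\operatorname{tr}\!\big[\hat{\mathbf{W}}\,\mathbb{E}[\mathbf{u}\mathbf{u}^\top]\,\hat{\mathbf{W}}^\top\big]-2\operatorname{tr}\!\big[\hat{\mathbf{W}}\,\mathbb{E}[\mathbf{u}\mathbf{v}^\top]\,\mathbf{W}^\top\big]+\operatorname{tr}\!\big[\mathbf{W}\,\mathbb{E}[\mathbf{v}\mathbf{v}^\top]\,\mathbf{W}^\top\big].
\end{equation*}
The second moments follow from bilinearity and the definitions of $\mathbf{H},\mathbf{C},\mathbf{N}$:
\begin{equation*}
\mathbb{E}[\mathbf{u}\mathbf{u}^\top]=\mathbf{H}+\lambda(\mathbf{C}+\mathbf{C}^\top)+\lambda^2\mathbf{N},\qquad
\mathbb{E}[\mathbf{u}\mathbf{v}^\top]=\mathbf{H}+\rho\mathbf{C}+\lambda\mathbf{C}^\top+\lambda\rho\mathbf{N}.
\end{equation*}
The term $\mathbb{E}[\mathbf{v}\mathbf{v}^\top]$ does not involve $\hat{\mathbf{W}}$, so it is a constant.

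Next we add the ridge. The term $\alpha\|\hat{\mathbf{W}}-\mathbf{W}\|_F^2$ contributes $\alpha\operatorname{tr}[\hat{\mathbf{W}}\hat{\mathbf{W}}^\top]-2\alpha\operatorname{tr}[\hat{\mathbf{W}}\mathbf{W}^\top]$ plus a constant. The quadratic coefficient then becomes exactly $\mathbf{G}$. For the linear coefficient we set $\mathbf{B}\triangleq\mathbb{E}[\mathbf{u}\mathbf{v}^\top]+\alpha\mathbf{I}$, so that the linear term is $-2\operatorname{tr}[\hat{\mathbf{W}}\mathbf{B}\mathbf{W}^\top]$.

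To complete the square, note that $\mathbf{G}=\mathbb{E}[\mathbf{u}\mathbf{u}^\top]+\alpha\mathbf{I}$ is symmetric positive definite because $\alpha>0$, so $\mathbf{G}^{-1}$ exists. This is the only place the hypothesis $\alpha>0$ is needed. Define $\tilde{\mathbf{W}}$ through $\tilde{\mathbf{W}}\mathbf{G}=\mathbf{W}\mathbf{B}^\top$, that is, $\tilde{\mathbf{W}}=\mathbf{W}\mathbf{B}^\top\mathbf{G}^{-1}$. Then
\begin{equation*}
\operatorname{tr}[(\hat{\mathbf{W}}-\tilde{\mathbf{W}})\mathbf{G}(\hat{\mathbf{W}}-\tilde{\mathbf{W}})^\top]=\operatorname{tr}[\hat{\mathbf{W}}\mathbf{G}\hat{\mathbf{W}}^\top]-2\operatorname{tr}[\hat{\mathbf{W}}\mathbf{G}\tilde{\mathbf{W}}^\top]+\mathrm{const},
\end{equation*}
where we used the symmetry of $\mathbf{G}$. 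Since $\mathbf{G}\tilde{\mathbf{W}}^\top=\mathbf{B}\mathbf{W}^\top$, this reproduces the linear term above. The two objectives therefore differ by a constant independent of $\hat{\mathbf{W}}$, and so they have the same minimizers over $\mathcal{G}_b$.

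The remaining step, and the one I expect to be the only real obstacle, is bookkeeping of transposes: checking that $\mathbf{B}^\top$ matches the stated $\mathbf{F}$. Transposing gives
\begin{equation*}
\mathbf{B}^\top=\mathbf{H}+\alpha\mathbf{I}+\rho\mathbf{C}^\top+\lambda\mathbf{C}+\rho\lambda\mathbf{N}^\top,
\end{equation*}
using $\mathbf{H}^\top=\mathbf{H}$ and $\mathbf{N}^\top=\mathbf{N}$. This is precisely the first factor of $\mathbf{F}$, so $\tilde{\mathbf{W}}=\mathbf{W}\mathbf{F}$ as claimed. I would also double-check the orientation of the cross term: $\mathbb{E}[\mathbf{x}(\lambda\boldsymbol{\eta})^\top]=\lambda\mathbf{C}$ and $\mathbb{E}[\lambda\boldsymbol{\eta}\mathbf{x}^\top]=\lambda\mathbf{C}^\top$, which confirms that $\mathbb{E}[\mathbf{u}\mathbf{v}^\top]$ carries $\rho\mathbf{C}$ and $\lambda\mathbf{C}^\top$, and hence that $\mathbf{B}^\top$ carries $\lambda\mathbf{C}$ and $\rho\mathbf{C}^\top$.
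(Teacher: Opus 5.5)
Your proposal is correct and follows essentially the same route as the paper: rewrite the objective in terms of $\mathbf{u}=\mathbf{x}+\lambda\boldsymbol{\eta}$ and $\mathbf{v}=\mathbf{x}+\rho\boldsymbol{\eta}$, add the ridge, and complete the square using that $\mathbf{G}$ is symmetric and invertible for $\alpha>0$. The only difference is cosmetic: your $\mathbf{B}=\mathbb{E}[\mathbf{u}\mathbf{v}^\top]+\alpha\mathbf{I}$ is the transpose of the paper's $\mathbf{B}=\mathbb{E}[\mathbf{v}\mathbf{u}^\top]+\alpha\mathbf{I}$, which is why you need the extra transpose check before writing $\mathbf{F}=\mathbf{B}\mathbf{G}^{-1}$ in the paper's form.
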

\textit{Proof sketch.} Expanding \autoref{eq:wiener-objective} and completing the square shows that the objectives differ only by a constant independent of $\hat{\mathbf{W}}$, so they have the same minimizers. Full proof in \Cref{app:proofs-pretransform}. \hfill\qedsymbol

As a result, we can use any approximate solver of the rounding problem in \autoref{eq:gptq-operator} and apply it to the pretransformed weights $\tilde{\mathbf{W}}$ in the metric $\mathbf{G}$ to solve our denoising objective in~\autoref{eq:wiener-objective}.

Interestingly, at $\rho=0$ (and $\alpha{=}0$ when the inverse exists), $\mathbf{F}$ corresponds to the linear transformation known as the generalized Wiener filter \citep{wiener1949extrapolation,kailath2000linear}, which denoises the input: $\mathbf{F}\tilde{\mathbf{x}}$ estimates the clean input activations $\mathbf{x}$ from the corrupted input $\tilde{\mathbf{x}}=\mathbf{x}+\lambda\boldsymbol{\eta}$.

\textbf{Parallel denoising quantization in practice.}
\autoref{prop:pretransform-exact} turns the denoising quantization objective into a practical quantization procedure as follows. Given upstream noise $\boldsymbol{\eta}$ (either the true sequential quantization error $\boldsymbol{\eta}_{\mathrm{seq}}$, or a parallel approximation of it), we estimate the second-order statistics required by \autoref{prop:pretransform-exact} as empirical token averages over a calibration set of $n$ tokens: $\mathbf{H}=\tfrac1n\sum_i \mathbf{x}_i\mathbf{x}_i^\top$, $\mathbf{C}=\tfrac1n\sum_i \mathbf{x}_i\boldsymbol{\eta}_i^\top$, and $\mathbf{N}=\tfrac1n\sum_i \boldsymbol{\eta}_i\boldsymbol{\eta}_i^\top$.%

\begin{minipage}[t]{0.46\textwidth}
If $\boldsymbol{\eta}$ can be computed in parallel for all layers, each layer's solution depends only on its own moments and not on other quantized layers, 
and all layers can be quantized independently: we form metric $\mathbf{G}$, compute the pretransform $\tilde{\mathbf{W}}=\mathbf{W}\mathbf{F}$, and round it onto the grid $\mathcal{G}_b$ in the metric $\mathbf{G}$. In practice, we round using the GPTQ rounding proposed by \citet{frantar2023optq} and QuIP\# \citep{tseng2024quipsharp}. We summarize this procedure in \Cref{alg:wiener-quant}.
\end{minipage}%
\hfill
\begin{minipage}[t]{0.52\textwidth}%
\vspace{-0.6cm}
\begin{algorithm}[H]
\footnotesize
\captionof{algorithm}{\footnotesize Activation denoising quantization}
\label{alg:wiener-quant}
\begin{algorithmic}[1]
\setlength{\itemsep}{0.3pt}%
\REQUIRE full-precision weights $\{\mathbf{W}_m\}$, bit-width $b$, emp.\ moments $\{\mathbf{H}_m,\mathbf{N}_m,\mathbf{C}_m\}$,  noise scales $\lambda,\rho$, ridge~$\alpha$%
\FORALL{linears $m$ \textbf{in parallel}}
  \STATE $\mathbf{G}_m \gets \mathbf{H}_m+\alpha\mathbf{I}+\lambda^2\mathbf{N}_m+\lambda(\mathbf{C}_m+\mathbf{C}_m^\top)$
  \STATE $\mathbf{F}_m \gets (\mathbf{H}_m+\alpha\mathbf{I}+\lambda\mathbf{C}_m+\rho\mathbf{C}_m^\top+\rho\lambda\mathbf{N}_m)\mathbf{G}_m^{-1}$%
  \STATE $\tilde{\mathbf{W}}_m \gets \mathbf{W}_m\mathbf{F}_m$
  \STATE $\hat{\mathbf{W}}_m \gets \smash[b]{\operatorname{Round}^{\,b}_{\mathbf{G}_m}}(\tilde{\mathbf{W}}_m)$
\ENDFOR%
\RETURN $\{\hat{\mathbf{W}}_m\}$
\end{algorithmic}
\end{algorithm}
\end{minipage}

Parallel denoising can thus be implemented as an efficient quantization-time, per-layer weight pretransform followed by standard rounding, adding no additional inference cost and requiring no sequential computation given an estimate for the upstream noise $\boldsymbol{\eta}$.

\vspace{-0.2cm}
\section{Activation noise model for efficient parallel denoising}
\vspace{-0.1cm}
\label{sec:injection}

Sequential methods instantiate the noise $\boldsymbol{\eta}$ in the denoising objective (\autoref{eq:wiener-objective}) with the true sequential upstream error, which is only available after quantizing all previous layers in sequence. Instead, our key idea is to inject random noise into the input activations of every layer during an additional calibration forward pass through the full-precision model, and to propagate it downstream. The input to each module $m$ is thereby perturbed by a total noise $\boldsymbol{\eta}_m = \boldsymbol{\eta}^{\uparrow}_m + \boldsymbol{\xi}_m$: the upstream noise $\boldsymbol{\eta}^{\uparrow}_m$ propagated from earlier layers, plus a new local noise injection $\boldsymbol{\xi}_m$.

\textbf{Local random noise injection.} We inject independent, zero-mean, per-channel Gaussian noise into the input activations at each linear weight $\mathbf{W}_m$, sampled i.i.d.\ for every calibration token $t$:
{
\setlength{\abovedisplayskip}{3pt}
\setlength{\belowdisplayskip}{3pt}
\begin{equation*}
\boldsymbol{\xi}_{m,t}\sim\mathcal{N}\big(\mathbf{0},\boldsymbol{\Sigma}_m\big),\qquad \boldsymbol{\Sigma}_m=\operatorname{diag}\big(\sigma_{m,1}^2,\dots,\sigma_{m,d}^2\big),\qquad \sigma_{m,c} \;\triangleq\; \gamma\;\max_t\,\big|x_{m,t,c} + \eta^{\uparrow}_{m,t,c}\big|,
\end{equation*}
}%
where $x_{m,t,c}$ denotes channel $c$ of input $\mathbf{x}_{m,t}\in\mathbb{R}^d$. Here, the noise injected into channel $c$ has its own standard deviation $\sigma_{m,c}$ corresponding to the largest absolute value the channel attains across calibration tokens, scaled by a model-wide constant $\gamma\ge0$. High-magnitude outlier channels thus receive proportionally more noise, so the layer is regularized to be robust where its inputs are largest. %

\textbf{Noise propagation.} We propagate the injected noise downstream. Starting without upstream noise ($\boldsymbol{\eta}^{\uparrow}_1=\mathbf{0}$), the total noise $\boldsymbol{\eta}_m = \boldsymbol{\eta}^{\uparrow}_m + \boldsymbol{\xi}_m$ at module $m$ propagates through the model to form the next module's upstream noise $\boldsymbol{\eta}^{\uparrow}_{m+1}$. Early injections thus accumulate through depth, whereas each local noise injection $\boldsymbol{\xi}_m$ is a new, independent sample.

\begin{minipage}[t]{0.41\textwidth}
\textbf{Efficient parallel solution.} Crucially, no separate optimization is needed: one clean and one noise-injected forward pass (both at full-precision) yield the per-weight moments $\mathbf{H}_m,\mathbf{C}_m,\mathbf{N}_m$, which the pretransform absorbs before rounding each weight independently at no additional optimization or inference cost. We summarize this procedure in \Cref{alg:moments-new}. Since the moments come from full-precision forward passes, we can quantize all layers independently in parallel afterwards using \Cref{alg:wiener-quant}.
\end{minipage}%
\hfill
\begin{minipage}[t]{0.57\textwidth}%
\vspace{-0.6cm}
\begin{algorithm}[H]
\caption{Estimating denoising moments}
\label{alg:moments-new}
\small
\begin{algorithmic}[1]
\setlength{\itemsep}{1.2pt}
\REQUIRE $n$ calibration tokens; linear weights $\{\mathbf{W}_m\}$
\STATE $\{\mathbf{x}_{m,t}\} \gets$ per-linear inputs from full-precision forward
\STATE $\{\tilde{\mathbf{x}}_{m,t}\} \gets$ per-linear inputs from full-precision forward\\that locally injects and propagates noise $\boldsymbol{\xi}_{m,t}{\sim}\mathcal{N}(\mathbf{0},\boldsymbol{\Sigma}_m)$
\FORALL{linears $m$ \textbf{in parallel}}
  \STATE $\boldsymbol{\eta}_{m,t} \gets \tilde{\mathbf{x}}_{m,t} - \mathbf{x}_{m,t}$ \quad(calib.\ token $t{=}1{:}n$)
  \STATE $\mathbf{H}_m \gets \tfrac1n\sum_t \mathbf{x}_{m,t}\mathbf{x}_{m,t}^\top$
  \STATE $\mathbf{C}_m \gets \tfrac1n\sum_t \mathbf{x}_{m,t}\boldsymbol{\eta}_{m,t}^\top$
  \STATE $\mathbf{N}_m \gets \tfrac1n\sum_t \boldsymbol{\eta}_{m,t}\boldsymbol{\eta}_{m,t}^\top$
\ENDFOR
\RETURN moments $\{\mathbf{H}_m,\mathbf{N}_m,\mathbf{C}_m\}$
\end{algorithmic}
\end{algorithm}
\end{minipage}%

\subsection{Random noise injection as robustness regularization}
\vspace{-0.1cm}
\label{sec:random-noise}

We show that denoising against random noise regularizes weights to be more robust to input perturbations.
It induces two effects: \emph{(i)~local column-wise regularization} from injected noise $\boldsymbol{\xi}$, and \emph{(ii)~depth-compounding regularization} from propagated noise~$\boldsymbol{\eta}^\uparrow$, which controls how much error accumulates across layers. The following theorem formalizes effect~(i): the local injection combines with the proximity ridge into weighted $L_2$ penalties toward down-scaled weight~columns:
\begin{theorem}[Local noise injection acts as a magnitude-reducing ridge]
\label{thm:inject-lipschitz}
Let $\boldsymbol{\eta}=\boldsymbol{\eta}^{\uparrow}+\boldsymbol{\xi}$, where $\boldsymbol{\xi}\sim\mathcal{N}(\mathbf{0},\boldsymbol{\Sigma})$ is independent of $(\mathbf{x},\boldsymbol{\eta}^{\uparrow})$ and $\boldsymbol{\Sigma}=\operatorname{diag}(\sigma_1^2,\dots,\sigma_d^2)$. Up to a constant independent of the weights, the objective in \autoref{eq:wiener-objective} equals its reconstruction term with $\boldsymbol{\eta}=\boldsymbol{\eta}^{\uparrow}$ plus the modified ridge regularization
{
\setlength{\abovedisplayskip}{0pt}
\setlength{\belowdisplayskip}{1pt}
\begin{equation*}
\sum_c\big(\alpha+\lambda^2\sigma_{c}^2\big)\,\big\|\hat{\mathbf{W}}_{:,c}-s_{c}\,\mathbf{W}_{:,c}\big\|_2^2, \qquad s_{c}=\frac{\alpha+\rho\lambda\sigma_{c}^2}{\alpha+\lambda^2\sigma_{c}^2}.
\end{equation*}
}
For $\rho<\lambda$ and $\sigma_c>0$, the penalty favors a down-scaled target $s_c\mathbf{W}_{:,c}$ for column $c$.
\end{theorem}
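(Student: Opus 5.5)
The plan is to expand the reconstruction term of \autoref{eq:wiener-objective} directly, substituting $\boldsymbol{\eta}=\boldsymbol{\eta}^{\uparrow}+\boldsymbol{\xi}$, and to separate the contributions of $\boldsymbol{\xi}$ using its independence and zero mean. Writing the residual as
\[
\hat{\mathbf{W}}(\mathbf{x}+\lambda\boldsymbol{\eta})-\mathbf{W}(\mathbf{x}+\rho\boldsymbol{\eta})
=\underbrace{\hat{\mathbf{W}}(\mathbf{x}+\lambda\boldsymbol{\eta}^{\uparrow})-\mathbf{W}(\mathbf{x}+\rho\boldsymbol{\eta}^{\uparrow})}_{\mathbf{r}}
+(\lambda\hat{\mathbf{W}}-\rho\mathbf{W})\boldsymbol{\xi},
\]
we expand $\mathbb{E}\|\mathbf{r}+(\lambda\hat{\mathbf{W}}-\rho\mathbf{W})\boldsymbol{\xi}\|^2$. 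The cross term $2\,\mathbb{E}[\mathbf{r}^\top(\lambda\hat{\mathbf{W}}-\rho\mathbf{W})\boldsymbol{\xi}]$ vanishes because $\mathbf{r}$ is a function of $(\mathbf{x},\boldsymbol{\eta}^{\uparrow})$ only, $\boldsymbol{\xi}$ is independent of it, and $\mathbb{E}\boldsymbol{\xi}=\mathbf{0}$. The term $\mathbb{E}\|\mathbf{r}\|^2$ is exactly the reconstruction term with $\boldsymbol{\eta}=\boldsymbol{\eta}^{\uparrow}$. The remaining term is
\[
\mathbb{E}\|(\lambda\hat{\mathbf{W}}-\rho\mathbf{W})\boldsymbol{\xi}\|^2=\operatorname{tr}\big[(\lambda\hat{\mathbf{W}}-\rho\mathbf{W})\boldsymbol{\Sigma}(\lambda\hat{\mathbf{W}}-\rho\mathbf{W})^\top\big]=\sum_c\sigma_c^2\|\lambda\hat{\mathbf{W}}_{:,c}-\rho\mathbf{W}_{:,c}\|_2^2,
\]
where the last equality uses that $\boldsymbol{\Sigma}$ is diagonal. (Equivalently, in the moment language of \autoref{prop:pretransform-exact}, one would have $\mathbf{C}=\mathbf{C}^{\uparrow}$ and $\mathbf{N}=\mathbf{N}^{\uparrow}+\boldsymbol{\Sigma}$.)

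Next, we combine this column-wise noise term with the proximity ridge $\alpha\|\hat{\mathbf{W}}-\mathbf{W}\|_F^2=\sum_c\alpha\|\hat{\mathbf{W}}_{:,c}-\mathbf{W}_{:,c}\|_2^2$. For each column $c$ we have a sum of two isotropic quadratics in the vector $\mathbf{w}=\hat{\mathbf{W}}_{:,c}$:
\[
\alpha\|\mathbf{w}-\mathbf{v}\|^2+\sigma_c^2\|\lambda\mathbf{w}-\rho\mathbf{v}\|^2, \qquad \mathbf{v}=\mathbf{W}_{:,c}.
\]
Completing the square gives $(\alpha+\lambda^2\sigma_c^2)\|\mathbf{w}\|^2-2(\alpha+\rho\lambda\sigma_c^2)\mathbf{w}^\top\mathbf{v}+\text{const}$, which equals $(\alpha+\lambda^2\sigma_c^2)\|\mathbf{w}-s_c\mathbf{v}\|^2$ plus a constant independent of $\hat{\mathbf{W}}$, with $s_c$ exactly as stated. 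Completing the square requires $\alpha+\lambda^2\sigma_c^2>0$; if it is zero, the column term vanishes and the statement holds trivially with any convention for $s_c$. Summing over $c$ and collecting all constants yields the claimed decomposition.

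For the final claim, if $\rho<\lambda$ and $\sigma_c>0$, then $\rho\lambda\sigma_c^2<\lambda^2\sigma_c^2$, so $s_c<1$. This holds whenever $\lambda>0$, which follows from $\rho<\lambda$ with $\rho\ge 0$. Nonnegativity also gives $s_c\ge0$, so the target is a down-scaled copy of $\mathbf{W}_{:,c}$.

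The main care point is the cross-term argument. One must check that nothing else in the objective couples $\boldsymbol{\xi}$ to $\hat{\mathbf{W}}$ beyond the quadratic term, which holds because independence of $\boldsymbol{\xi}$ from $(\mathbf{x},\boldsymbol{\eta}^{\uparrow})$ is assumed outright. The rest is bookkeeping of the constants discarded when completing the square.
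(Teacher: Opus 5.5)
Your proposal is correct and follows the paper's proof essentially step for step: the same split of the residual into $\mathbf{r}+(\lambda\hat{\mathbf{W}}-\rho\mathbf{W})\boldsymbol{\xi}$, the cross term vanishing by independence and zero mean, the reduction via diagonal $\boldsymbol{\Sigma}$ to $\sum_c\sigma_c^2\|\lambda\hat{\mathbf{W}}_{:,c}-\rho\mathbf{W}_{:,c}\|_2^2$, and column-wise completion of the square together with the ridge. The differences are only cosmetic: the paper computes the discarded constant explicitly as $\alpha\sigma_c^2(\lambda-\rho)^2\|\mathbf{W}_{:,c}\|_2^2/(\alpha+\lambda^2\sigma_c^2)$ and assumes $\alpha+\lambda^2\sigma_c^2>0$, whereas you leave the constant implicit and treat the degenerate case separately (there the column term is constant in $\hat{\mathbf{W}}$ rather than zero, which still suffices).
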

\vspace{-0.15cm}
{
\textit{Proof sketch.}  Splitting the $\boldsymbol{\xi}$-dependent part, the cross term vanishes due to independence, and the quadratic term simplifies to $\sum_c\sigma_{c}^2\|\lambda\tilde{\mathbf{W}}_{:,c}-\rho\mathbf{W}_{:,c}\|_2^2$. Since $\boldsymbol{\Sigma}$ is diagonal, adding the ridge decouples over columns, resulting in the stated~form. Full proof in \Cref{app:proofs-injection}.\hfill\qedsymbol
}%
\vspace{-0.05cm}

In the denoising regime $\rho<\lambda$, this penalty encourages smaller column-wise $L_2$ norms by pulling weights toward down-scaled targets. In \autoref{sec:experiments}, we empirically show that this reduces the matrix Frobenius norm, which upper-bounds the module's spectral norm and hence its Lipschitz constant. The smaller norm reduces average amplification of isotropic input perturbations, making layers more robust to them. This complements rotations, which redistribute magnitude but preserve norm.

{
\vspace{-0.05cm}
Taking both effects (i) and (ii) together, we regularize each layer to be robust to noise: quantization stays parallel, and the robustness dampens how strongly quantization errors compound across layers.%
}%

\vspace{-0.22cm}
\section{Experimental evaluation}
\vspace{-0.18cm}
\label{sec:experiments}

Parallel quantization processes layers independently but ignores upstream quantization errors. Sequential quantization corrects them, but must process layers one after another.
Our method keeps quantization fully parallel: it regularizes each layer to be robust to input noise (\autoref{sec:injection}) via activation denoising (\autoref{sec:wiener}).
In this section we demonstrate the effectiveness of our parallel denoising and show that, across model scales, it recovers a significant fraction of the sequential quality.  
We report additional results and ablations in \Cref{app:additional-results} and describe the full setup in \Cref{app:setup}.

\textbf{Setup.} We quantize Llama-3.2-1B, Llama-3.2-3B, Llama-3-8B, and Llama-2-13B \citep{llama2, llama3}, which are widely used in the quantization literature and span a broad range of scales. We focus on weight quantization, keeping activations and KV cache at $16$ bits, and calibrate on $128$ 2048-token sequences of the WikiText-2 \citep{merity2017pointer} training set.

\textbf{Solvers.} Our approach is agnostic to the rounding solver, and we evaluate it with scalar and vector quantization via GPTQ \citep{frantar2023optq} at $3$~bits and QuIP\# \citep{tseng2024quipsharp} (without fine-tuning) at $2$~bits. For GPTQ, we rotate weights before quantization, using the random Hadamard rotations $R_1$, $R_2$, $R_4$ of QuaRot \citep{ashkboos2024quarot}, optionally replacing $R_1$, $R_2$ with learned rotations as proposed in SpinQuant \citep{liu2025spinquant}. QuIP\# applies its own randomized Hadamard incoherence transform. The rotations $R_1$ and $R_2$ are fused into the model's weights and modify the forward pass during moment estimation, whereas QuIP\#'s  rotations are applied during quantization, leave the forward pass unchanged during moment estimation, and modify it instead at~inference.

\textbf{Baselines.} For each solver, we compare against the two ends of the parallel--sequential spectrum: parallel quantization on clean inputs, and sequential quantization on outputs of the already quantized layers, with either the symmetric correction of GPTQ \citep{frantar2023optq} or the asymmetric correction of GPTAQ \citep{li2025gptaq}. The asymmetric correction is the strongest sequential baseline.

\textbf{Metrics.} For each method, we select the best configuration by WikiText-2 \citep{merity2017pointer} test perplexity and report perplexity averaged over WikiText-2, C4 \citep{raffel2020exploring}, and FineWeb \citep{penedo2024fineweb} test (i.e.\ not calibration) sequences. Excluding WikiText-2 from the reported average yields the same conclusions (\cref{app:generalization}). We also report zero-shot accuracy averaged over eight commonsense tasks \citep{liu2025spinquant}, and the fraction of the gap in avg.\ perplexity from $0\%$ (parallel) to $100\%$ (sequential asymmetric) for each solver separately.

\begin{table}[t]
\centering
\caption{\textbf{Parallel denoising closes a large part of the parallel--sequential gap for both scalar and vector quantization.} $3$-bit GPTQ and $2$-bit QuIP\# on Llama-3.2-1B/3B, Llama-3-8B, and Llama-2-13B. Acc: zero-shot accuracy; Avg: average perplexity; Gap: fraction of the perplexity gap closed, from $0\%$ (parallel) to $100\%$ (sequential asym.\ correction). On Llama-2-13B at $2$ bits, the two methods nearly coincide, so we only indicate whether a method is below the sequential (${>}100\%$) or above the parallel (${<}0\%$). Per-dataset and learned-rotation results in \cref{app:additional-results}.}
\label{tab:main}
\vspace{-0.2cm}
{\scriptsize\setlength{\tabcolsep}{2.2pt}\begin{tabular}{@{}lll ccc ccc ccc ccc@{}}
\toprule
& & Quantization objective & \multicolumn{3}{c}{Llama\,1B} & \multicolumn{3}{c}{Llama\,3B} & \multicolumn{3}{c}{Llama\,8B} & \multicolumn{3}{c}{Llama\,13B} \\
\cmidrule(lr){4-6}\cmidrule(lr){7-9}\cmidrule(lr){10-12}\cmidrule(lr){13-15}
& & & Acc\,$\uparrow$ & Avg\,$\downarrow$ & Gap\,$\uparrow$ & Acc\,$\uparrow$ & Avg\,$\downarrow$ & Gap\,$\uparrow$ & Acc\,$\uparrow$ & Avg\,$\downarrow$ & Gap\,$\uparrow$ & Acc\,$\uparrow$ & Avg\,$\downarrow$ & Gap\,$\uparrow$ \\
\midrule
FP16 & & & 54.8 & 11.33 & -- & 62.6 & 9.17 & -- & 67.1 & 7.57 & -- & 66.5 & 5.73 & -- \\
\midrule
\multirow{4}{*}{\rotatebox[origin=c]{90}{\scriptsize \shortstack{\textbf{3-bit}\\\textbf{GPTQ}}}} & \multirow{2}{*}{\rotatebox[origin=c]{90}{\scriptsize Par.}} & symmetric correction & 49.4 & 17.07 & $0\%$ & 58.5 & 12.11 & $0\%$ & 63.9 & 9.94 & $0\%$ & 64.2 & 6.40 & $0\%$ \\
 & & Denoising reg.\ (\textbf{ours}) & 49.3 & 16.51 & $\mathbf{31\%}$ & 57.1 & 11.97 & $\mathbf{21\%}$ & 64.9 & 9.81 & $\mathbf{30\%}$ & 64.5 & 6.37 & $\mathbf{18\%}$ \\
\cmidrule{2-15}
 & \multirow{2}{*}{\rotatebox[origin=c]{90}{\scriptsize Seq.}} & symmetric correction & 49.8 & 16.53 & $31\%$ & 59.0 & 11.99 & $17\%$ & 63.5 & 9.81 & $29\%$ & 63.7 & 6.38 & $13\%$ \\
 & & asymmetric correction & 50.3 & 15.29 & $100\%$ & 58.4 & 11.41 & $100\%$ & 63.8 & 9.51 & $100\%$ & 64.5 & 6.21 & $100\%$ \\
\midrule[\heavyrulewidth]
\multirow{4}{*}{\rotatebox[origin=c]{90}{\scriptsize \shortstack{\textbf{2-bit}\\\textbf{QuIP\#}}}} & \multirow{2}{*}{\rotatebox[origin=c]{90}{\scriptsize Par.}} & symmetric correction & 44.3 & 27.05 & $0\%$ & 52.7 & 16.62 & $0\%$ & 57.7 & 13.76 & $0\%$ & 61.0 & 7.28 & $0\%$ \\
 & & Denoising reg.\ (\textbf{ours}) & 45.1 & 25.51 & $\mathbf{38\%}$ & 53.9 & 15.57 & $\mathbf{86\%}$ & 58.8 & 12.70 & $\mathbf{119\%}$ & 61.0 & 7.19 & $\mathbf{{>}100\%}$ \\
\cmidrule{2-15}
 & \multirow{2}{*}{\rotatebox[origin=c]{90}{\scriptsize Seq.}} & symmetric correction & 43.8 & 26.36 & $17\%$ & 52.6 & 16.64 & $-1\%$ & 58.0 & 13.94 & $-20\%$ & 60.6 & 7.48 & ${<}0\%$ \\
 & & asymmetric correction & 45.1 & 23.04 & $100\%$ & 51.3 & 15.40 & $100\%$ & 56.2 & 12.87 & $100\%$ & 59.3 & 7.28 & $100\%$ \\
\bottomrule
\end{tabular}
}
\end{table}%

\vspace{-0.8cm}
\subsection{Activation denoising improves the quality of parallel quantization}
\vspace{-0.1cm}

\Cref{tab:main} reports our main results. At $3$ bits, parallel denoising closes $18$--$31\%$ of the perplexity gap across all four scales, recovering a significant fraction of the sequential benefit in a single parallel pass. At $2$ bits it closes $38\%$ and $86\%$ of the gap on Llama-3.2-1B and 3B, and on Llama-3-8B and Llama-2-13B it even outperforms the sequential method. We see two plausible reasons why our denoising outperforms the sequential method at 2 bits. First, QuIP\#'s randomized incoherence transform may already spread errors well enough that sequential correction helps less: at $3$B--$13$B it recovers only $0$--$16\%$ of the perplexity lost by parallel QuIP\#, compared to $18$--$28\%$ for GPTQ with QuaRot rotations, whereas our denoising recovers a similar or larger fraction under QuIP\#. Second, the sequential method is the asymmetric correction of \citet{li2025gptaq} applied to QuIP\#. While it is a state-of-the-art baseline, it is not necessarily optimal: it is a special case ($\lambda{=}1$) of our objective with the sequential error (\cref{sec:relation}), which fixes how much upstream error the layer is assumed to receive. Our framework is more flexible and treats $\lambda$ as a free parameter, and tuning it improves on the asymmetric correction even with the sequential error (\cref{app:sweep-range}). Lastly, on zero-shot accuracy, most differences are within about one point, so we view our denoising as comparable to parallel and sequential quantization on accuracy; only at $2$ bits does the sequential asymmetric correction fall behind, which our denoising avoids.

\label{sec:proxy-noise}
\begin{wrapfigure}{r}{0.4\textwidth}
  \vspace{-1.1cm}
  \centering
  %% Creator: Matplotlib, PGF backend
%%
%% To include the figure in your LaTeX document, write
%%   \input{<filename>.pgf}
%%
%% Make sure the required packages are loaded in your preamble
%%   \usepackage{pgf}
%%
%% Also ensure that all the required font packages are loaded; for instance,
%% the lmodern package is sometimes necessary when using math font.
%%   \usepackage{lmodern}
%%
%% Figures using additional raster images can only be included by \input if
%% they are in the same directory as the main LaTeX file. For loading figures
%% from other directories you can use the `import` package
%%   \usepackage{import}
%%
%% and then include the figures with
%%   \import{<path to file>}{<filename>.pgf}
%%
%% Matplotlib used the following preamble
%%   
%%   \usepackage[utf8]{inputenc}
%%   \usepackage[T1]{fontenc}
%%   \usepackage{times}
%%   \usepackage{amsmath}
%%   \newcommand*{\mat}[1]{\boldsymbol{#1}}
%%   
%%   \makeatletter\@ifpackageloaded{underscore}{}{\usepackage[strings]{underscore}}\makeatother
%%
\begingroup%
\makeatletter%
\begin{pgfpicture}%
\pgfpathrectangle{\pgfpointorigin}{\pgfqpoint{2.075550in}{0.500000in}}%
\pgfusepath{use as bounding box, clip}%
\begin{pgfscope}%
\pgfsetbuttcap%
\pgfsetmiterjoin%
\definecolor{currentfill}{rgb}{1.000000,1.000000,1.000000}%
\pgfsetfillcolor{currentfill}%
\pgfsetlinewidth{0.000000pt}%
\definecolor{currentstroke}{rgb}{1.000000,1.000000,1.000000}%
\pgfsetstrokecolor{currentstroke}%
\pgfsetdash{}{0pt}%
\pgfpathmoveto{\pgfqpoint{0.000000in}{0.000000in}}%
\pgfpathlineto{\pgfqpoint{2.075550in}{0.000000in}}%
\pgfpathlineto{\pgfqpoint{2.075550in}{0.500000in}}%
\pgfpathlineto{\pgfqpoint{0.000000in}{0.500000in}}%
\pgfpathlineto{\pgfqpoint{0.000000in}{0.000000in}}%
\pgfpathclose%
\pgfusepath{fill}%
\end{pgfscope}%
\begin{pgfscope}%
\pgfpathrectangle{\pgfqpoint{0.000000in}{0.000000in}}{\pgfqpoint{2.075550in}{0.500000in}}%
\pgfusepath{clip}%
\pgfsetbuttcap%
\pgfsetroundjoin%
\definecolor{currentfill}{rgb}{0.121569,0.466667,0.705882}%
\pgfsetfillcolor{currentfill}%
\pgfsetfillopacity{0.000000}%
\pgfsetlinewidth{1.003750pt}%
\definecolor{currentstroke}{rgb}{0.121569,0.466667,0.705882}%
\pgfsetstrokecolor{currentstroke}%
\pgfsetstrokeopacity{0.000000}%
\pgfsetdash{}{0pt}%
\pgfsys@defobject{currentmarker}{\pgfqpoint{-0.006944in}{-0.006944in}}{\pgfqpoint{0.006944in}{0.006944in}}{%
\pgfpathmoveto{\pgfqpoint{0.000000in}{-0.006944in}}%
\pgfpathcurveto{\pgfqpoint{0.001842in}{-0.006944in}}{\pgfqpoint{0.003608in}{-0.006213in}}{\pgfqpoint{0.004910in}{-0.004910in}}%
\pgfpathcurveto{\pgfqpoint{0.006213in}{-0.003608in}}{\pgfqpoint{0.006944in}{-0.001842in}}{\pgfqpoint{0.006944in}{0.000000in}}%
\pgfpathcurveto{\pgfqpoint{0.006944in}{0.001842in}}{\pgfqpoint{0.006213in}{0.003608in}}{\pgfqpoint{0.004910in}{0.004910in}}%
\pgfpathcurveto{\pgfqpoint{0.003608in}{0.006213in}}{\pgfqpoint{0.001842in}{0.006944in}}{\pgfqpoint{0.000000in}{0.006944in}}%
\pgfpathcurveto{\pgfqpoint{-0.001842in}{0.006944in}}{\pgfqpoint{-0.003608in}{0.006213in}}{\pgfqpoint{-0.004910in}{0.004910in}}%
\pgfpathcurveto{\pgfqpoint{-0.006213in}{0.003608in}}{\pgfqpoint{-0.006944in}{0.001842in}}{\pgfqpoint{-0.006944in}{0.000000in}}%
\pgfpathcurveto{\pgfqpoint{-0.006944in}{-0.001842in}}{\pgfqpoint{-0.006213in}{-0.003608in}}{\pgfqpoint{-0.004910in}{-0.004910in}}%
\pgfpathcurveto{\pgfqpoint{-0.003608in}{-0.006213in}}{\pgfqpoint{-0.001842in}{-0.006944in}}{\pgfqpoint{0.000000in}{-0.006944in}}%
\pgfpathlineto{\pgfqpoint{0.000000in}{-0.006944in}}%
\pgfpathclose%
\pgfusepath{stroke,fill}%
}%
\begin{pgfscope}%
\pgfsys@transformshift{0.000000in}{0.250000in}%
\pgfsys@useobject{currentmarker}{}%
\end{pgfscope}%
\begin{pgfscope}%
\pgfsys@transformshift{2.075550in}{0.250000in}%
\pgfsys@useobject{currentmarker}{}%
\end{pgfscope}%
\end{pgfscope}%
\begin{pgfscope}%
\pgfsetbuttcap%
\pgfsetmiterjoin%
\definecolor{currentfill}{rgb}{0.287302,0.757178,0.678992}%
\pgfsetfillcolor{currentfill}%
\pgfsetlinewidth{1.003750pt}%
\definecolor{currentstroke}{rgb}{0.287302,0.757178,0.678992}%
\pgfsetstrokecolor{currentstroke}%
\pgfsetdash{}{0pt}%
\pgfpathmoveto{\pgfqpoint{0.496151in}{0.339561in}}%
\pgfpathlineto{\pgfqpoint{0.596151in}{0.339561in}}%
\pgfpathlineto{\pgfqpoint{0.596151in}{0.397895in}}%
\pgfpathlineto{\pgfqpoint{0.496151in}{0.397895in}}%
\pgfpathlineto{\pgfqpoint{0.496151in}{0.339561in}}%
\pgfpathclose%
\pgfusepath{stroke,fill}%
\end{pgfscope}%
\begin{pgfscope}%
\definecolor{textcolor}{rgb}{0.150000,0.150000,0.150000}%
\pgfsetstrokecolor{textcolor}%
\pgfsetfillcolor{textcolor}%
\pgftext[x=0.629484in,y=0.339561in,left,base]{\color{textcolor}\rmfamily\fontsize{6.000000}{7.200000}\selectfont Parallel quantization (QuIP\#, 8 GPUs)}%
\end{pgfscope}%
\begin{pgfscope}%
\pgfsetbuttcap%
\pgfsetmiterjoin%
\definecolor{currentfill}{rgb}{0.207229,0.477632,0.637116}%
\pgfsetfillcolor{currentfill}%
\pgfsetlinewidth{1.003750pt}%
\definecolor{currentstroke}{rgb}{0.207229,0.477632,0.637116}%
\pgfsetstrokecolor{currentstroke}%
\pgfsetdash{}{0pt}%
\pgfpathmoveto{\pgfqpoint{0.496151in}{0.229854in}}%
\pgfpathlineto{\pgfqpoint{0.596151in}{0.229854in}}%
\pgfpathlineto{\pgfqpoint{0.596151in}{0.288187in}}%
\pgfpathlineto{\pgfqpoint{0.496151in}{0.288187in}}%
\pgfpathlineto{\pgfqpoint{0.496151in}{0.229854in}}%
\pgfpathclose%
\pgfusepath{stroke,fill}%
\end{pgfscope}%
\begin{pgfscope}%
\definecolor{textcolor}{rgb}{0.150000,0.150000,0.150000}%
\pgfsetstrokecolor{textcolor}%
\pgfsetfillcolor{textcolor}%
\pgftext[x=0.629484in,y=0.229854in,left,base]{\color{textcolor}\rmfamily\fontsize{6.000000}{7.200000}\selectfont \textbf{Our parallel denoising} (QuIP\#, 8 GPUs)}%
\end{pgfscope}%
\begin{pgfscope}%
\pgfsetbuttcap%
\pgfsetmiterjoin%
\definecolor{currentfill}{rgb}{0.242862,0.203456,0.412848}%
\pgfsetfillcolor{currentfill}%
\pgfsetlinewidth{1.003750pt}%
\definecolor{currentstroke}{rgb}{0.242862,0.203456,0.412848}%
\pgfsetstrokecolor{currentstroke}%
\pgfsetdash{}{0pt}%
\pgfpathmoveto{\pgfqpoint{0.496151in}{0.120146in}}%
\pgfpathlineto{\pgfqpoint{0.596151in}{0.120146in}}%
\pgfpathlineto{\pgfqpoint{0.596151in}{0.178480in}}%
\pgfpathlineto{\pgfqpoint{0.496151in}{0.178480in}}%
\pgfpathlineto{\pgfqpoint{0.496151in}{0.120146in}}%
\pgfpathclose%
\pgfusepath{stroke,fill}%
\end{pgfscope}%
\begin{pgfscope}%
\definecolor{textcolor}{rgb}{0.150000,0.150000,0.150000}%
\pgfsetstrokecolor{textcolor}%
\pgfsetfillcolor{textcolor}%
\pgftext[x=0.629484in,y=0.120146in,left,base]{\color{textcolor}\rmfamily\fontsize{6.000000}{7.200000}\selectfont Sequential asymmetric (QuIP\#, 1 GPU)}%
\end{pgfscope}%
\end{pgfpicture}%
\makeatother%
\endgroup%
\\[-14pt]
  \input{figures/15_quant_cost/quant_cost.pgf}
  \vspace{-0.35cm}
  \caption{Projected multi-GPU quantization time (A100, QuIP\#).}
  \vspace{-0.35cm}
  \label{fig:quant-cost}
\end{wrapfigure}
Our denoising achieves the quality improvements while remaining fully parallel, which is significantly more efficient at scale (\autoref{fig:quant-cost}). Beyond the clean calibration forward pass that every method requires, our denoising needs only one additional forward pass on noised inputs. This noisy pass is slightly slower than the clean one (about $1.3\times$), since it additionally samples noise and accumulates noise and cross-covariance statistics. Yet for QuIP\#, the cost of both forward passes remains negligible compared to the rounding that solves the underlying weight-optimization problem. On Llama-2-13B the two passes take about $96$ seconds combined, under $1\%$ of the quantization time. 
Given the collected statistics, each layer's pretransform and rounding are independent and can be distributed across GPUs. Using measured single-GPU times for QuIP\#, we project a $7.1$--$7.5\times$ quantization time reduction on eight A100s relative to the single-GPU sequential baseline, assuming balanced work allocation and negligible communication overhead (cf.~\Cref{app:runtimes}).

\vspace{-0.15cm}
\subsection{Parallel denoising composes effectively with orthogonal LLM rotations}
\vspace{-0.05cm}

\textbf{Robustness regularization complements rotations by lowering the weight norm.}
\label{sec:wiener-shrink}
Orthogonal rotations reduce weight outliers by redistributing magnitude across channels, but preserve the Frobenius norm, $\|\mathbf{W}\mathbf{R}\|_F=\|\mathbf{W}\|_F$, and the singular values of $\mathbf{W}$. In contrast, by \autoref{thm:inject-lipschitz}, local noise injection and the proximity ridge induce weighted column-wise $L_2$ penalties toward down-scaled targets. The resulting filter $\mathbf{F}$ changes each layer's mapping and can reduce $\|\mathbf{W}\mathbf{F}\|_F$. We demonstrate this on the whole-matrix Frobenius norm in \autoref{fig:wiener-shrink} (left), which plots the Frobenius-norm ratio $\|\mathbf{W}\mathbf{F}\|_F/\|\mathbf{W}\|_F$ across depth and module type for Llama-3.2-1B, comparing denoised and original weights before rounding. The ratio is below $1$ for all $112$ linears, with the strongest reduction in the attention output projection. This verifies that denoising reduces a quantity that orthogonal rotations cannot change, rather than merely redistributing weight magnitude. The following ablation evaluates the quantization improvements when both are combined.

\begin{wrapfigure}[22]{r}{0.5\textwidth}
  \vspace{-0.6cm}
  \centering
  %% Creator: Matplotlib, PGF backend
%%
%% To include the figure in your LaTeX document, write
%%   \input{<filename>.pgf}
%%
%% Make sure the required packages are loaded in your preamble
%%   \usepackage{pgf}
%%
%% Also ensure that all the required font packages are loaded; for instance,
%% the lmodern package is sometimes necessary when using math font.
%%   \usepackage{lmodern}
%%
%% Figures using additional raster images can only be included by \input if
%% they are in the same directory as the main LaTeX file. For loading figures
%% from other directories you can use the `import` package
%%   \usepackage{import}
%%
%% and then include the figures with
%%   \import{<path to file>}{<filename>.pgf}
%%
%% Matplotlib used the following preamble
%%   
%%   \usepackage[utf8]{inputenc}
%%   \usepackage[T1]{fontenc}
%%   \usepackage{times}
%%   \usepackage{amsmath}
%%   \newcommand*{\mat}[1]{\boldsymbol{#1}}
%%   
%%   \makeatletter\@ifpackageloaded{underscore}{}{\usepackage[strings]{underscore}}\makeatother
%%
\begingroup%
\makeatletter%
\begin{pgfpicture}%
\pgfpathrectangle{\pgfpointorigin}{\pgfqpoint{2.784378in}{0.479123in}}%
\pgfusepath{use as bounding box, clip}%
\begin{pgfscope}%
\pgfsetbuttcap%
\pgfsetmiterjoin%
\definecolor{currentfill}{rgb}{1.000000,1.000000,1.000000}%
\pgfsetfillcolor{currentfill}%
\pgfsetlinewidth{0.000000pt}%
\definecolor{currentstroke}{rgb}{1.000000,1.000000,1.000000}%
\pgfsetstrokecolor{currentstroke}%
\pgfsetdash{}{0pt}%
\pgfpathmoveto{\pgfqpoint{0.000000in}{0.000000in}}%
\pgfpathlineto{\pgfqpoint{2.784378in}{0.000000in}}%
\pgfpathlineto{\pgfqpoint{2.784378in}{0.479123in}}%
\pgfpathlineto{\pgfqpoint{0.000000in}{0.479123in}}%
\pgfpathlineto{\pgfqpoint{0.000000in}{0.000000in}}%
\pgfpathclose%
\pgfusepath{fill}%
\end{pgfscope}%
\begin{pgfscope}%
\pgfpathrectangle{\pgfqpoint{0.050000in}{0.104561in}}{\pgfqpoint{2.684378in}{0.270000in}}%
\pgfusepath{clip}%
\pgfsetbuttcap%
\pgfsetroundjoin%
\definecolor{currentfill}{rgb}{0.121569,0.466667,0.705882}%
\pgfsetfillcolor{currentfill}%
\pgfsetfillopacity{0.000000}%
\pgfsetlinewidth{1.003750pt}%
\definecolor{currentstroke}{rgb}{0.121569,0.466667,0.705882}%
\pgfsetstrokecolor{currentstroke}%
\pgfsetstrokeopacity{0.000000}%
\pgfsetdash{}{0pt}%
\pgfsys@defobject{currentmarker}{\pgfqpoint{-0.006944in}{-0.006944in}}{\pgfqpoint{0.006944in}{0.006944in}}{%
\pgfpathmoveto{\pgfqpoint{0.000000in}{-0.006944in}}%
\pgfpathcurveto{\pgfqpoint{0.001842in}{-0.006944in}}{\pgfqpoint{0.003608in}{-0.006213in}}{\pgfqpoint{0.004910in}{-0.004910in}}%
\pgfpathcurveto{\pgfqpoint{0.006213in}{-0.003608in}}{\pgfqpoint{0.006944in}{-0.001842in}}{\pgfqpoint{0.006944in}{0.000000in}}%
\pgfpathcurveto{\pgfqpoint{0.006944in}{0.001842in}}{\pgfqpoint{0.006213in}{0.003608in}}{\pgfqpoint{0.004910in}{0.004910in}}%
\pgfpathcurveto{\pgfqpoint{0.003608in}{0.006213in}}{\pgfqpoint{0.001842in}{0.006944in}}{\pgfqpoint{0.000000in}{0.006944in}}%
\pgfpathcurveto{\pgfqpoint{-0.001842in}{0.006944in}}{\pgfqpoint{-0.003608in}{0.006213in}}{\pgfqpoint{-0.004910in}{0.004910in}}%
\pgfpathcurveto{\pgfqpoint{-0.006213in}{0.003608in}}{\pgfqpoint{-0.006944in}{0.001842in}}{\pgfqpoint{-0.006944in}{0.000000in}}%
\pgfpathcurveto{\pgfqpoint{-0.006944in}{-0.001842in}}{\pgfqpoint{-0.006213in}{-0.003608in}}{\pgfqpoint{-0.004910in}{-0.004910in}}%
\pgfpathcurveto{\pgfqpoint{-0.003608in}{-0.006213in}}{\pgfqpoint{-0.001842in}{-0.006944in}}{\pgfqpoint{0.000000in}{-0.006944in}}%
\pgfpathlineto{\pgfqpoint{0.000000in}{-0.006944in}}%
\pgfpathclose%
\pgfusepath{stroke,fill}%
}%
\begin{pgfscope}%
\pgfsys@transformshift{0.050000in}{0.239561in}%
\pgfsys@useobject{currentmarker}{}%
\end{pgfscope}%
\begin{pgfscope}%
\pgfsys@transformshift{2.734378in}{0.239561in}%
\pgfsys@useobject{currentmarker}{}%
\end{pgfscope}%
\end{pgfscope}%
\begin{pgfscope}%
\pgfsetbuttcap%
\pgfsetmiterjoin%
\definecolor{currentfill}{rgb}{1.000000,1.000000,1.000000}%
\pgfsetfillcolor{currentfill}%
\pgfsetlinewidth{1.003750pt}%
\definecolor{currentstroke}{rgb}{0.600000,0.600000,0.600000}%
\pgfsetstrokecolor{currentstroke}%
\pgfsetdash{}{0pt}%
\pgfpathmoveto{\pgfqpoint{0.527027in}{0.337456in}}%
\pgfpathlineto{\pgfqpoint{0.693694in}{0.337456in}}%
\pgfpathlineto{\pgfqpoint{0.693694in}{0.395790in}}%
\pgfpathlineto{\pgfqpoint{0.527027in}{0.395790in}}%
\pgfpathlineto{\pgfqpoint{0.527027in}{0.337456in}}%
\pgfpathclose%
\pgfusepath{stroke,fill}%
\end{pgfscope}%
\begin{pgfscope}%
\definecolor{textcolor}{rgb}{0.150000,0.150000,0.150000}%
\pgfsetstrokecolor{textcolor}%
\pgfsetfillcolor{textcolor}%
\pgftext[x=0.760361in,y=0.337456in,left,base]{\color{textcolor}\rmfamily\fontsize{6.000000}{7.200000}\selectfont no noise injection, random rotation (\(\displaystyle \star\): learned rotation)}%
\end{pgfscope}%
\begin{pgfscope}%
\pgfsetbuttcap%
\pgfsetmiterjoin%
\definecolor{currentfill}{rgb}{0.930000,0.930000,0.930000}%
\pgfsetfillcolor{currentfill}%
\pgfsetlinewidth{1.003750pt}%
\definecolor{currentstroke}{rgb}{0.600000,0.600000,0.600000}%
\pgfsetstrokecolor{currentstroke}%
\pgfsetdash{}{0pt}%
\pgfpathmoveto{\pgfqpoint{0.527027in}{0.219415in}}%
\pgfpathlineto{\pgfqpoint{0.693694in}{0.219415in}}%
\pgfpathlineto{\pgfqpoint{0.693694in}{0.277749in}}%
\pgfpathlineto{\pgfqpoint{0.527027in}{0.277749in}}%
\pgfpathlineto{\pgfqpoint{0.527027in}{0.219415in}}%
\pgfpathclose%
\pgfusepath{stroke,fill}%
\end{pgfscope}%
\begin{pgfscope}%
\definecolor{textcolor}{rgb}{0.150000,0.150000,0.150000}%
\pgfsetstrokecolor{textcolor}%
\pgfsetfillcolor{textcolor}%
\pgftext[x=0.760361in,y=0.219415in,left,base]{\color{textcolor}\rmfamily\fontsize{6.000000}{7.200000}\selectfont random noise injection, random Hadamard rotation}%
\end{pgfscope}%
\begin{pgfscope}%
\pgfsetbuttcap%
\pgfsetmiterjoin%
\definecolor{currentfill}{rgb}{0.850000,0.850000,0.850000}%
\pgfsetfillcolor{currentfill}%
\pgfsetlinewidth{1.003750pt}%
\definecolor{currentstroke}{rgb}{0.600000,0.600000,0.600000}%
\pgfsetstrokecolor{currentstroke}%
\pgfsetdash{}{0pt}%
\pgfpathmoveto{\pgfqpoint{0.527027in}{0.101374in}}%
\pgfpathlineto{\pgfqpoint{0.693694in}{0.101374in}}%
\pgfpathlineto{\pgfqpoint{0.693694in}{0.159708in}}%
\pgfpathlineto{\pgfqpoint{0.527027in}{0.159708in}}%
\pgfpathlineto{\pgfqpoint{0.527027in}{0.101374in}}%
\pgfpathclose%
\pgfusepath{stroke,fill}%
\end{pgfscope}%
\begin{pgfscope}%
\definecolor{textcolor}{rgb}{0.150000,0.150000,0.150000}%
\pgfsetstrokecolor{textcolor}%
\pgfsetfillcolor{textcolor}%
\pgftext[x=0.760361in,y=0.101374in,left,base]{\color{textcolor}\rmfamily\fontsize{6.000000}{7.200000}\selectfont random noise injection, fixed learned rotation}%
\end{pgfscope}%
\end{pgfpicture}%
\makeatother%
\endgroup%
\\[-5pt]
  \input{figures/14_randomness_ablation/randomness_ablation.pgf}
  \vspace{-0.65cm}
  \caption{\textbf{Random rotation and noise ablation} (W3 Llama-3.2-1B). Avg.\ perplexity over $100$ draws: parallel GPTQ and sequential GPTAQ resample the Hadamard rotation (stars: learned-rotation), while for our denoising we resample noise either for random or learned rotations (gray backgrounds). Notably, our parallel denoising significantly reduces the parallel--sequential gap and is insensitive to the random noise draw.}
  \label{fig:randomness}
\end{wrapfigure}
\textbf{Parallel denoising significantly improves under model rotations.}
We perform an ablation study to understand the role of randomization in our method and the baselines. We consider two sources of randomization: the random Hadamard rotation used to make weights easier to quantize, and the injected Gaussian noise we use for regularization. We quantize Llama-3.2-1B at 3-bit precision $100$ times for each method, resampling rotations for the baselines and the noise injection for our method (\autoref{fig:randomness}).
We find that rotations matter greatly for parallel GPTQ: its perplexity varies widely across random Hadamard rotations (large standard deviation, $\sigma{=}0.16$) and a learned SpinQuant rotation improves a lot ($16.98{\to}16.41$). At the same time, we observe that sequential GPTAQ is quite robust under random rotations (smaller $\sigma{=}0.05$) and barely improved by learning them ($15.36\to15.28$). In contrast, our parallel denoising composes effectively with rotations, significantly reducing the~gap. 

Together, these results show that rotation and denoising are complementary: rotations reduce outliers while preserving the Frobenius norm, whereas denoising lowers this norm and improves quantization under both random and learned rotations.

\vspace{-0.3cm}
\subsection{Two efficient alternatives to full sequential quantization}
\label{sec:cheap-alternatives}
\vspace{-0.1cm}

Our denoising already recovers much of the gap in a single parallel pass. In this section we explore two approaches to further close the remaining gap without quantizing every layer sequentially.

\textbf{Trading quantization time: a short sequential prefix suffices.} First, we compute sequential GPTAQ for the first $K$ of $L$ blocks and continue quantizing the remaining blocks with parallel-GPTQ (\autoref{fig:splice-gapfrac}, middle). We find that the benefit of sequential quantization is strongly front-loaded across all models: the first quarter of the blocks already closes $40$--$60\%$ of the gap, whereas closing the last few percent requires sequential quantization to span most of the~depth. We see two complementary explanations for this front-loading: First, quantization error compounds through depth, and errors introduced at early blocks propagate through every downstream block, so correcting them early may have a disproportionately large effect. Second, later layers may contribute less to the output, consistent with evidence that language models do not use their depth efficiently and that deeper layers can often be removed with little effect \citep{csordas2025depth,men2025shortgpt}, leaving less error to correct. We leave attempts to disentangle these effects to future work.

\textbf{Trading inference cost: a few $4$-bit modules suffice.} Second, we explore promoting a small prefix of modules to $4$ bits at inference time, while keeping the rest of the network at $3$ bits. We find that the value projections $v_{\text{proj}}$ dominate: promoting only them to $4$ bits in a single parallel pass matches full sequential-GPTAQ quality at an average width of just $3.03$ bits, i.e.\ the whole gap for $0.03$ bits over W3. Because $v_{\text{proj}}$ carries so few parameters under grouped-query attention, this undercuts every finer group size (g$32$ needs a full $3.5$ bits for the same quality) and any other prefix in the comparison. When a small inference-time cost is acceptable, this is a cheap and effective way to close the remaining gap without any sequential quantization.

\begin{figure}[t]
  \vspace{-0.4cm}
  \centering
  \begin{minipage}[b]{0.32\textwidth}
    \centering
    \input{figures/47_wiener_shrink/shrink_small_legend.pgf}\\[-14pt]
    \input{figures/47_wiener_shrink/shrink_small.pgf}
  \end{minipage}\hfill
  \begin{minipage}[b]{0.32\textwidth}
    \centering
    \input{figures/8_splice_scaling/gapfrac_legend.pgf}\\[-14pt]
    \input{figures/8_splice_scaling/gapfrac.pgf}
  \end{minipage}\hfill
  \begin{minipage}[b]{0.32\textwidth}
    \centering
    %% Creator: Matplotlib, PGF backend
%%
%% To include the figure in your LaTeX document, write
%%   \input{<filename>.pgf}
%%
%% Make sure the required packages are loaded in your preamble
%%   \usepackage{pgf}
%%
%% Also ensure that all the required font packages are loaded; for instance,
%% the lmodern package is sometimes necessary when using math font.
%%   \usepackage{lmodern}
%%
%% Figures using additional raster images can only be included by \input if
%% they are in the same directory as the main LaTeX file. For loading figures
%% from other directories you can use the `import` package
%%   \usepackage{import}
%%
%% and then include the figures with
%%   \import{<path to file>}{<filename>.pgf}
%%
%% Matplotlib used the following preamble
%%   
%%   \usepackage[utf8]{inputenc}
%%   \usepackage[T1]{fontenc}
%%   \usepackage{times}
%%   \usepackage{amsmath}
%%   \newcommand*{\mat}[1]{\boldsymbol{#1}}
%%   
%%   \makeatletter\@ifpackageloaded{underscore}{}{\usepackage[strings]{underscore}}\makeatother
%%
\begingroup%
\makeatletter%
\begin{pgfpicture}%
\pgfpathrectangle{\pgfpointorigin}{\pgfqpoint{1.826484in}{0.500000in}}%
\pgfusepath{use as bounding box, clip}%
\begin{pgfscope}%
\pgfsetbuttcap%
\pgfsetmiterjoin%
\definecolor{currentfill}{rgb}{1.000000,1.000000,1.000000}%
\pgfsetfillcolor{currentfill}%
\pgfsetlinewidth{0.000000pt}%
\definecolor{currentstroke}{rgb}{1.000000,1.000000,1.000000}%
\pgfsetstrokecolor{currentstroke}%
\pgfsetdash{}{0pt}%
\pgfpathmoveto{\pgfqpoint{0.000000in}{0.000000in}}%
\pgfpathlineto{\pgfqpoint{1.826484in}{0.000000in}}%
\pgfpathlineto{\pgfqpoint{1.826484in}{0.500000in}}%
\pgfpathlineto{\pgfqpoint{0.000000in}{0.500000in}}%
\pgfpathlineto{\pgfqpoint{0.000000in}{0.000000in}}%
\pgfpathclose%
\pgfusepath{fill}%
\end{pgfscope}%
\begin{pgfscope}%
\pgfsetroundcap%
\pgfsetroundjoin%
\pgfsetlinewidth{0.903375pt}%
\definecolor{currentstroke}{rgb}{0.298039,0.447059,0.690196}%
\pgfsetstrokecolor{currentstroke}%
\pgfsetdash{}{0pt}%
\pgfpathmoveto{\pgfqpoint{0.525470in}{0.367517in}}%
\pgfpathlineto{\pgfqpoint{0.567137in}{0.367517in}}%
\pgfpathlineto{\pgfqpoint{0.608804in}{0.367517in}}%
\pgfusepath{stroke}%
\end{pgfscope}%
\begin{pgfscope}%
\pgfsetbuttcap%
\pgfsetroundjoin%
\definecolor{currentfill}{rgb}{0.298039,0.447059,0.690196}%
\pgfsetfillcolor{currentfill}%
\pgfsetlinewidth{1.003750pt}%
\definecolor{currentstroke}{rgb}{0.298039,0.447059,0.690196}%
\pgfsetstrokecolor{currentstroke}%
\pgfsetdash{}{0pt}%
\pgfsys@defobject{currentmarker}{\pgfqpoint{-0.006944in}{-0.006944in}}{\pgfqpoint{0.006944in}{0.006944in}}{%
\pgfpathmoveto{\pgfqpoint{0.000000in}{-0.006944in}}%
\pgfpathcurveto{\pgfqpoint{0.001842in}{-0.006944in}}{\pgfqpoint{0.003608in}{-0.006213in}}{\pgfqpoint{0.004910in}{-0.004910in}}%
\pgfpathcurveto{\pgfqpoint{0.006213in}{-0.003608in}}{\pgfqpoint{0.006944in}{-0.001842in}}{\pgfqpoint{0.006944in}{0.000000in}}%
\pgfpathcurveto{\pgfqpoint{0.006944in}{0.001842in}}{\pgfqpoint{0.006213in}{0.003608in}}{\pgfqpoint{0.004910in}{0.004910in}}%
\pgfpathcurveto{\pgfqpoint{0.003608in}{0.006213in}}{\pgfqpoint{0.001842in}{0.006944in}}{\pgfqpoint{0.000000in}{0.006944in}}%
\pgfpathcurveto{\pgfqpoint{-0.001842in}{0.006944in}}{\pgfqpoint{-0.003608in}{0.006213in}}{\pgfqpoint{-0.004910in}{0.004910in}}%
\pgfpathcurveto{\pgfqpoint{-0.006213in}{0.003608in}}{\pgfqpoint{-0.006944in}{0.001842in}}{\pgfqpoint{-0.006944in}{0.000000in}}%
\pgfpathcurveto{\pgfqpoint{-0.006944in}{-0.001842in}}{\pgfqpoint{-0.006213in}{-0.003608in}}{\pgfqpoint{-0.004910in}{-0.004910in}}%
\pgfpathcurveto{\pgfqpoint{-0.003608in}{-0.006213in}}{\pgfqpoint{-0.001842in}{-0.006944in}}{\pgfqpoint{0.000000in}{-0.006944in}}%
\pgfpathlineto{\pgfqpoint{0.000000in}{-0.006944in}}%
\pgfpathclose%
\pgfusepath{stroke,fill}%
}%
\begin{pgfscope}%
\pgfsys@transformshift{0.567137in}{0.367517in}%
\pgfsys@useobject{currentmarker}{}%
\end{pgfscope}%
\end{pgfscope}%
\begin{pgfscope}%
\definecolor{textcolor}{rgb}{0.150000,0.150000,0.150000}%
\pgfsetstrokecolor{textcolor}%
\pgfsetfillcolor{textcolor}%
\pgftext[x=0.629637in,y=0.343212in,left,base]{\color{textcolor}\rmfamily\fontsize{5.000000}{6.000000}\selectfont all W4}%
\end{pgfscope}%
\begin{pgfscope}%
\pgfsetroundcap%
\pgfsetroundjoin%
\pgfsetlinewidth{0.903375pt}%
\definecolor{currentstroke}{rgb}{0.866667,0.517647,0.321569}%
\pgfsetstrokecolor{currentstroke}%
\pgfsetdash{}{0pt}%
\pgfpathmoveto{\pgfqpoint{0.872273in}{0.367517in}}%
\pgfpathlineto{\pgfqpoint{0.913940in}{0.367517in}}%
\pgfpathlineto{\pgfqpoint{0.955607in}{0.367517in}}%
\pgfusepath{stroke}%
\end{pgfscope}%
\begin{pgfscope}%
\pgfsetbuttcap%
\pgfsetmiterjoin%
\definecolor{currentfill}{rgb}{0.866667,0.517647,0.321569}%
\pgfsetfillcolor{currentfill}%
\pgfsetlinewidth{1.003750pt}%
\definecolor{currentstroke}{rgb}{0.866667,0.517647,0.321569}%
\pgfsetstrokecolor{currentstroke}%
\pgfsetdash{}{0pt}%
\pgfsys@defobject{currentmarker}{\pgfqpoint{-0.008333in}{-0.008333in}}{\pgfqpoint{0.008333in}{0.008333in}}{%
\pgfpathmoveto{\pgfqpoint{-0.008333in}{-0.008333in}}%
\pgfpathlineto{\pgfqpoint{0.008333in}{-0.008333in}}%
\pgfpathlineto{\pgfqpoint{0.008333in}{0.008333in}}%
\pgfpathlineto{\pgfqpoint{-0.008333in}{0.008333in}}%
\pgfpathlineto{\pgfqpoint{-0.008333in}{-0.008333in}}%
\pgfpathclose%
\pgfusepath{stroke,fill}%
}%
\begin{pgfscope}%
\pgfsys@transformshift{0.913940in}{0.367517in}%
\pgfsys@useobject{currentmarker}{}%
\end{pgfscope}%
\end{pgfscope}%
\begin{pgfscope}%
\definecolor{textcolor}{rgb}{0.150000,0.150000,0.150000}%
\pgfsetstrokecolor{textcolor}%
\pgfsetfillcolor{textcolor}%
\pgftext[x=0.976440in,y=0.343212in,left,base]{\color{textcolor}\rmfamily\fontsize{5.000000}{6.000000}\selectfont MLP W4}%
\end{pgfscope}%
\begin{pgfscope}%
\pgfsetroundcap%
\pgfsetroundjoin%
\pgfsetlinewidth{0.903375pt}%
\definecolor{currentstroke}{rgb}{0.505882,0.447059,0.701961}%
\pgfsetstrokecolor{currentstroke}%
\pgfsetdash{}{0pt}%
\pgfpathmoveto{\pgfqpoint{1.292411in}{0.367517in}}%
\pgfpathlineto{\pgfqpoint{1.334077in}{0.367517in}}%
\pgfpathlineto{\pgfqpoint{1.375744in}{0.367517in}}%
\pgfusepath{stroke}%
\end{pgfscope}%
\begin{pgfscope}%
\pgfsetbuttcap%
\pgfsetmiterjoin%
\definecolor{currentfill}{rgb}{0.505882,0.447059,0.701961}%
\pgfsetfillcolor{currentfill}%
\pgfsetlinewidth{1.003750pt}%
\definecolor{currentstroke}{rgb}{0.505882,0.447059,0.701961}%
\pgfsetstrokecolor{currentstroke}%
\pgfsetdash{}{0pt}%
\pgfsys@defobject{currentmarker}{\pgfqpoint{-0.009722in}{-0.009722in}}{\pgfqpoint{0.009722in}{0.009722in}}{%
\pgfpathmoveto{\pgfqpoint{0.000000in}{0.009722in}}%
\pgfpathlineto{\pgfqpoint{-0.009722in}{-0.009722in}}%
\pgfpathlineto{\pgfqpoint{0.009722in}{-0.009722in}}%
\pgfpathlineto{\pgfqpoint{0.000000in}{0.009722in}}%
\pgfpathclose%
\pgfusepath{stroke,fill}%
}%
\begin{pgfscope}%
\pgfsys@transformshift{1.334077in}{0.367517in}%
\pgfsys@useobject{currentmarker}{}%
\end{pgfscope}%
\end{pgfscope}%
\begin{pgfscope}%
\definecolor{textcolor}{rgb}{0.150000,0.150000,0.150000}%
\pgfsetstrokecolor{textcolor}%
\pgfsetfillcolor{textcolor}%
\pgftext[x=1.396577in,y=0.343212in,left,base]{\color{textcolor}\rmfamily\fontsize{5.000000}{6.000000}\selectfont v\_proj W4}%
\end{pgfscope}%
\begin{pgfscope}%
\pgfsetroundcap%
\pgfsetroundjoin%
\pgfsetlinewidth{0.903375pt}%
\definecolor{currentstroke}{rgb}{0.333333,0.658824,0.407843}%
\pgfsetstrokecolor{currentstroke}%
\pgfsetdash{}{0pt}%
\pgfpathmoveto{\pgfqpoint{0.630264in}{0.250173in}}%
\pgfpathlineto{\pgfqpoint{0.671930in}{0.250173in}}%
\pgfpathlineto{\pgfqpoint{0.713597in}{0.250173in}}%
\pgfusepath{stroke}%
\end{pgfscope}%
\begin{pgfscope}%
\pgfsetbuttcap%
\pgfsetmiterjoin%
\definecolor{currentfill}{rgb}{0.333333,0.658824,0.407843}%
\pgfsetfillcolor{currentfill}%
\pgfsetlinewidth{1.003750pt}%
\definecolor{currentstroke}{rgb}{0.333333,0.658824,0.407843}%
\pgfsetstrokecolor{currentstroke}%
\pgfsetdash{}{0pt}%
\pgfsys@defobject{currentmarker}{\pgfqpoint{-0.023570in}{-0.023570in}}{\pgfqpoint{0.023570in}{0.023570in}}{%
\pgfpathmoveto{\pgfqpoint{-0.000000in}{-0.023570in}}%
\pgfpathlineto{\pgfqpoint{0.023570in}{0.000000in}}%
\pgfpathlineto{\pgfqpoint{-0.000000in}{0.023570in}}%
\pgfpathlineto{\pgfqpoint{-0.023570in}{0.000000in}}%
\pgfpathlineto{\pgfqpoint{-0.000000in}{-0.023570in}}%
\pgfpathclose%
\pgfusepath{stroke,fill}%
}%
\begin{pgfscope}%
\pgfsys@transformshift{0.671930in}{0.250173in}%
\pgfsys@useobject{currentmarker}{}%
\end{pgfscope}%
\end{pgfscope}%
\begin{pgfscope}%
\definecolor{textcolor}{rgb}{0.150000,0.150000,0.150000}%
\pgfsetstrokecolor{textcolor}%
\pgfsetfillcolor{textcolor}%
\pgftext[x=0.734430in,y=0.225867in,left,base]{\color{textcolor}\rmfamily\fontsize{5.000000}{6.000000}\selectfont parallel GPTQ, finer groupsize}%
\end{pgfscope}%
\begin{pgfscope}%
\pgfsetbuttcap%
\pgfsetroundjoin%
\pgfsetlinewidth{1.505625pt}%
\definecolor{currentstroke}{rgb}{0.768627,0.305882,0.321569}%
\pgfsetstrokecolor{currentstroke}%
\pgfsetdash{{5.550000pt}{2.400000pt}}{0.000000pt}%
\pgfpathmoveto{\pgfqpoint{0.630264in}{0.144861in}}%
\pgfpathlineto{\pgfqpoint{0.671930in}{0.144861in}}%
\pgfpathlineto{\pgfqpoint{0.713597in}{0.144861in}}%
\pgfusepath{stroke}%
\end{pgfscope}%
\begin{pgfscope}%
\definecolor{textcolor}{rgb}{0.150000,0.150000,0.150000}%
\pgfsetstrokecolor{textcolor}%
\pgfsetfillcolor{textcolor}%
\pgftext[x=0.734430in,y=0.120556in,left,base]{\color{textcolor}\rmfamily\fontsize{5.000000}{6.000000}\selectfont seq.\ GPTAQ (W3, gs\(\displaystyle {=}{-}1\))}%
\end{pgfscope}%
\end{pgfpicture}%
\makeatother%
\endgroup%
\\[-14pt]
    \input{figures/10_w4prefix_module/w4prefix_zoom_3b.pgf}
  \end{minipage}
  \caption{\textbf{Denoising reduces weight norms, and two alternatives to full sequential quantization.} \emph{Left} (Frobenius-norm reduction on Llama-3.2-1B): ratio $\|\mathbf{W}\mathbf{F}\|_F/\|\mathbf{W}\|_F$ of denoised to original weights before rounding, across depth and module type. The ratio is below $1$ for all $112$ linears, most strongly in the attention output projection. \emph{Middle} (quantization-time trade-off): quantizing the first $K$ blocks sequentially and the remaining $L{-}K$ blocks in parallel, we find the sequential benefit is front-loaded: the first quarter of blocks closes $40$--$60\%$ of the gap, but the last few percent need a prefix spanning most of the depth. \emph{Right} (inference-time trade-off on Llama-3.2-3B): avg.\ perplexity vs.\ avg.\ effective bit-width, promoting a prefix of all/MLP-only/$v_{\text{proj}}$-only modules to $4$ bits (green diamonds: finer group-size; red dashed: seq.\ GPTAQ). We find promoting only $v_{\text{proj}}$ to 4 bits reaches sequential quality while raising the avg.\ bit-width by only ${\approx}0.03$ bits.}
  \label{fig:tradeoffs}
  \label{fig:wiener-shrink}
  \label{fig:splice-gapfrac}
  \label{fig:w4prefix-zoom}
\end{figure}

\vspace{-0.05cm}
\section{Discussion}
\vspace{-0.05cm}
\label{sec:discussion}
\label{sec:limitations}

\textbf{Future work.} We see several promising directions for future work: extending our activation denoising to activation and KV-cache quantization, combining it with stronger rounding schemes \citep{tseng2025modelpreserving,lee2026kronq}, and jointly learning the noise injections and rotations. Recovery fine-tuning \citep{tseng2024quipsharp,malinovskii2024pvtuning} is a further, orthogonal future direction.

\textbf{Closing the remaining gap.}
In a single parallel pass, our denoising already recovers a large fraction of the sequential benefit, though a gap to fully sequential quantization remains. We see the robustness perspective as a promising way to close it further, e.g.\ by using noise models that carry signal-correlated structure. We demonstrate this in \cref{sec:proxy-model}: the propagated error of an already parallel-quantized model provides a signal-correlated noise model, so an existing quantized model can be improved by denoising against its measured error, closing more of the gap without any sequential quantization. Moreover, because all rounding steps stay independent, the efficiency advantage over sequential quantization should grow at larger scale, which we already demonstrate up to $13$B-parameter models. Lastly, where a small trade-off is acceptable, the remaining gap can already be closed with a short sequential prefix or by promoting a few modules to $4$ bits (\cref{sec:cheap-alternatives}). 

\textbf{Robustness regularization.}
Our results suggest that much of the benefit of sequential quantization does not require the exact sequential upstream error: making each layer robust to propagated perturbations already recovers a significant part of the gap. While our regularization introduces additional hyperparameters that require tuning (noise scales~$\lambda,\rho$; injection magnitude~$\gamma$), we believe this can be overcome, since their optimal values are similar across models (\autoref{app:additional-results}). More broadly, our regularization is similar to the equivalence between training with input noise and Tikhonov regularization \citep{bishop1995training}, here it arises at quantization time. This corresponds to average-case robustness against random perturbations; its worst-case, adversarial counterpart \citep{szegedy2014intriguing,goodfellow2015explaining,madry2018towards} is a promising direction for future work.

\vspace{-0.05cm}
\section{Conclusion}
\vspace{-0.05cm}
\label{sec:conclusion}

We introduce activation denoising, which recovers much of the benefit of sequential post-training quantization while keeping quantization fully parallel. Our central take-away is that the upstream quantization errors, which sequential methods handle through expensive layer-by-layer re-calibration, can instead be addressed by regularizing every layer to be robust to noise, in a single parallel pass. Treating quantization errors as a robustness problem thus offers a principled and more efficient alternative to sequential quantization, providing a promising direction for efficient and accurate LLM compression at scale.

\clearpage
\bibliographystyle{conference}
\bibliography{references}

\newpage
\appendix

\crefalias{section}{appendix}
\crefalias{subsection}{appendix}
\crefalias{subsubsection}{appendix}
\raggedbottom
\providecommand{\corollaryautorefname}{Corollary}
\providecommand{\propositionautorefname}{Proposition}

\section*{Appendix overview}

This appendix augments our main findings with further experiments and experimental setup details, additional ablations, and full proofs and derivations. We organize the appendix as follows:
\begin{itemize}[itemsep=1pt,topsep=-3pt,leftmargin=0.5cm]
\item \textbf{\Cref{app:additional-results}} reports additional experimental results and ablations that extend our main findings.
\item \textbf{\Cref{app:setup}} describes the full experimental setup in detail including hyperparameter details.
\item \textbf{\Cref{sec:relation}} relates our denoising objective to the sequential corrections of GPTAQ and QEP.
\item \textbf{\Cref{app:proofs}} provides the full proofs and derivations of our theoretical results.
\end{itemize}

\section{Additional experimental results and ablations}
\label{app:additional-results}

In this section we provide additional experimental results and ablations extending our main findings.

\subsection{Noise-model mechanism ablation: local vs.\ propagated noise injection}
\label{app:mechanism}

\Cref{sec:random-noise} attributes the effect of the injected noise to two mechanisms: a local column-wise regularization from the injection $\boldsymbol{\xi}$ at each layer (effect~(i), formalized by \autoref{thm:inject-lipschitz}), and a depth-compounding regularization from propagating the injected noise downstream (effect~(ii)). \Cref{fig:mechanism-ablation} isolates the two by ablating the noise model on the $5{\times}5$ $(\rho,\delta)$ grid for Llama-3.2-1B at W3A16 (Gaussian injection, random Hadamard rotation, ridge $\alpha{=}0.01$), under a shared color scale.

Turning off all injected noise reduces our objective to parallel GPTQ: the noise moments vanish ($\mathbf{N}{=}\mathbf{C}{=}\mathbf{0}$), so $\rho$ and $\delta$ have no effect and the grid is flat at the parallel-GPTQ value (here $17.07$ at the fixed ridge). Injecting the noise locally without propagation, i.e., on top of the clean forward pass (so each layer sees only its own newly sampled $\boldsymbol{\xi}$), already closes $11\%$ of the parallel--sequential gap, confirming that effect~(i) works as a standalone regularizer. With propagation, so that the injected noise compounds across depth as in our full method, the gain roughly doubles to $28\%$. The optimum sits at the same cell $(\rho{=}0.45,\delta{=}0.0375)$ in both cases. Thus both effects contribute, with the propagated, depth-compounding term (ii) providing the majority of the benefit.

\begin{figure}[H]
\centering
\input{figures/24_mechanism_ablation/mechanism_ablation.pgf}
\caption{\textbf{Noise-model mechanism ablation} (Llama-3.2-1B W3A16, Gaussian denoising, random Hadamard rotation, $\alpha{=}0.01$). Average perplexity over the $5{\times}5$ $(\rho,\delta)$ grid. Each title reports the best value and the \% of the parallel--sequential gap it closes. \emph{Left}: no injected noise, so the objective collapses to parallel GPTQ and $\rho,\delta$ have no effect. \emph{Middle}: local injection only (noise enters the moments but does not propagate), closing $11\%$ of the gap. \emph{Right}: the full method with propagated noise, closing $28\%$. Local injection alone already improves over parallel GPTQ, while propagation accounts for the majority of the improvement.}
\label{fig:mechanism-ablation}
\end{figure}

\newpage
\subsection{Noise-scale landscape: partial denoising and generalizing sequential quantization}
\label{app:sweep-range}

We explore the full range of both noise scales, $\rho,\lambda\in[0,1]$, on a $9{\times}9$ grid for Llama-3.2-1B W3A16:

\begin{figure}[H]
\centering
\begin{minipage}[b]{0.49\textwidth}
\centering
\input{figures/17_rho_lambda_landscape/538.pgf}
\end{minipage}\hfill
\begin{minipage}[b]{0.49\textwidth}
\centering
\input{figures/17_rho_lambda_landscape/401.pgf}
\end{minipage}
\caption{\textbf{Full $\rho\times\lambda$ sweep}, Llama-3.2-1B W3A16 ($9{\times}9$ grid over $[0,1]^2$). \emph{Left}: injected Gaussian noise (best avg.\ perplexity $16.31$ at $\lambda{=}0.4,\rho{=}0.3$). \emph{Right}: the real sequential upstream error (best $15.12$ at $\lambda{=}0.8,\rho{=}0.4$). In both, $\lambda$ is the input noise scale and $\rho$ the target scale; the low-perplexity region is below the diagonal ($\rho{<}\lambda$).}
\label{fig:rho-lambda-landscape}
\end{figure}

\textbf{Improving upon sequential asymmetric correction.} We compare the injected-noise proxy against the real sequential upstream error $\boldsymbol{\eta}_{\mathrm{seq}}$ (\autoref{fig:rho-lambda-landscape}, right). With this error, the asymmetric corrections of GPTAQ~\citep{li2025gptaq} and QEP~\citep{arai2025qep} are special cases of our objective: both fix the input noise scale to $\lambda{=}1$ and only tune the correction strength $1{-}\rho$ (\cref{sec:relation}), which restricts them to the rightmost column of the landscape. Our framework instead treats $\lambda$ as a free parameter. Along the column $\lambda{=}1$, the best configuration reaches an average perplexity of $15.2$ ($\rho\in\{0.5,0.6\}$), whereas the optimum of our sweep at $\lambda{=}0.8,\rho{=}0.4$ reaches $15.12$. Scaling down the upstream error ($\lambda{<}1$) thus improves upon the best asymmetric correction, even though both use the same sequential error. The flexibility of our objective therefore benefits not only parallel quantization with a noise proxy, but also sequential quantization.

\textbf{Why the optimum lies below the diagonal.}   We find the best configurations sit below the diagonal ($\rho{<}\lambda$). This shape follows from the objective in \autoref{eq:wiener-objective}. On the diagonal $\rho{=}\lambda$, the first factor of $\mathbf{F}$ in \autoref{eq:wiener-filter} equals $\mathbf{G}$, so $\mathbf{F}{=}\mathbf{I}$ and $\tilde{\mathbf{W}}{=}\mathbf{W}$: the noise only changes the rounding metric $\mathbf{G}$ and does not correct the weights. Off the diagonal, \autoref{thm:inject-lipschitz} shows that injected noise pulls each weight column toward $s_c\mathbf{W}_{:,c}$, where $s_c{<}1$ if and only if $\rho{<}\lambda$. Below the diagonal the objective therefore shrinks the weights, which makes the layer more robust to input perturbations. Above the diagonal ($\rho{>}\lambda$) we instead have $s_c{>}1$: the target carries more noise than the input, so the layer is asked to amplify the error and its weights grow. This is consistent with the poor performance above the diagonal and with the divergence with large $\rho$ and small $\lambda$.

\newpage
\subsection{Full result tables with hyperparameter sweep heatmaps}
\label{sec:proxy-model}

We provide the full tables of our W2/W3/W4 weight quantization results with detailed hyperparameter heatmaps of the sweeps used to optimize hyperparameters for each model and bit-width. 

\textbf{Proxy-model quantization.}
The random noise models a signal-independent input perturbation, providing a principled regularization approach to input activation denoising. Here we provide additional results for a second noise model: we propose to estimate upstream error from a model quantized in parallel, which is particularly efficient when a previously quantized model is already available and can be reused to further improve the quantization quality. 
Concretely, we quantize an LLM once in a single parallel pass (for example with parallel GPTQ at the deployed bit-width), and estimate the noise from the difference of input activations between the full-precision and proxy-quantized models. 
Instead of expensive sequential quantization, the cost of this proxy model corresponds to an additional parallel quantization pass to obtain the proxy quantized model, which can then be used for denoising the input activations in a second parallel quantization pass.
In practice one may quantize a dedicated proxy or just reuse an existing quantized model at no additional cost. In either case, the proxy model can provide a more accurate estimate of the deployed error than random noise injection alone. The following results therefore include both random noise injection and proxy-model denoising results, which we find to be working best when combined. The results suggest that any existing parallel-GPTQ quantized model can be used as a proxy to improve the quantization quality in a second parallel pass with our activation denoising.

The tables and figures are organized as follows:

\begin{itemize}[itemsep=1pt,topsep=-3pt,leftmargin=0.5cm]
\item \autoref{tab:w3-full}: W3A16 summary across all four models and both rotations.
\item \autoref{tab:w3-1b}: Llama-3.2-1B at W3A16 (GPTQ), sweeps in \autoref{fig:w3-1b-sweeps-random} and \autoref{fig:w3-1b-sweeps-learned}.
\item \autoref{tab:w3-3b}: Llama-3.2-3B at W3A16 (GPTQ), sweeps in \autoref{fig:w3-3b-sweeps-random} and \autoref{fig:w3-3b-sweeps-learned}.
\item \autoref{tab:w3-8b}: Llama-3-8B at W3A16 (GPTQ), sweeps in \autoref{fig:w3-8b-sweeps-random} and \autoref{fig:w3-8b-sweeps-learned}.
\item \autoref{tab:w3-13b}: Llama-2-13B at W3A16 (GPTQ), sweeps in \autoref{fig:w3-13b-sweeps-random} and \autoref{fig:w3-13b-sweeps-learned}.
\item \autoref{tab:w4-1b}: Llama-3.2-1B at W4A16 (GPTQ), sweeps in \autoref{fig:w4-1b-sweeps-random} and \autoref{fig:w4-1b-sweeps-learned}.
\item \autoref{tab:w2-1b}: Llama-3.2-1B at W2A16 (QuIP\#), sweeps in \autoref{fig:w2-1b-sweeps}.
\item \autoref{tab:w2-3b}: Llama-3.2-3B at W2A16 (QuIP\#), sweeps in \autoref{fig:w2-3b-sweeps}.
\item \autoref{tab:w2-8b}: Llama-3-8B at W2A16 (QuIP\#), sweeps in \autoref{fig:w2-8b-sweeps}.
\item \autoref{tab:w2-13b}: Llama-2-13B at W2A16 (QuIP\#), sweeps in \autoref{fig:w2-13b-sweeps}.
\end{itemize}

\subsubsection{W3A16 GPTQ summary across models}

\begin{table}[H]
\centering
\caption{\textbf{W3A16 GPTQ summary across Llama-3.2-1B/3B, Llama-3-8B, and Llama-2-13B.} Same quantization objectives as \autoref{tab:main}, here reported for the fixed random-Hadamard and the learned SpinQuant rotation side by side, including the GPTQ-W3 proxy noise.}
\label{tab:w3-full}
{\scriptsize\setlength{\tabcolsep}{3pt}\begin{tabular}{@{}lll ccc ccc ccc ccc@{}}
\toprule
& & Quantization objective & \multicolumn{3}{c}{Llama\,1B} & \multicolumn{3}{c}{Llama\,3B} & \multicolumn{3}{c}{Llama\,8B} & \multicolumn{3}{c}{Llama\,13B} \\
\cmidrule(lr){4-6}\cmidrule(lr){7-9}\cmidrule(lr){10-12}\cmidrule(lr){13-15}
& & & Acc\,$\uparrow$ & Avg\,$\downarrow$ & Gap\,$\uparrow$ & Acc\,$\uparrow$ & Avg\,$\downarrow$ & Gap\,$\uparrow$ & Acc\,$\uparrow$ & Avg\,$\downarrow$ & Gap\,$\uparrow$ & Acc\,$\uparrow$ & Avg\,$\downarrow$ & Gap\,$\uparrow$ \\
\midrule
& & FP16 & 54.8 & 11.33 & -- & 62.6 & 9.17 & -- & 67.1 & 7.57 & -- & 66.5 & 5.73 & -- \\
\midrule
\multirow{5}{*}{\rotatebox[origin=c]{90}{\scriptsize Random rot.}} & \multirow{3}{*}{\rotatebox[origin=c]{90}{\scriptsize Parallel}} & symmetric correction & 49.4 & 17.07 & $0\%$ & 58.5 & 12.11 & $0\%$ & 63.9 & 9.94 & $0\%$ & 64.2 & 6.40 & $0\%$ \\
 & & Denoising reg.\ (\textbf{ours}) & 49.3 & 16.51 & $31\%$ & 57.1 & 11.97 & $21\%$ & 64.9 & 9.81 & $30\%$ & 64.5 & 6.37 & $18\%$ \\
 & & \quad + GPTQ-W3 proxy & 50.2 & 16.30 & $\mathbf{43\%}$ & 59.1 & 11.88 & $\mathbf{34\%}$ & 64.6 & 9.78 & $\mathbf{36\%}$ & 64.2 & 6.33 & $\mathbf{36\%}$ \\
\cmidrule{2-15}
 & \multirow{2}{*}{\rotatebox[origin=c]{90}{\scriptsize Seq.}} & symmetric correction & 49.8 & 16.53 & $31\%$ & 59.0 & 11.99 & $17\%$ & 63.5 & 9.81 & $29\%$ & 63.7 & 6.38 & $13\%$ \\
 & & asymmetric correction & 50.3 & 15.29 & $100\%$ & 58.4 & 11.41 & $100\%$ & 63.8 & 9.51 & $100\%$ & 64.5 & 6.21 & $100\%$ \\
\midrule[\heavyrulewidth]
\multirow{5}{*}{\rotatebox[origin=c]{90}{\scriptsize Learned rot.}} & \multirow{3}{*}{\rotatebox[origin=c]{90}{\scriptsize Parallel}} & symmetric correction & 49.3 & 16.59 & $0\%$ & 58.3 & 11.88 & $0\%$ & 63.3 & 9.73 & $0\%$ & 64.8 & 6.34 & $0\%$ \\
 & & Denoising reg.\ (\textbf{ours}) & 49.1 & 16.02 & $41\%$ & 58.9 & 11.83 & $11\%$ & 63.7 & 9.75 & $-7\%$ & 65.1 & 6.31 & $22\%$ \\
 & & \quad + GPTQ-W3 proxy & 49.9 & 15.97 & $\mathbf{44\%}$ & 58.6 & 11.80 & $\mathbf{15\%}$ & 63.9 & 9.65 & $\mathbf{28\%}$ & 64.9 & 6.30 & $\mathbf{26\%}$ \\
\cmidrule{2-15}
 & \multirow{2}{*}{\rotatebox[origin=c]{90}{\scriptsize Seq.}} & symmetric correction & 49.0 & 16.23 & $26\%$ & 59.2 & 11.86 & $5\%$ & 63.1 & 9.72 & $3\%$ & 64.8 & 6.32 & $17\%$ \\
 & & asymmetric correction & 50.2 & 15.17 & $100\%$ & 58.0 & 11.37 & $100\%$ & 63.8 & 9.43 & $100\%$ & 64.9 & 6.20 & $100\%$ \\
\bottomrule
\end{tabular}
}
\end{table}

\subsubsection{Llama-3.2-1B (W3A16 GPTQ)}

\begin{table}[H]
\centering
\caption{\textbf{Llama-3.2-1B at W3A16 (GPTQ).} The upper block uses the fixed random-Hadamard rotation, the lower block the SpinQuant-learned rotation; rows marked \emph{(ours)} are our denoising objective. Gauss.\ denotes injected Gaussian noise, W3 corresponds to the additional GPTQ-W3 proxy noise, and W3+Gauss combines both.}
\label{tab:w3-1b}
\small
\setlength{\tabcolsep}{3pt}
\begin{tabular}{@{}lll cccc ccc cc@{}}
\toprule
& & Method & Bits & Grid & Setting & Acc\,$\uparrow$ & Wiki\,$\downarrow$ & C4\,$\downarrow$ & FW\,$\downarrow$ & Avg\,$\downarrow$ & Gap\,$\uparrow$ \\
\midrule
& & FP16 & 16 & -- & -- & 54.8 & 9.76 & 11.54 & 12.70 & 11.33 & -- \\
\midrule
\multirow{6}{*}{\rotatebox[origin=c]{90}{\scriptsize Random rot.}} & \multirow{4}{*}{\rotatebox[origin=c]{90}{\scriptsize Parallel}} & GPTQ & 3.00 & 25 & $\mathrm{damp}{=}0.002$ & 49.4 & 13.21 & 18.80 & 19.20 & 17.07 & $0\%$ \\
\cmidrule(l){3-12}
 & & Gauss.\,\textit{(ours)} & 3.00 & 25 & $\rho{=}0.55,\,\delta{=}0.05$ & 49.3 & 12.91 & 18.06 & 18.57 & 16.51 & $31\%$ \\
 & & W3\,\textit{(ours)} & 3.00 & 25 & $\rho{=}0.65,\,\delta{=}0.05$ & 50.1 & 13.01 & 18.04 & 18.53 & 16.53 & $30\%$ \\
 & & W3+Gauss\,\textit{(ours)} & 3.00 & 25 & $\rho{=}0.25,\,\delta{=}0.05$ & 50.2 & 12.76 & 17.79 & 18.36 & 16.30 & $43\%$ \\
\cmidrule{2-12}
 & \multirow{2}{*}{\rotatebox[origin=c]{90}{\scriptsize Seq.}} & GPTQ & 3.00 & 25 & $\mathrm{damp}{=}0.05$ & 49.8 & 12.99 & 18.03 & 18.56 & 16.53 & $31\%$ \\
 & & GPTAQ & 3.00 & 25 & $\beta{=}0.5833$ & 50.3 & 11.87 & 16.77 & 17.22 & 15.29 & $100\%$ \\
\midrule[\heavyrulewidth]
\multirow{6}{*}{\rotatebox[origin=c]{90}{\scriptsize Learned rot.}} & \multirow{4}{*}{\rotatebox[origin=c]{90}{\scriptsize Parallel}} & GPTQ & 3.00 & 25 & $\mathrm{damp}{=}0.02$ & 49.3 & 12.93 & 18.13 & 18.72 & 16.59 & $0\%$ \\
\cmidrule(l){3-12}
 & & Gauss.\,\textit{(ours)} & 3.00 & 25 & $\rho{=}0.55,\,\delta{=}0.05$ & 49.1 & 12.58 & 17.39 & 18.08 & 16.02 & $41\%$ \\
 & & W3\,\textit{(ours)} & 3.00 & 25 & $\rho{=}0.65,\,\delta{=}0.05$ & 49.0 & 12.74 & 17.84 & 18.28 & 16.29 & $22\%$ \\
 & & W3+Gauss\,\textit{(ours)} & 3.00 & 25 & $\rho{=}0.45,\,\delta{=}0.05$ & 49.9 & 12.57 & 17.32 & 18.02 & 15.97 & $44\%$ \\
\cmidrule{2-12}
 & \multirow{2}{*}{\rotatebox[origin=c]{90}{\scriptsize Seq.}} & GPTQ & 3.00 & 25 & $\mathrm{damp}{=}0.002$ & 49.0 & 12.76 & 17.65 & 18.28 & 16.23 & $26\%$ \\
 & & GPTAQ & 3.00 & 25 & $\beta{=}0.4167$ & 50.2 & 11.86 & 16.56 & 17.11 & 15.17 & $100\%$ \\
\bottomrule
\end{tabular}
\end{table}

\begin{figure}[h]
\centering
\input{figures/16_sweeps/w3_1b_random.pgf}
\caption{\textbf{Llama-3.2-1B W3A16 hyperparameter sweeps, fixed random-Hadamard rotation.} \emph{Top}: the one-dimensional baseline sweeps, avg.\ perplexity against the swept knob, with the selected optimum starred. \emph{Bottom}: our denoising objective over its $5{\times}5$ grid of the two swept knobs $\rho$ and $\delta$ (which sets the noise scale through $\lambda{=}\rho{+}\delta$); one heatmap per upstream-error proxy (Gaussian noise, the GPTQ-W3 proxy, and their combination), with the best (minimum-perplexity) cell boxed.}
\label{fig:w3-1b-sweeps-random}
\end{figure}

\begin{figure}[h]
\centering
\input{figures/16_sweeps/w3_1b_learned.pgf}
\caption{\textbf{Llama-3.2-1B W3A16 hyperparameter sweeps, learned SpinQuant rotation.} Panels as in Fig.~\ref{fig:w3-1b-sweeps-random}.}
\label{fig:w3-1b-sweeps-learned}
\end{figure}

\subsubsection{Llama-3.2-3B (W3A16 GPTQ)}

\begin{table}[H]
\centering
\caption{\textbf{Llama-3.2-3B at W3A16 (GPTQ).} Columns and rotation blocks as in \autoref{tab:w3-1b}. Gauss.\ denotes injected Gaussian noise, W3 corresponds to the additional GPTQ-W3 proxy noise, and W3+Gauss combines both.}
\label{tab:w3-3b}
\small
\setlength{\tabcolsep}{3pt}
\begin{tabular}{@{}lll cccc ccc cc@{}}
\toprule
& & Method & Bits & Grid & Setting & Acc\,$\uparrow$ & Wiki\,$\downarrow$ & C4\,$\downarrow$ & FW\,$\downarrow$ & Avg\,$\downarrow$ & Gap\,$\uparrow$ \\
\midrule
& & FP16 & 16 & -- & -- & 62.6 & 7.82 & 9.29 & 10.41 & 9.17 & -- \\
\midrule
\multirow{6}{*}{\rotatebox[origin=c]{90}{\scriptsize Random rot.}} & \multirow{4}{*}{\rotatebox[origin=c]{90}{\scriptsize Parallel}} & GPTQ & 3.00 & 25 & $\mathrm{damp}{=}0.001$ & 58.5 & 9.61 & 13.14 & 13.60 & 12.11 & $0\%$ \\
\cmidrule(l){3-12}
 & & Gauss.\,\textit{(ours)} & 3.00 & 25 & $\rho{=}0.35,\,\delta{=}0.0375$ & 57.1 & 9.47 & 12.96 & 13.47 & 11.97 & $21\%$ \\
 & & W3\,\textit{(ours)} & 3.00 & 25 & $\rho{=}0.35,\,\delta{=}0.05$ & 58.7 & 9.46 & 13.02 & 13.45 & 11.98 & $19\%$ \\
 & & W3+Gauss\,\textit{(ours)} & 3.00 & 25 & $\rho{=}0.35,\,\delta{=}0.05$ & 59.1 & 9.41 & 12.88 & 13.34 & 11.88 & $34\%$ \\
\cmidrule{2-12}
 & \multirow{2}{*}{\rotatebox[origin=c]{90}{\scriptsize Seq.}} & GPTQ & 3.00 & 25 & $\mathrm{damp}{=}0.002$ & 59.0 & 9.51 & 13.03 & 13.45 & 11.99 & $17\%$ \\
 & & GPTAQ & 3.00 & 25 & $\beta{=}0.4583$ & 58.4 & 8.94 & 12.44 & 12.85 & 11.41 & $100\%$ \\
\midrule[\heavyrulewidth]
\multirow{6}{*}{\rotatebox[origin=c]{90}{\scriptsize Learned rot.}} & \multirow{4}{*}{\rotatebox[origin=c]{90}{\scriptsize Parallel}} & GPTQ & 3.00 & 25 & $\mathrm{damp}{=}0.4$ & 58.3 & 9.41 & 12.86 & 13.38 & 11.88 & $0\%$ \\
\cmidrule(l){3-12}
 & & Gauss.\,\textit{(ours)} & 3.00 & 25 & $\rho{=}0.65,\,\delta{=}0.0375$ & 58.9 & 9.29 & 12.82 & 13.37 & 11.83 & $11\%$ \\
 & & W3\,\textit{(ours)} & 3.00 & 25 & $\rho{=}0.55,\,\delta{=}0.05$ & 58.7 & 9.34 & 12.89 & 13.41 & 11.88 & $0\%$ \\
 & & W3+Gauss\,\textit{(ours)} & 3.00 & 25 & $\rho{=}0.45,\,\delta{=}0.05$ & 58.6 & 9.27 & 12.81 & 13.33 & 11.80 & $15\%$ \\
\cmidrule{2-12}
 & \multirow{2}{*}{\rotatebox[origin=c]{90}{\scriptsize Seq.}} & GPTQ & 3.00 & 25 & $\mathrm{damp}{=}0.02$ & 59.2 & 9.34 & 12.86 & 13.37 & 11.86 & $5\%$ \\
 & & GPTAQ & 3.00 & 25 & $\beta{=}0.5417$ & 58.0 & 8.91 & 12.39 & 12.80 & 11.37 & $100\%$ \\
\bottomrule
\end{tabular}
\end{table}

\begin{figure}[H]
\centering
\input{figures/16_sweeps/w3_3b_random.pgf}
\caption{\textbf{Llama-3.2-3B W3A16 hyperparameter sweeps, fixed random-Hadamard rotation.} \emph{Top}: the one-dimensional baseline sweeps, avg.\ perplexity against the swept knob, with the selected optimum starred. \emph{Bottom}: our denoising objective over its $5{\times}5$ grid of the two swept knobs $\rho$ and $\delta$ (which sets the noise scale through $\lambda{=}\rho{+}\delta$); one heatmap per upstream-error proxy (Gaussian noise, the GPTQ-W3 proxy, and their combination), with the best (minimum-perplexity) cell boxed.}
\label{fig:w3-3b-sweeps-random}
\end{figure}

\begin{figure}[H]
\centering
\input{figures/16_sweeps/w3_3b_learned.pgf}
\caption{\textbf{Llama-3.2-3B W3A16 hyperparameter sweeps, learned SpinQuant rotation.} Panels as in Fig.~\ref{fig:w3-3b-sweeps-random}.}
\label{fig:w3-3b-sweeps-learned}
\end{figure}

\subsubsection{Llama-3-8B (W3A16 GPTQ)}

\begin{table}[H]
\centering
\caption{\textbf{Llama-3-8B at W3A16 (GPTQ).} Columns and rotation blocks as in \autoref{tab:w3-1b}. Gauss.\ denotes injected Gaussian noise, W3 corresponds to the additional GPTQ-W3 proxy noise, and W3+Gauss combines both.}
\label{tab:w3-8b}
\small
\setlength{\tabcolsep}{3pt}
\begin{tabular}{@{}lll cccc ccc cc@{}}
\toprule
& & Method & Bits & Grid & Setting & Acc\,$\uparrow$ & Wiki\,$\downarrow$ & C4\,$\downarrow$ & FW\,$\downarrow$ & Avg\,$\downarrow$ & Gap\,$\uparrow$ \\
\midrule
& & FP16 & 16 & -- & -- & 67.1 & 6.14 & 7.78 & 8.78 & 7.57 & -- \\
\midrule
\multirow{6}{*}{\rotatebox[origin=c]{90}{\scriptsize Random rot.}} & \multirow{4}{*}{\rotatebox[origin=c]{90}{\scriptsize Parallel}} & GPTQ & 3.00 & 25 & $\mathrm{damp}{=}0.005$ & 63.9 & 7.60 & 10.90 & 11.30 & 9.94 & $0\%$ \\
\cmidrule(l){3-12}
 & & Gauss.\,\textit{(ours)} & 3.00 & 25 & $\rho{=}0.55,\,\delta{=}0.0375$ & 64.9 & 7.50 & 10.75 & 11.17 & 9.81 & $30\%$ \\
 & & W3\,\textit{(ours)} & 3.00 & 25 & $\rho{=}0.65,\,\delta{=}0.05$ & 64.1 & 7.51 & 10.83 & 11.22 & 9.85 & $19\%$ \\
 & & W3+Gauss\,\textit{(ours)} & 3.00 & 25 & $\rho{=}0.35,\,\delta{=}0.025$ & 64.6 & 7.51 & 10.70 & 11.14 & 9.78 & $36\%$ \\
\cmidrule{2-12}
 & \multirow{2}{*}{\rotatebox[origin=c]{90}{\scriptsize Seq.}} & GPTQ & 3.00 & 25 & $\mathrm{damp}{=}0.02$ & 63.5 & 7.56 & 10.72 & 11.16 & 9.81 & $29\%$ \\
 & & GPTAQ & 3.00 & 25 & $\beta{=}0.375$ & 63.8 & 7.22 & 10.48 & 10.82 & 9.51 & $100\%$ \\
\midrule[\heavyrulewidth]
\multirow{6}{*}{\rotatebox[origin=c]{90}{\scriptsize Learned rot.}} & \multirow{4}{*}{\rotatebox[origin=c]{90}{\scriptsize Parallel}} & GPTQ & 3.00 & 25 & $\mathrm{damp}{=}0.2$ & 63.3 & 7.42 & 10.67 & 11.09 & 9.73 & $0\%$ \\
\cmidrule(l){3-12}
 & & Gauss.\,\textit{(ours)} & 3.00 & 25 & $\rho{=}0.55,\,\delta{=}0.05$ & 63.7 & 7.37 & 10.73 & 11.15 & 9.75 & $-7\%$ \\
 & & W3\,\textit{(ours)} & 3.00 & 25 & $\rho{=}0.45,\,\delta{=}0.025$ & 63.9 & 7.40 & 10.74 & 11.13 & 9.76 & $-10\%$ \\
 & & W3+Gauss\,\textit{(ours)} & 3.00 & 25 & $\rho{=}0.45,\,\delta{=}0.025$ & 63.9 & 7.37 & 10.56 & 11.01 & 9.65 & $28\%$ \\
\cmidrule{2-12}
 & \multirow{2}{*}{\rotatebox[origin=c]{90}{\scriptsize Seq.}} & GPTQ & 3.00 & 25 & $\mathrm{damp}{=}0.01$ & 63.1 & 7.39 & 10.68 & 11.09 & 9.72 & $3\%$ \\
 & & GPTAQ & 3.00 & 25 & $\beta{=}0.3333$ & 63.8 & 7.19 & 10.35 & 10.75 & 9.43 & $100\%$ \\
\bottomrule
\end{tabular}
\end{table}

\begin{figure}[H]
\centering
\input{figures/16_sweeps/w3_8b_random.pgf}
\caption{\textbf{Llama-3-8B W3A16 hyperparameter sweeps, fixed random-Hadamard rotation.} \emph{Top}: the one-dimensional baseline sweeps, avg.\ perplexity against the swept knob, with the selected optimum starred. \emph{Bottom}: our denoising objective over its $5{\times}5$ grid of the two swept knobs $\rho$ and $\delta$ (which sets the noise scale through $\lambda{=}\rho{+}\delta$); one heatmap per upstream-error proxy (Gaussian noise, the GPTQ-W3 proxy, and their combination), with the best (minimum-perplexity) cell boxed.}
\label{fig:w3-8b-sweeps-random}
\end{figure}

\begin{figure}[H]
\centering
\input{figures/16_sweeps/w3_8b_learned.pgf}
\caption{\textbf{Llama-3-8B W3A16 hyperparameter sweeps, learned SpinQuant rotation.} Panels as in Fig.~\ref{fig:w3-8b-sweeps-random}.}
\label{fig:w3-8b-sweeps-learned}
\end{figure}

\subsubsection{Llama-2-13B (W3A16 GPTQ)}

\begin{table}[H]
\centering
\caption{\textbf{Llama-2-13B at W3A16 (GPTQ).} Columns and rotation blocks as in \autoref{tab:w3-1b}. Gauss.\ denotes injected Gaussian noise, W3 corresponds to the additional GPTQ-W3 proxy noise, and W3+Gauss combines both.}
\label{tab:w3-13b}
\small
\setlength{\tabcolsep}{3pt}
\begin{tabular}{@{}lll cccc ccc cc@{}}
\toprule
& & Method & Bits & Grid & Setting & Acc\,$\uparrow$ & Wiki\,$\downarrow$ & C4\,$\downarrow$ & FW\,$\downarrow$ & Avg\,$\downarrow$ & Gap\,$\uparrow$ \\
\midrule
& & FP16 & 16 & -- & -- & 66.5 & 4.88 & 5.80 & 6.49 & 5.73 & -- \\
\midrule
\multirow{6}{*}{\rotatebox[origin=c]{90}{\scriptsize Random rot.}} & \multirow{4}{*}{\rotatebox[origin=c]{90}{\scriptsize Parallel}} & GPTQ & 3.00 & 25 & $\mathrm{damp}{=}0.005$ & 64.2 & 5.37 & 6.66 & 7.17 & 6.40 & $0\%$ \\
\cmidrule(l){3-12}
 & & Gauss.\,\textit{(ours)} & 3.00 & 25 & $\rho{=}0.35,\,\delta{=}0.0375$ & 64.5 & 5.31 & 6.64 & 7.15 & 6.37 & $18\%$ \\
 & & W3\,\textit{(ours)} & 3.00 & 25 & $\rho{=}0.55,\,\delta{=}0.05$ & 63.7 & 5.33 & 6.64 & 7.15 & 6.37 & $15\%$ \\
 & & W3+Gauss\,\textit{(ours)} & 3.00 & 25 & $\rho{=}0.35,\,\delta{=}0.0375$ & 64.2 & 5.29 & 6.60 & 7.11 & 6.33 & $36\%$ \\
\cmidrule{2-12}
 & \multirow{2}{*}{\rotatebox[origin=c]{90}{\scriptsize Seq.}} & GPTQ & 3.00 & 25 & $\mathrm{damp}{=}0.002$ & 63.7 & 5.35 & 6.64 & 7.15 & 6.38 & $13\%$ \\
 & & GPTAQ & 3.00 & 25 & $\beta{=}0.5833$ & 64.5 & 5.20 & 6.44 & 7.00 & 6.21 & $100\%$ \\
\midrule[\heavyrulewidth]
\multirow{6}{*}{\rotatebox[origin=c]{90}{\scriptsize Learned rot.}} & \multirow{4}{*}{\rotatebox[origin=c]{90}{\scriptsize Parallel}} & GPTQ & 3.00 & 25 & $\mathrm{damp}{=}0.4$ & 64.8 & 5.27 & 6.61 & 7.14 & 6.34 & $0\%$ \\
\cmidrule(l){3-12}
 & & Gauss.\,\textit{(ours)} & 3.00 & 25 & $\rho{=}0.35,\,\delta{=}0.025$ & 65.1 & 5.25 & 6.58 & 7.10 & 6.31 & $22\%$ \\
 & & W3\,\textit{(ours)} & 3.00 & 25 & $\rho{=}0.25,\,\delta{=}0.05$ & 64.2 & 5.26 & 6.58 & 7.10 & 6.31 & $20\%$ \\
 & & W3+Gauss\,\textit{(ours)} & 3.00 & 25 & $\rho{=}0.45,\,\delta{=}0.025$ & 64.9 & 5.24 & 6.57 & 7.10 & 6.30 & $26\%$ \\
\cmidrule{2-12}
 & \multirow{2}{*}{\rotatebox[origin=c]{90}{\scriptsize Seq.}} & GPTQ & 3.00 & 25 & $\mathrm{damp}{=}0.05$ & 64.8 & 5.27 & 6.57 & 7.11 & 6.32 & $17\%$ \\
 & & GPTAQ & 3.00 & 25 & $\beta{=}0.5$ & 64.9 & 5.18 & 6.43 & 6.98 & 6.20 & $100\%$ \\
\bottomrule
\end{tabular}
\end{table}

\begin{figure}[H]
\centering
\input{figures/16_sweeps/w3_13b_random.pgf}
\caption{\textbf{Llama-2-13B W3A16 hyperparameter sweeps, fixed random-Hadamard rotation.} \emph{Top}: the one-dimensional baseline sweeps, avg.\ perplexity against the swept knob, with the selected optimum starred. \emph{Bottom}: our denoising objective over its $5{\times}5$ grid of the two swept knobs $\rho$ and $\delta$ (which sets the noise scale through $\lambda{=}\rho{+}\delta$); one heatmap per upstream-error proxy (Gaussian noise, the GPTQ-W3 proxy, and their combination), with the best (minimum-perplexity) cell boxed.}
\label{fig:w3-13b-sweeps-random}
\end{figure}

\begin{figure}[H]
\centering
\input{figures/16_sweeps/w3_13b_learned.pgf}
\caption{\textbf{Llama-2-13B W3A16 hyperparameter sweeps, learned SpinQuant rotation.} Panels as in Fig.~\ref{fig:w3-13b-sweeps-random}.}
\label{fig:w3-13b-sweeps-learned}
\end{figure}

\subsubsection{Llama-3.2-1B (W4A16 GPTQ)}

\begin{table}[H]
\centering
\caption{\textbf{Llama-3.2-1B at W4A16 (GPTQ).} Columns and rotation blocks as in \autoref{tab:w3-1b}. Gauss.\ denotes injected Gaussian noise, W4 corresponds to the additional GPTQ-W4 proxy noise, and W4+Gauss combines both. At $4$ bits the parallel--sequential gap is small ($\approx 0.26$ perplexity).}
\label{tab:w4-1b}
\small
\setlength{\tabcolsep}{3pt}
\begin{tabular}{@{}lll cccc ccc cc@{}}
\toprule
& & Method & Bits & Grid & Setting & Acc\,$\uparrow$ & Wiki\,$\downarrow$ & C4\,$\downarrow$ & FW\,$\downarrow$ & Avg\,$\downarrow$ & Gap\,$\uparrow$ \\
\midrule
& & FP16 & 16 & -- & -- & 54.8 & 9.76 & 11.54 & 12.70 & 11.33 & -- \\
\midrule
\multirow{6}{*}{\rotatebox[origin=c]{90}{\scriptsize Random rot.}} & \multirow{4}{*}{\rotatebox[origin=c]{90}{\scriptsize Parallel}} & GPTQ & 4.00 & 25 & $\mathrm{damp}{=}0.02$ & 53.1 & 10.48 & 13.30 & 14.11 & 12.63 & $0\%$ \\
\cmidrule(l){3-12}
 & & Gauss.\,\textit{(ours)} & 4.00 & 25 & $\rho{=}0.55,\,\delta{=}0.05$ & 53.7 & 10.41 & 13.29 & 14.15 & 12.61 & $7\%$ \\
 & & W4\,\textit{(ours)} & 4.00 & 25 & $\rho{=}0.65,\,\delta{=}0.025$ & 52.8 & 10.45 & 13.28 & 14.10 & 12.61 & $8\%$ \\
 & & W4+Gauss\,\textit{(ours)} & 4.00 & 25 & $\rho{=}0.45,\,\delta{=}0.05$ & 54.0 & 10.41 & 13.28 & 14.13 & 12.61 & $9\%$ \\
\cmidrule{2-12}
 & \multirow{2}{*}{\rotatebox[origin=c]{90}{\scriptsize Seq.}} & GPTQ & 4.00 & 25 & $\mathrm{damp}{=}0.001$ & 53.4 & 10.47 & 13.27 & 14.09 & 12.61 & $8\%$ \\
 & & GPTAQ & 4.00 & 25 & $\beta{=}0.4583$ & 54.2 & 10.25 & 13.01 & 13.84 & 12.37 & $100\%$ \\
\midrule[\heavyrulewidth]
\multirow{6}{*}{\rotatebox[origin=c]{90}{\scriptsize Learned rot.}} & \multirow{4}{*}{\rotatebox[origin=c]{90}{\scriptsize Parallel}} & GPTQ & 4.00 & 25 & $\mathrm{damp}{=}0.25$ & 53.6 & 10.42 & 13.20 & 14.08 & 12.57 & $0\%$ \\
\cmidrule(l){3-12}
 & & Gauss.\,\textit{(ours)} & 4.00 & 25 & $\rho{=}0.55,\,\delta{=}0.0375$ & 52.9 & 10.36 & 13.24 & 14.09 & 12.56 & $2\%$ \\
 & & W4\,\textit{(ours)} & 4.00 & 25 & $\rho{=}0.65,\,\delta{=}0.05$ & 53.0 & 10.40 & 13.25 & 14.07 & 12.57 & $-4\%$ \\
 & & W4+Gauss\,\textit{(ours)} & 4.00 & 25 & $\rho{=}0.65,\,\delta{=}0.025$ & 53.0 & 10.35 & 13.18 & 14.05 & 12.53 & $21\%$ \\
\cmidrule{2-12}
 & \multirow{2}{*}{\rotatebox[origin=c]{90}{\scriptsize Seq.}} & GPTQ & 4.00 & 25 & $\mathrm{damp}{=}0.05$ & 53.4 & 10.41 & 13.20 & 14.06 & 12.56 & $5\%$ \\
 & & GPTAQ & 4.00 & 25 & $\beta{=}0.5$ & 53.4 & 10.26 & 13.02 & 13.86 & 12.38 & $100\%$ \\
\bottomrule
\end{tabular}
\end{table}

\begin{figure}[H]
\centering
\input{figures/16_sweeps/w4_1b_random.pgf}
\caption{\textbf{Llama-3.2-1B W4A16 hyperparameter sweeps, fixed random-Hadamard rotation.} \emph{Top}: the one-dimensional baseline sweeps, avg.\ perplexity against the swept knob, with the selected optimum starred. \emph{Bottom}: our denoising objective over its $5{\times}5$ grid of the two swept knobs $\rho$ and $\delta$ (which sets the noise scale through $\lambda{=}\rho{+}\delta$); one heatmap per upstream-error proxy (Gaussian noise, the GPTQ-W4 proxy, and their combination), with the best (minimum-perplexity) cell boxed.}
\label{fig:w4-1b-sweeps-random}
\end{figure}

\begin{figure}[H]
\centering
\input{figures/16_sweeps/w4_1b_learned.pgf}
\caption{\textbf{Llama-3.2-1B W4A16 hyperparameter sweeps, learned SpinQuant rotation.} Panels as in Fig.~\ref{fig:w4-1b-sweeps-random}.}
\label{fig:w4-1b-sweeps-learned}
\end{figure}

\newpage
\subsubsection{Llama-3.2-1B (W2A16 QuIP\#)}

\begin{table}[H]
\centering
\caption{\textbf{QuIP\# at W2A16 (2-bit), Llama-3.2-1B.} Parallel: symmetric correction quantizes each layer from clean activations, Denoising reg. is our objective (Gaussian noise injection, $\rho$/$\delta$ grid). Sequential: symmetric correction uses the symmetric objective on the sequential input, i.e.\ the outputs of the already-quantized upstream layers; asymmetric correction additionally targets the clean full-precision output (GPTAQ-correction applied to QuIP\#). Both then apply E8 rounding. Acc: zero-shot average accuracy (\%). Gap: $0\%{=}$parallel symmetric, $100\%{=}$sequential asymmetric.}
\label{tab:w2-1b}
\small
\setlength{\tabcolsep}{5pt}
\begin{tabular}{@{}ll ccc ccc cc@{}}
\toprule
& Method & Grid & Setting & Acc\,$\uparrow$ & Wiki\,$\downarrow$ & C4\,$\downarrow$ & FW\,$\downarrow$ & Avg\,$\downarrow$ & Gap\,$\uparrow$ \\
\midrule
& FP16 & -- & -- & 54.8 & 9.76 & 11.54 & 12.70 & 11.33 & -- \\
\midrule
\multirow{2}{*}{\rotatebox[origin=c]{90}{\scriptsize Parallel}} & symmetric correction & 25 & $\mathrm{damp}{=}0.55$ & 44.3 & 19.13 & 30.36 & 31.66 & 27.05 & $0\%$ \\
 & Denoising reg.\ (\textbf{ours}) & 25 & $\rho{=}0.25,\,\delta{=}0.0125$ & 45.1 & 18.01 & 28.46 & 30.07 & 25.51 & $38\%$ \\
\cmidrule(l){2-10}
\multirow{2}{*}{\rotatebox[origin=c]{90}{\scriptsize Seq.}} & symmetric correction & 25 & $\mathrm{damp}{=}0.3$ & 43.8 & 19.05 & 29.14 & 30.88 & 26.36 & $17\%$ \\
 & asymmetric correction & 25 & $\beta{=}0.3333$ & 45.1 & 16.30 & 25.67 & 27.15 & 23.04 & $100\%$ \\
\bottomrule
\end{tabular}

\end{table}

\begin{figure}[H]
\centering
\input{figures/16_sweeps/w2_1b.pgf}
\caption{\textbf{QuIP\# W2A16 hyperparameter sweeps, Llama-3.2-1B.} The three baseline sweeps (parallel symmetric, sequential symmetric, sequential asymmetric), avg.\ perplexity against the swept knob with the selected optimum starred, and our denoising objective over its $5{\times}5$ grid of $\rho$ and $\delta$ (which sets the noise scale through $\lambda{=}\rho{+}\delta$), best cell boxed.}
\label{fig:w2-1b-sweeps}
\end{figure}

\subsubsection{Llama-3.2-3B (W2A16 QuIP\#)}
\begin{table}[H]
\centering
\caption{\textbf{QuIP\# at W2A16 (2-bit), Llama-3.2-3B.} Columns and blocks as in \autoref{tab:w2-1b}.}
\label{tab:w2-3b}
\small
\setlength{\tabcolsep}{5pt}
\begin{tabular}{@{}ll ccc ccc cc@{}}
\toprule
& Method & Grid & Setting & Acc\,$\uparrow$ & Wiki\,$\downarrow$ & C4\,$\downarrow$ & FW\,$\downarrow$ & Avg\,$\downarrow$ & Gap\,$\uparrow$ \\
\midrule
& FP16 & -- & -- & 62.6 & 7.82 & 9.29 & 10.41 & 9.17 & -- \\
\midrule
\multirow{2}{*}{\rotatebox[origin=c]{90}{\scriptsize Parallel}} & symmetric correction & 25 & $\mathrm{damp}{=}0.25$ & 52.7 & 12.25 & 18.47 & 19.15 & 16.62 & $0\%$ \\
 & Denoising reg.\ (\textbf{ours}) & 25 & $\rho{=}0.35,\,\delta{=}0.025$ & 53.9 & 11.61 & 17.21 & 17.90 & 15.57 & $86\%$ \\
\cmidrule(l){2-10}
\multirow{2}{*}{\rotatebox[origin=c]{90}{\scriptsize Seq.}} & symmetric correction & 25 & $\mathrm{damp}{=}0.25$ & 52.6 & 12.28 & 18.44 & 19.19 & 16.64 & $-1\%$ \\
 & asymmetric correction & 25 & $\beta{=}0.3333$ & 51.3 & 11.23 & 17.12 & 17.84 & 15.40 & $100\%$ \\
\bottomrule
\end{tabular}

\end{table}

\begin{figure}[H]
\centering
\input{figures/16_sweeps/w2_3b.pgf}
\caption{\textbf{QuIP\# W2A16 hyperparameter sweeps, Llama-3.2-3B.} Panels as in \autoref{fig:w2-1b-sweeps}.}
\label{fig:w2-3b-sweeps}
\end{figure}

\clearpage
\subsubsection{Llama-3-8B (W2A16 QuIP\#)}
\begin{table}[H]
\centering
\caption{\textbf{QuIP\# at W2A16 (2-bit), Llama-3-8B.} Columns and blocks as in \autoref{tab:w2-1b}.}
\label{tab:w2-8b}
\small
\setlength{\tabcolsep}{5pt}
\begin{tabular}{@{}ll ccc ccc cc@{}}
\toprule
& Method & Grid & Setting & Acc\,$\uparrow$ & Wiki\,$\downarrow$ & C4\,$\downarrow$ & FW\,$\downarrow$ & Avg\,$\downarrow$ & Gap\,$\uparrow$ \\
\midrule
& FP16 & -- & -- & 67.1 & 6.14 & 7.78 & 8.78 & 7.57 & -- \\
\midrule
\multirow{2}{*}{\rotatebox[origin=c]{90}{\scriptsize Parallel}} & symmetric correction & 25 & $\mathrm{damp}{=}0.3$ & 57.7 & 9.67 & 15.54 & 16.06 & 13.76 & $0\%$ \\
 & Denoising reg.\ (\textbf{ours}) & 25 & $\rho{=}0.25,\,\delta{=}0.0125$ & 58.8 & 9.11 & 14.25 & 14.75 & 12.70 & $119\%$ \\
\cmidrule(l){2-10}
\multirow{2}{*}{\rotatebox[origin=c]{90}{\scriptsize Seq.}} & symmetric correction & 25 & $\mathrm{damp}{=}0.5$ & 58.0 & 9.98 & 15.66 & 16.19 & 13.94 & $-20\%$ \\
 & asymmetric correction & 25 & $\beta{=}0.3333$ & 56.2 & 9.05 & 14.55 & 15.01 & 12.87 & $100\%$ \\
\bottomrule
\end{tabular}

\end{table}

\begin{figure}[H]
\centering
\input{figures/16_sweeps/w2_8b.pgf}
\caption{\textbf{QuIP\# W2A16 hyperparameter sweeps, Llama-3-8B.} Panels as in \autoref{fig:w2-1b-sweeps}.}
\label{fig:w2-8b-sweeps}
\end{figure}

\clearpage
\subsubsection{Llama-2-13B (W2A16 QuIP\#)}
\begin{table}[H]
\centering
\caption{\textbf{QuIP\# at W2A16 (2-bit), Llama-2-13B.} Columns and blocks as in \autoref{tab:w2-1b}. Here parallel-symmetric and sequential-asymmetric nearly coincide.}
\label{tab:w2-13b}
\small
\setlength{\tabcolsep}{5pt}
\begin{tabular}{@{}ll ccc ccc cc@{}}
\toprule
& Method & Grid & Setting & Acc\,$\uparrow$ & Wiki\,$\downarrow$ & C4\,$\downarrow$ & FW\,$\downarrow$ & Avg\,$\downarrow$ & Gap\,$\uparrow$ \\
\midrule
& FP16 & -- & -- & 66.5 & 4.88 & 5.80 & 6.49 & 5.73 & -- \\
\midrule
\multirow{2}{*}{\rotatebox[origin=c]{90}{\scriptsize Parallel}} & symmetric correction & 25 & $\mathrm{damp}{=}0.6$ & 61.0 & 5.90 & 7.81 & 8.12 & 7.28 & -- \\
 & Denoising reg.\ (\textbf{ours}) & 25 & $\rho{=}0.25,\,\delta{=}0.0125$ & 61.0 & 5.81 & 7.72 & 8.05 & 7.19 & -- \\
\cmidrule(l){2-10}
\multirow{2}{*}{\rotatebox[origin=c]{90}{\scriptsize Seq.}} & symmetric correction & 25 & $\mathrm{damp}{=}0.2$ & 60.6 & 5.98 & 8.10 & 8.37 & 7.48 & -- \\
 & asymmetric correction & 25 & $\beta{=}0.375$ & 59.3 & 5.81 & 7.89 & 8.16 & 7.28 & -- \\
\bottomrule
\end{tabular}

\end{table}

\begin{figure}[H]
\centering
\input{figures/16_sweeps/w2_13b.pgf}
\caption{\textbf{QuIP\# W2A16 hyperparameter sweeps, Llama-2-13B.} Panels as in \autoref{fig:w2-1b-sweeps}.}
\label{fig:w2-13b-sweeps}
\end{figure}

\clearpage

\newpage
\subsection{Injected-noise scale ablation}
\label{app:gamma-sweep}

We fix the injected-noise scale to $\gamma{=}0.05$ throughout (\cref{app:table-sweeps}). \Cref{fig:gamma-sweep} sweeps $\gamma$ for our objective on Llama-3.2-1B W3A16 (at $\lambda{=}0.4$, $\rho{=}0.3$, random Hadamard rotation), over ten log-spaced values in $[0.01,0.2]$. The best average perplexity ($16.59$) is reached at $\gamma{=}0.053$, and nearby values perform similarly. For $\gamma$ above $0.07$, the perplexity increases quickly.

\begin{figure}[H]
\centering
\input{figures/20_gamma_sweep/406.pgf}
\caption{\textbf{Injected-noise scale $\gamma$ sweep} (Llama-3.2-1B W3A16, $\lambda{=}0.4$, $\rho{=}0.3$, random Hadamard rotation). Average perplexity against $\gamma$ over ten log-spaced values in $[0.01,0.2]$.}
\label{fig:gamma-sweep}
\end{figure}

\subsection{Random noise model under forward pass modifications}
\label{app:noise-model}

For a fair comparison, we tune every method with equally sized hyperparameter sweeps (\cref{app:table-sweeps}) and use a noise scale of $\gamma{=}0.05$ throughout. \Cref{fig:iso-vs-channel} compares two noise models on Llama-3.2-1B for W3 GPTQ and W2 QuIP\#: module-wise isotropic noise $\boldsymbol{\Sigma}_m{=}\sigma_m^2\mathbf{I}$ with $\sigma_m{=}\gamma\,\mathbb{E}_t\|\mathbf{x}_{m,t}\|_\infty$, and per-channel noise $\boldsymbol{\Sigma}_m{=}\operatorname{diag}(\sigma_{m,c}^2)$ with $\sigma_{m,c}{=}\gamma\max_t|x_{m,t,c}|$.

Per-channel noise improves over the parallel baseline in both settings. Isotropic noise also works for GPTQ, but for QuIP\# it stays above the parallel baseline for every $\rho$ and every ridge $\alpha$ we tried. We attribute this to the forward pass used to estimate the noise moments: QuaRot fuses its rotations into the model before we estimate the moments, so the activations in this forward pass are already rotated. QuIP\#, in contrast, applies its incoherence transform only to the weights after we estimate moments, so the activations are not rotated and potentially contain more outliers. Per-channel noise scales with the magnitude of each channel and therefore regularizes these outlier channels more strongly. Since QuaRot already spreads such outliers, we use uniform noise as the default for W3 GPTQ, where \autoref{thm:inject-lipschitz} reduces to a uniform down-scaling of all weight columns. For W2 QuIP\#, we use per-channel noise as discussed.

\begin{figure}[H]
\centering
\input{figures/22_delta_slice/delta_slice.pgf}
\caption{\textbf{Isotropic vs.\ per-channel Gaussian noise} (Llama-3.2-1B, $\delta{=}0.025$, random Hadamard rotation, $\gamma{=}0.05$). Average perplexity against the target noise scale $\rho$; grey dashed marks the parallel baseline ($0\%$ gap) and red dotted the sequential baseline ($100\%$). Left (W3 GPTQ, $\alpha{=}0.01$): both isotropic and per-channel drop below the parallel baseline. Right (W2 QuIP\#, $\alpha{=}0.3$): only per-channel improves over the baseline; isotropic noise stays above it throughout.}
\label{fig:iso-vs-channel}
\end{figure}

\subsection{Generalization ablation: held-out evaluation}
\label{app:generalization}

Throughout the paper, every method quantizes models using a calibration set ($128$ WikiText-2 ``train'' sequences, \cref{app:setup}), which is separate from the test data we evaluate on.  In the main tables, for every method (baselines and our denoising objective alike) we select the hyperparameter configuration with the lowest WikiText-2 ``test'' perplexity, and report its perplexity averaged over three test sets: WikiText-2, C4, and FineWeb. This additional validation step is not part of our method and only used to report the best result each method can achieve, so that our method and the baselines are compared on equal terms, while the average perplexity can be compared to existing reports in the literature. We also report all results without this step separately in \cref{sec:proxy-model}. 

Nevertheless, we verify that reporting the best setting in this way does not change our conclusions. To this end, we report the avg.\ perplexity on C4 and FineWeb only (excluding the WikiText-2 test perplexity from the reported average). This way, calibration and validation use disjoint splits of WikiText-2, and evaluation uses separate datasets. Importantly, we find the qualitative conclusions remain the same.

In detail, \Cref{tab:main-wikisel} repeats \autoref{tab:main} under this protocol. The parallel-to-sequential gap that our denoising closes is preserved across the board: at $3$ bits it closes $34/21/32/12\%$ of the gap on Llama-3.2-1B/3B, Llama-3-8B, and Llama-2-13B (versus $18$--$31\%$ in the main table), and at $2$ bits it again closes a large fraction on the smaller models and matches or surpasses the sequential baseline on the larger ones. The qualitative conclusions are identical to \autoref{tab:main}. %

\begin{table}[h]
\centering
\caption{\textbf{Held-out generalization ablation.} Same objectives, models, and sweeps as \autoref{tab:main}, every configuration is selected by WikiText-2 test perplexity, but the ``Avg'' column reports the average of the two held-out sets (C4 and FineWeb), which are unseen by the selection. The conclusions match \autoref{tab:main}, showing that the qualitative improvements persist under held-out evaluation.}
\label{tab:main-wikisel}
{\scriptsize\setlength{\tabcolsep}{2.2pt}\begin{tabular}{@{}lll ccc ccc ccc ccc@{}}
\toprule
& & Quantization objective & \multicolumn{3}{c}{Llama\,1B} & \multicolumn{3}{c}{Llama\,3B} & \multicolumn{3}{c}{Llama\,8B} & \multicolumn{3}{c}{Llama\,13B} \\
\cmidrule(lr){4-6}\cmidrule(lr){7-9}\cmidrule(lr){10-12}\cmidrule(lr){13-15}
& & & Acc\,$\uparrow$ & Avg\,$\downarrow$ & Gap\,$\uparrow$ & Acc\,$\uparrow$ & Avg\,$\downarrow$ & Gap\,$\uparrow$ & Acc\,$\uparrow$ & Avg\,$\downarrow$ & Gap\,$\uparrow$ & Acc\,$\uparrow$ & Avg\,$\downarrow$ & Gap\,$\uparrow$ \\
\midrule
FP16 & & & 54.8 & 12.12 & -- & 62.6 & 9.85 & -- & 67.1 & 8.28 & -- & 66.5 & 6.15 & -- \\
\midrule
\multirow{4}{*}{\rotatebox[origin=c]{90}{\scriptsize \shortstack{\textbf{3-bit}\\\textbf{GPTQ}}}} & \multirow{2}{*}{\rotatebox[origin=c]{90}{\scriptsize Par.}} & symmetric correction & 49.4 & 19.00 & $0\%$ & 58.5 & 13.37 & $0\%$ & 63.9 & 11.10 & $0\%$ & 64.2 & 6.92 & $0\%$ \\
 & & Denoising reg.\ (\textbf{ours}) & 49.3 & 18.32 & $\mathbf{34\%}$ & 57.1 & 13.22 & $\mathbf{21\%}$ & 64.9 & 10.96 & $\mathbf{32\%}$ & 64.5 & 6.90 & $\mathbf{12\%}$ \\
\cmidrule{2-15}
 & \multirow{2}{*}{\rotatebox[origin=c]{90}{\scriptsize Seq.}} & symmetric correction & 49.8 & 18.30 & $35\%$ & 59.0 & 13.24 & $18\%$ & 63.5 & 10.94 & $36\%$ & 63.7 & 6.89 & $13\%$ \\
 & & asymmetric correction & 50.3 & 17.00 & $100\%$ & 58.4 & 12.65 & $100\%$ & 63.8 & 10.65 & $100\%$ & 64.5 & 6.72 & $100\%$ \\
\midrule[\heavyrulewidth]
\multirow{4}{*}{\rotatebox[origin=c]{90}{\scriptsize \shortstack{\textbf{2-bit}\\\textbf{QuIP\#}}}} & \multirow{2}{*}{\rotatebox[origin=c]{90}{\scriptsize Par.}} & symmetric correction & 44.3 & 31.01 & $0\%$ & 52.7 & 18.81 & $0\%$ & 57.7 & 15.80 & $0\%$ & 61.0 & 7.96 & -- \\
 & & Denoising reg.\ (\textbf{ours}) & 45.1 & 29.27 & $\mathbf{38\%}$ & 53.9 & 17.55 & $\mathbf{95\%}$ & 58.8 & 14.50 & $\mathbf{127\%}$ & 61.0 & 7.89 & -- \\
\cmidrule{2-15}
 & \multirow{2}{*}{\rotatebox[origin=c]{90}{\scriptsize Seq.}} & symmetric correction & 43.8 & 30.01 & $22\%$ & 52.6 & 18.81 & $0\%$ & 58.0 & 15.92 & $-12\%$ & 60.6 & 8.23 & -- \\
 & & asymmetric correction & 45.1 & 26.41 & $100\%$ & 51.3 & 17.48 & $100\%$ & 56.2 & 14.78 & $100\%$ & 59.3 & 8.02 & -- \\
\bottomrule
\end{tabular}
}
\end{table}

\newpage
\subsection{Sequential-prefix and 4-bit-prefix trade-offs across models}

We provide additional results for the two trade-offs shown in \autoref{fig:tradeoffs} (middle and right). \autoref{fig:splice-grid} extends the quantization-time trade-off, where the first $K$ blocks are quantized sequentially and the remaining ones in parallel, to all four models, and additionally shows the per-block signal-to-noise ratio (SNR) of the residual stream. The SNR measures how close the quantized model's residual stream stays to the full-precision one after each block, $\mathrm{SNR}{=}{-}20\log_{10}\big(\mathbb{E}\,\|\mathbf{h}_q{-}\mathbf{h}\|/\|\mathbf{h}\|\big)$ in dB, with $\mathbf{h}$ and $\mathbf{h}_q$ the full-precision and quantized residual streams (higher is better). \autoref{fig:w4prefix-bits} extends the inference-time trade-off, where a prefix of modules is promoted to $4$ bits, to Llama-3.2-1B/3B and Llama-3-8B. In both cases, the conclusions of the main text hold across all models.

\begin{figure}[h]
  \centering
  %% Creator: Matplotlib, PGF backend
%%
%% To include the figure in your LaTeX document, write
%%   \input{<filename>.pgf}
%%
%% Make sure the required packages are loaded in your preamble
%%   \usepackage{pgf}
%%
%% Also ensure that all the required font packages are loaded; for instance,
%% the lmodern package is sometimes necessary when using math font.
%%   \usepackage{lmodern}
%%
%% Figures using additional raster images can only be included by \input if
%% they are in the same directory as the main LaTeX file. For loading figures
%% from other directories you can use the `import` package
%%   \usepackage{import}
%%
%% and then include the figures with
%%   \import{<path to file>}{<filename>.pgf}
%%
%% Matplotlib used the following preamble
%%   
%%   \usepackage[utf8]{inputenc}
%%   \usepackage[T1]{fontenc}
%%   \usepackage{times}
%%   \usepackage{amsmath}
%%   \newcommand*{\mat}[1]{\boldsymbol{#1}}
%%   
%%   \makeatletter\@ifpackageloaded{underscore}{}{\usepackage[strings]{underscore}}\makeatother
%%
\begingroup%
\makeatletter%
\begin{pgfpicture}%
\pgfpathrectangle{\pgfpointorigin}{\pgfqpoint{5.500000in}{0.420000in}}%
\pgfusepath{use as bounding box, clip}%
\begin{pgfscope}%
\pgfsetbuttcap%
\pgfsetmiterjoin%
\definecolor{currentfill}{rgb}{1.000000,1.000000,1.000000}%
\pgfsetfillcolor{currentfill}%
\pgfsetlinewidth{0.000000pt}%
\definecolor{currentstroke}{rgb}{1.000000,1.000000,1.000000}%
\pgfsetstrokecolor{currentstroke}%
\pgfsetdash{}{0pt}%
\pgfpathmoveto{\pgfqpoint{0.000000in}{0.000000in}}%
\pgfpathlineto{\pgfqpoint{5.500000in}{0.000000in}}%
\pgfpathlineto{\pgfqpoint{5.500000in}{0.420000in}}%
\pgfpathlineto{\pgfqpoint{0.000000in}{0.420000in}}%
\pgfpathlineto{\pgfqpoint{0.000000in}{0.000000in}}%
\pgfpathclose%
\pgfusepath{fill}%
\end{pgfscope}%
\begin{pgfscope}%
\pgfsetroundcap%
\pgfsetroundjoin%
\pgfsetlinewidth{0.903375pt}%
\definecolor{currentstroke}{rgb}{0.298039,0.447059,0.690196}%
\pgfsetstrokecolor{currentstroke}%
\pgfsetdash{}{0pt}%
\pgfpathmoveto{\pgfqpoint{1.977129in}{0.303020in}}%
\pgfpathlineto{\pgfqpoint{2.035462in}{0.303020in}}%
\pgfpathlineto{\pgfqpoint{2.093795in}{0.303020in}}%
\pgfusepath{stroke}%
\end{pgfscope}%
\begin{pgfscope}%
\pgfsetbuttcap%
\pgfsetroundjoin%
\definecolor{currentfill}{rgb}{0.298039,0.447059,0.690196}%
\pgfsetfillcolor{currentfill}%
\pgfsetlinewidth{1.003750pt}%
\definecolor{currentstroke}{rgb}{0.298039,0.447059,0.690196}%
\pgfsetstrokecolor{currentstroke}%
\pgfsetdash{}{0pt}%
\pgfsys@defobject{currentmarker}{\pgfqpoint{-0.006944in}{-0.006944in}}{\pgfqpoint{0.006944in}{0.006944in}}{%
\pgfpathmoveto{\pgfqpoint{0.000000in}{-0.006944in}}%
\pgfpathcurveto{\pgfqpoint{0.001842in}{-0.006944in}}{\pgfqpoint{0.003608in}{-0.006213in}}{\pgfqpoint{0.004910in}{-0.004910in}}%
\pgfpathcurveto{\pgfqpoint{0.006213in}{-0.003608in}}{\pgfqpoint{0.006944in}{-0.001842in}}{\pgfqpoint{0.006944in}{0.000000in}}%
\pgfpathcurveto{\pgfqpoint{0.006944in}{0.001842in}}{\pgfqpoint{0.006213in}{0.003608in}}{\pgfqpoint{0.004910in}{0.004910in}}%
\pgfpathcurveto{\pgfqpoint{0.003608in}{0.006213in}}{\pgfqpoint{0.001842in}{0.006944in}}{\pgfqpoint{0.000000in}{0.006944in}}%
\pgfpathcurveto{\pgfqpoint{-0.001842in}{0.006944in}}{\pgfqpoint{-0.003608in}{0.006213in}}{\pgfqpoint{-0.004910in}{0.004910in}}%
\pgfpathcurveto{\pgfqpoint{-0.006213in}{0.003608in}}{\pgfqpoint{-0.006944in}{0.001842in}}{\pgfqpoint{-0.006944in}{0.000000in}}%
\pgfpathcurveto{\pgfqpoint{-0.006944in}{-0.001842in}}{\pgfqpoint{-0.006213in}{-0.003608in}}{\pgfqpoint{-0.004910in}{-0.004910in}}%
\pgfpathcurveto{\pgfqpoint{-0.003608in}{-0.006213in}}{\pgfqpoint{-0.001842in}{-0.006944in}}{\pgfqpoint{0.000000in}{-0.006944in}}%
\pgfpathlineto{\pgfqpoint{0.000000in}{-0.006944in}}%
\pgfpathclose%
\pgfusepath{stroke,fill}%
}%
\begin{pgfscope}%
\pgfsys@transformshift{2.035462in}{0.303020in}%
\pgfsys@useobject{currentmarker}{}%
\end{pgfscope}%
\end{pgfscope}%
\begin{pgfscope}%
\definecolor{textcolor}{rgb}{0.150000,0.150000,0.150000}%
\pgfsetstrokecolor{textcolor}%
\pgfsetfillcolor{textcolor}%
\pgftext[x=2.127129in,y=0.273854in,left,base]{\color{textcolor}\rmfamily\fontsize{6.000000}{7.200000}\selectfont all W4}%
\end{pgfscope}%
\begin{pgfscope}%
\pgfsetroundcap%
\pgfsetroundjoin%
\pgfsetlinewidth{0.903375pt}%
\definecolor{currentstroke}{rgb}{0.866667,0.517647,0.321569}%
\pgfsetstrokecolor{currentstroke}%
\pgfsetdash{}{0pt}%
\pgfpathmoveto{\pgfqpoint{2.451626in}{0.303020in}}%
\pgfpathlineto{\pgfqpoint{2.509959in}{0.303020in}}%
\pgfpathlineto{\pgfqpoint{2.568292in}{0.303020in}}%
\pgfusepath{stroke}%
\end{pgfscope}%
\begin{pgfscope}%
\pgfsetbuttcap%
\pgfsetmiterjoin%
\definecolor{currentfill}{rgb}{0.866667,0.517647,0.321569}%
\pgfsetfillcolor{currentfill}%
\pgfsetlinewidth{1.003750pt}%
\definecolor{currentstroke}{rgb}{0.866667,0.517647,0.321569}%
\pgfsetstrokecolor{currentstroke}%
\pgfsetdash{}{0pt}%
\pgfsys@defobject{currentmarker}{\pgfqpoint{-0.008333in}{-0.008333in}}{\pgfqpoint{0.008333in}{0.008333in}}{%
\pgfpathmoveto{\pgfqpoint{-0.008333in}{-0.008333in}}%
\pgfpathlineto{\pgfqpoint{0.008333in}{-0.008333in}}%
\pgfpathlineto{\pgfqpoint{0.008333in}{0.008333in}}%
\pgfpathlineto{\pgfqpoint{-0.008333in}{0.008333in}}%
\pgfpathlineto{\pgfqpoint{-0.008333in}{-0.008333in}}%
\pgfpathclose%
\pgfusepath{stroke,fill}%
}%
\begin{pgfscope}%
\pgfsys@transformshift{2.509959in}{0.303020in}%
\pgfsys@useobject{currentmarker}{}%
\end{pgfscope}%
\end{pgfscope}%
\begin{pgfscope}%
\definecolor{textcolor}{rgb}{0.150000,0.150000,0.150000}%
\pgfsetstrokecolor{textcolor}%
\pgfsetfillcolor{textcolor}%
\pgftext[x=2.601626in,y=0.273854in,left,base]{\color{textcolor}\rmfamily\fontsize{6.000000}{7.200000}\selectfont MLP W4}%
\end{pgfscope}%
\begin{pgfscope}%
\pgfsetroundcap%
\pgfsetroundjoin%
\pgfsetlinewidth{0.903375pt}%
\definecolor{currentstroke}{rgb}{0.505882,0.447059,0.701961}%
\pgfsetstrokecolor{currentstroke}%
\pgfsetdash{}{0pt}%
\pgfpathmoveto{\pgfqpoint{3.014123in}{0.303020in}}%
\pgfpathlineto{\pgfqpoint{3.072457in}{0.303020in}}%
\pgfpathlineto{\pgfqpoint{3.130790in}{0.303020in}}%
\pgfusepath{stroke}%
\end{pgfscope}%
\begin{pgfscope}%
\pgfsetbuttcap%
\pgfsetmiterjoin%
\definecolor{currentfill}{rgb}{0.505882,0.447059,0.701961}%
\pgfsetfillcolor{currentfill}%
\pgfsetlinewidth{1.003750pt}%
\definecolor{currentstroke}{rgb}{0.505882,0.447059,0.701961}%
\pgfsetstrokecolor{currentstroke}%
\pgfsetdash{}{0pt}%
\pgfsys@defobject{currentmarker}{\pgfqpoint{-0.009722in}{-0.009722in}}{\pgfqpoint{0.009722in}{0.009722in}}{%
\pgfpathmoveto{\pgfqpoint{0.000000in}{0.009722in}}%
\pgfpathlineto{\pgfqpoint{-0.009722in}{-0.009722in}}%
\pgfpathlineto{\pgfqpoint{0.009722in}{-0.009722in}}%
\pgfpathlineto{\pgfqpoint{0.000000in}{0.009722in}}%
\pgfpathclose%
\pgfusepath{stroke,fill}%
}%
\begin{pgfscope}%
\pgfsys@transformshift{3.072457in}{0.303020in}%
\pgfsys@useobject{currentmarker}{}%
\end{pgfscope}%
\end{pgfscope}%
\begin{pgfscope}%
\definecolor{textcolor}{rgb}{0.150000,0.150000,0.150000}%
\pgfsetstrokecolor{textcolor}%
\pgfsetfillcolor{textcolor}%
\pgftext[x=3.164123in,y=0.273854in,left,base]{\color{textcolor}\rmfamily\fontsize{6.000000}{7.200000}\selectfont v\_proj W4}%
\end{pgfscope}%
\begin{pgfscope}%
\pgfsetroundcap%
\pgfsetroundjoin%
\pgfsetlinewidth{0.903375pt}%
\definecolor{currentstroke}{rgb}{0.333333,0.658824,0.407843}%
\pgfsetstrokecolor{currentstroke}%
\pgfsetdash{}{0pt}%
\pgfpathmoveto{\pgfqpoint{1.549707in}{0.126620in}}%
\pgfpathlineto{\pgfqpoint{1.608040in}{0.126620in}}%
\pgfpathlineto{\pgfqpoint{1.666374in}{0.126620in}}%
\pgfusepath{stroke}%
\end{pgfscope}%
\begin{pgfscope}%
\pgfsetbuttcap%
\pgfsetmiterjoin%
\definecolor{currentfill}{rgb}{0.333333,0.658824,0.407843}%
\pgfsetfillcolor{currentfill}%
\pgfsetlinewidth{1.003750pt}%
\definecolor{currentstroke}{rgb}{0.333333,0.658824,0.407843}%
\pgfsetstrokecolor{currentstroke}%
\pgfsetdash{}{0pt}%
\pgfsys@defobject{currentmarker}{\pgfqpoint{-0.023570in}{-0.023570in}}{\pgfqpoint{0.023570in}{0.023570in}}{%
\pgfpathmoveto{\pgfqpoint{-0.000000in}{-0.023570in}}%
\pgfpathlineto{\pgfqpoint{0.023570in}{0.000000in}}%
\pgfpathlineto{\pgfqpoint{-0.000000in}{0.023570in}}%
\pgfpathlineto{\pgfqpoint{-0.023570in}{0.000000in}}%
\pgfpathlineto{\pgfqpoint{-0.000000in}{-0.023570in}}%
\pgfpathclose%
\pgfusepath{stroke,fill}%
}%
\begin{pgfscope}%
\pgfsys@transformshift{1.608040in}{0.126620in}%
\pgfsys@useobject{currentmarker}{}%
\end{pgfscope}%
\end{pgfscope}%
\begin{pgfscope}%
\definecolor{textcolor}{rgb}{0.150000,0.150000,0.150000}%
\pgfsetstrokecolor{textcolor}%
\pgfsetfillcolor{textcolor}%
\pgftext[x=1.699707in,y=0.097454in,left,base]{\color{textcolor}\rmfamily\fontsize{6.000000}{7.200000}\selectfont parallel GPTQ, finer groupsize}%
\end{pgfscope}%
\begin{pgfscope}%
\pgfsetbuttcap%
\pgfsetroundjoin%
\pgfsetlinewidth{1.505625pt}%
\definecolor{currentstroke}{rgb}{0.768627,0.305882,0.321569}%
\pgfsetstrokecolor{currentstroke}%
\pgfsetdash{{5.550000pt}{2.400000pt}}{0.000000pt}%
\pgfpathmoveto{\pgfqpoint{2.860612in}{0.126620in}}%
\pgfpathlineto{\pgfqpoint{2.918945in}{0.126620in}}%
\pgfpathlineto{\pgfqpoint{2.977279in}{0.126620in}}%
\pgfusepath{stroke}%
\end{pgfscope}%
\begin{pgfscope}%
\definecolor{textcolor}{rgb}{0.150000,0.150000,0.150000}%
\pgfsetstrokecolor{textcolor}%
\pgfsetfillcolor{textcolor}%
\pgftext[x=3.010612in,y=0.097454in,left,base]{\color{textcolor}\rmfamily\fontsize{6.000000}{7.200000}\selectfont seq.\ GPTAQ (W3, gs\(\displaystyle {=}{-}1\))}%
\end{pgfscope}%
\end{pgfpicture}%
\makeatother%
\endgroup%
\\[-6pt]
  \begin{minipage}[b]{0.32\textwidth}
    \centering
    \input{figures/10_w4prefix_module/w4prefix_zoom_app_1b.pgf}
  \end{minipage}\hfill
  \begin{minipage}[b]{0.32\textwidth}
    \centering
    \input{figures/10_w4prefix_module/w4prefix_zoom_app_3b.pgf}
  \end{minipage}\hfill
  \begin{minipage}[b]{0.32\textwidth}
    \centering
    \input{figures/10_w4prefix_module/w4prefix_zoom_app_8b.pgf}
  \end{minipage}
  \caption{\textbf{Effective bit-width vs.\ perplexity} on Llama-3.2-1B/3B and Llama-3-8B (W3A16). Same plot as \autoref{fig:tradeoffs} (right) for Llama-3.2-3B, extended to Llama-3.2-1B and Llama-3-8B.}
  \label{fig:w4prefix-bits}
\end{figure}

\begin{figure}[h]
  \centering
  \includegraphics[width=\linewidth]{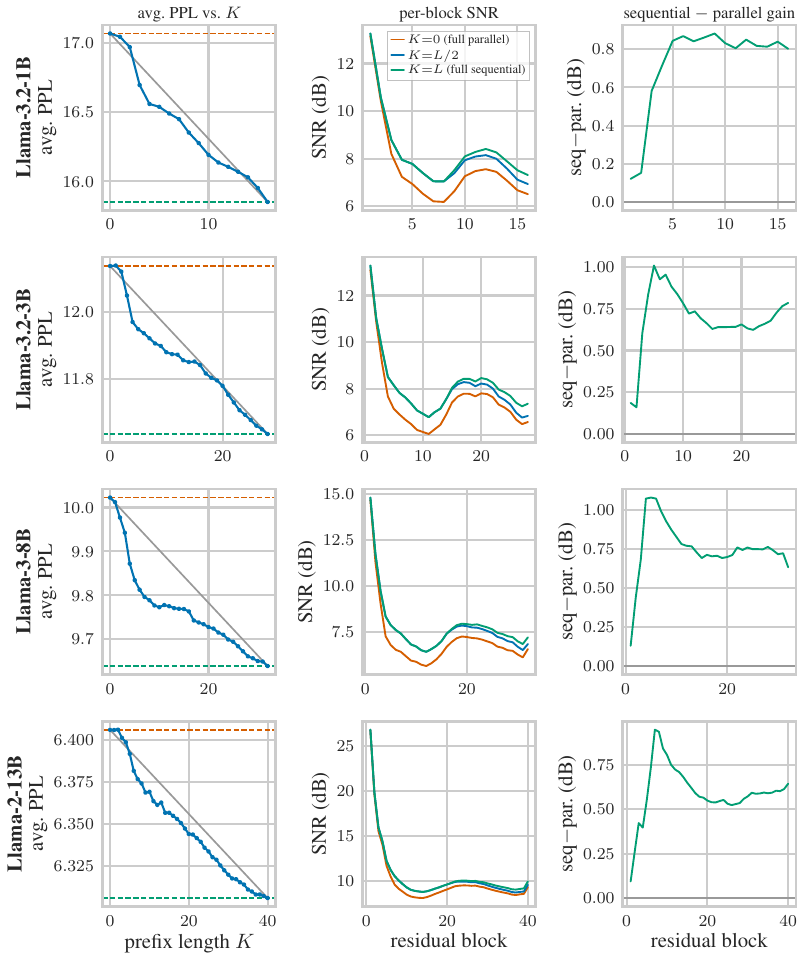}
  \caption{\textbf{Per-model sequential-prefix diagnostics} (W3A16), one row per model. The first $K$ blocks are quantized sequentially (GPTAQ, $\beta{=}0.1$) and the remaining ones in parallel (GPTQ), at fixed hyperparameters. Left: avg.\ perplexity vs.\ $K$; the gray line interpolates linearly between $K{=}0$ and $K{=}L$, so a curve below it means the gains are front-loaded. Middle: per-block residual-stream SNR for $K\in\{0,L/2,L\}$. Right: per-block SNR gain of sequential over parallel quantization.}
  \label{fig:splice-grid}
\end{figure}

\clearpage
\newpage
\subsection{Runtime measurements and projected multi-GPU time}
\label{app:runtimes}

\textbf{Measurement details.} All runtimes in this paper are wall-clock times measured on a single GPU. Our implementation runs parallel quantization on one GPU in two stages. First, it runs the calibration forward passes that collect the statistics of every layer (for our denoising, the clean pass and the noisy pass that accumulates the noise moments). Second, it computes each layer's pretransform and rounds its weights. In the parallel schedule, the second stage of a layer depends only on the statistics of that layer, so all layers could be processed at the same time. On a single GPU, however, we process them one after another, so the measured time is the sum over all layers. The sequential schedule instead alternates between the two stages: each layer is rounded only after all previous layers are quantized, since its statistics depend on their quantized outputs.

\textbf{Single-GPU runtimes.} \Cref{tab:runtimes} reports the mean per-run wall-clock time on a single NVIDIA A100 GPU, split into quantization (Quant) and perplexity evaluation (Eval), for each method row of the W3A16 tables. For our methods, Quant covers the full parallel pipeline (proxy quantization, moment estimation, the weight pretransform, and rounding). Learned-rotation runs load a precomputed rotation, and we report the time for learning rotations separately in \autoref{app:setup}. For Llama-2-13B, which is close to the memory of a single $40$\,GB A100, we use numerically exact CPU offloading of the residual activation streams for configurations that do not otherwise fit (the learned-rotation sequential GPTAQ), which adds a small overhead to the reported time. \Cref{tab:quipsharp-runtimes} reports the same breakdown for the QuIP\# W2A16 sweeps, on a single A100 for Llama-3.2-1B/3B and Llama-3-8B and on a single H200 for Llama-2-13B, so the $13$B column is not directly comparable to the others.

\textbf{Overhead of denoising depends on the solver and the noise model.} Gaussian-only denoising needs no preliminary quantization pass, but its noisy forward pass, moment estimation, and pretransform add cost on a single GPU. How much this matters depends on the rounding solver. With GPTQ, rounding is cheap, so this overhead dominates: on Llama-3-8B, our denoising takes $29.4$ minutes, compared to $11.6$ for parallel GPTQ and $24.1$ for sequential GPTAQ. With QuIP\#, rounding is roughly an order of magnitude more expensive (e.g.\ ${\approx}24$ vs.\ ${\approx}2.5$ minutes on Llama-3.2-1B), so the same overhead is small in relative terms (e.g.\ $183.2$ vs.\ $173.3$ minutes for the sequential baseline on Llama-3-8B). The proxy-based variants (W3 and W3+Gauss rows) additionally require a preliminary parallel quantization pass, unless an existing quantized model is reused (\cref{sec:proxy-model}). In all cases, the per-layer processing remains independent.

\textbf{Projected multi-GPU time.} We do not run multi-GPU quantization experiments. Instead, \autoref{fig:quant-cost} and \autoref{fig:quant-cost-gpus} report a projection computed from measured single-GPU QuIP\# times. For these figures, we re-measure the Llama-2-13B times on a single A100, so that all models are timed on the same GPU. We split the measured quantization time into the time of the forward passes, $T_{\mathrm{fwd}}$, and the time of the rounding, $T_{\mathrm{round}}$, and approximate the time of a parallel method on $N$ GPUs as
\begin{equation*}
T_{\mathrm{par}}(N)=T_{\mathrm{fwd}}+T_{\mathrm{round}}/N .
\end{equation*}
For the sequential schedule, we keep the single-GPU time, $T_{\mathrm{seq}}(N)=T_{\mathrm{seq}}(1)$, since its layers cannot be rounded concurrently. Note that the projection assumes that the layers are split evenly across GPUs and ignores communication overhead, so it is an upper bound on the achievable speed-up. Since the rounding of different layers is independent given the collected statistics, distributing it across GPUs does not change the quantized model.

\Cref{fig:quant-cost-gpus} shows the projection for Llama-2-13B (W2A16 QuIP\#). On a single GPU, all three methods take a comparable ${\approx}290$--$316$ minutes. The projected time of the parallel methods decreases roughly as $1/N$, so that on $8$ GPUs our denoising takes ${\approx}41$ minutes compared to ${\approx}306$ for the sequential baseline (a ${\approx}7.5\times$ reduction), at a small overhead over parallel QuIP\#. Across the models in \autoref{fig:quant-cost}, the projected reduction on $8$ GPUs ranges from ${\approx}7.1\times$ (Llama-3.2-1B) to ${\approx}7.5\times$ (Llama-2-13B).

\begin{figure}[h]
\centering
\input{figures/23_quant_cost_gpus/quant_cost_gpus.pgf}
\caption{\textbf{Projected quantization time vs.\ number of GPUs on Llama-2-13B (W2A16 QuIP\#).} Projection from measured single-GPU times. Parallel quantization and our parallel denoising distribute the per-block rounding across GPUs, whereas the sequential asymmetric correction cannot be parallelized.}
\label{fig:quant-cost-gpus}
\end{figure}

\begin{table}[h]
\centering
\caption{\textbf{Per-row runtime on a single A100} (minutes, mean over each sweep), split into quantization (Quant) and perplexity evaluation (Eval), for Llama-3.2-1B/3B, Llama-3-8B, and Llama-2-13B at W3A16. Rot.: whether the rotation is a learned SpinQuant rotation ($\times$ means random rotation).}
\footnotesize
\label{tab:runtimes}
\begin{tabular}{@{}ll cc cc cc cc@{}}
\toprule
Method & Rot. & \multicolumn{2}{c}{Llama\,1B} & \multicolumn{2}{c}{Llama\,3B} & \multicolumn{2}{c}{Llama\,8B} & \multicolumn{2}{c}{Llama\,13B} \\
\cmidrule(lr){3-4}\cmidrule(lr){5-6}\cmidrule(lr){7-8}\cmidrule(lr){9-10}
& & Quant & Eval & Quant & Eval & Quant & Eval & Quant & Eval \\
\midrule
\multirow{2}{*}{Parallel GPTQ} & $\times$ & 2.5 & 1.0 & 5.6 & 1.9 & 11.6 & 3.4 & 16.6 & 5.2 \\
 & \checkmark & 2.5 & 1.0 & 5.6 & 1.9 & 11.6 & 3.4 & 16.6 & 5.2 \\
\multirow{2}{*}{Gaussian \textit{(ours)}} & $\times$ & 5.7 & 0.9 & 13.1 & 1.9 & 29.4 & 3.5 & 43.2 & 5.2 \\
 & \checkmark & 5.8 & 1.0 & 13.1 & 1.9 & 29.4 & 3.5 & 43.3 & 5.2 \\
\multirow{2}{*}{W3 proxy \textit{(ours)}} & $\times$ & 8.2 & 1.0 & 19.0 & 1.9 & 42.4 & 3.5 & 61.2 & 5.2 \\
 & \checkmark & 8.2 & 1.0 & 19.0 & 1.9 & 42.5 & 3.5 & 61.0 & 5.2 \\
\multirow{2}{*}{W3 proxy + Gauss.\ \textit{(ours)}} & $\times$ & 8.2 & 1.0 & 19.0 & 1.9 & 42.4 & 3.5 & 60.8 & 5.2 \\
 & \checkmark & 8.3 & 1.0 & 19.0 & 1.9 & 41.7 & 3.5 & 60.5 & 5.2 \\
\midrule
\multirow{2}{*}{Sequential GPTQ} & $\times$ & 3.2 & 1.0 & 6.9 & 1.9 & 14.1 & 3.4 & 20.5 & 5.2 \\
 & \checkmark & 3.2 & 1.0 & 7.0 & 1.9 & 14.1 & 3.4 & 20.4 & 5.2 \\
\multirow{2}{*}{Sequential GPTAQ} & $\times$ & 5.2 & 1.0 & 11.4 & 1.9 & 24.1 & 3.5 & 34.7 & 5.2 \\
 & \checkmark & 5.2 & 1.0 & 11.5 & 1.9 & 24.1 & 3.5 & 36.9 & 5.2 \\
\bottomrule
\end{tabular}

\end{table}

\begin{table}[h]
\centering
\caption{\textbf{Per-schedule runtime for the QuIP\# W2A16 sweeps} (minutes, mean over each sweep). A100 for $1$B/$3$B/$8$B, H200 for $13$B. QuIP\#'s rounding is more expensive than GPTQ's (\cref{tab:runtimes}).}
\label{tab:quipsharp-runtimes}
\begin{tabular}{@{}l cc cc cc cc@{}}
\toprule
Schedule & \multicolumn{2}{c}{Llama\,1B} & \multicolumn{2}{c}{Llama\,3B} & \multicolumn{2}{c}{Llama\,8B} & \multicolumn{2}{c}{Llama\,13B} \\
\cmidrule(lr){2-3}\cmidrule(lr){4-5}\cmidrule(lr){6-7}\cmidrule(lr){8-9}
& Quant & Eval & Quant & Eval & Quant & Eval & Quant & Eval \\
\midrule
Parallel symmetric & 23.6 & 0.5 & 64.7 & 1.1 & 163.3 & 2.3 & 138.9 & 1.4 \\
Denoising reg.\ (ours) & 27.3 & 0.5 & 73.9 & 1.1 & 183.2 & 2.3 & 145.1 & 1.4 \\
\midrule
Sequential symmetric & 23.8 & 0.5 & 65.3 & 1.1 & 164.1 & 2.3 & 140.1 & 1.4 \\
Sequential asymmetric & 25.4 & 0.5 & 69.2 & 1.1 & 173.3 & 2.3 & 142.1 & 1.4 \\
\bottomrule
\end{tabular}

\end{table}

\textbf{Forward-pass cost of the noise injection is negligible.} Beyond the per-layer rounding, every parallel method runs one clean full-precision forward pass over the calibration set to build its moments. Our denoising objective needs one additional forward pass with the injected (per-channel Gaussian) noise to estimate the noise statistics. \Cref{tab:forward-times} reports both passes per model on a single A100 ($n{=}128$ calibration sequences, seqlen $2048$). The noisy pass is only $1.25$--$1.40\times$ the clean one (it additionally samples the noise and accumulates the noise and cross-covariance statistics). The overhead of our extra pass is a few seconds and negligible next to the per-layer rounding that dominates the quantization time in \autoref{tab:runtimes} and \autoref{tab:quipsharp-runtimes}. 

\begin{table}[h]
\centering
\caption{\textbf{Clean vs.\ noisy calibration forward-pass time on a single A100} (seconds, $n{=}128$ sequences, seqlen $2048$). Clean is the full-precision forward pass every parallel method runs to build its moments. Noisy is the additional pass with injected per-channel Gaussian noise that our denoising objective needs. Overhead $=$ Noisy $-$ Clean is the extra cost of our second pass.}
\label{tab:forward-times}
\begin{tabular}{@{}l cccc@{}}
\toprule
Model & Clean (s) & Noisy (s) & Overhead (s) & Noisy/Clean \\
\midrule
Llama-3.2-1B & 5.1 & 7.1 & 2.0 & 1.40 \\
Llama-3.2-3B & 12.3 & 16.6 & 4.3 & 1.35 \\
Llama-3-8B & 25.7 & 33.3 & 7.6 & 1.29 \\
Llama-2-13B & 42.9 & 53.5 & 10.6 & 1.25 \\
\bottomrule
\end{tabular}

\end{table}

\clearpage
\newpage
\section{Detailed experimental setup and hyperparameters}
\label{app:setup}

\subsection{Fixed setting}
\label{app:fixed-setting}
Unless a hyperparameter is explicitly swept, all experiments use the following fixed setting.

\textbf{Weight grid and activations.} Symmetric per-channel weights (groupsize $-1$) with MSE clipping; $16$-bit activations and KV cache throughout (weight-only quantization).

\textbf{Rounding solvers and their ridges.} We use two rounding solvers, both regularized with the standard damping convention, i.e.\ a ridge proportional to the mean Hessian diagonal $\mathrm{tr}(\mathbf{H})/d$:
\begin{itemize}
  \item \textbf{GPTQ} (W3/W4): OBS-style column rounding with Hessian damping $\texttt{percdamp}\cdot\mathrm{tr}(\mathbf{H})/d$, block size $128$, no activation-order permutation. Default $\texttt{percdamp}{=}0.01$.
  \item \textbf{QuIP\#} (W2): randomized-Hadamard incoherence with $E_8$-lattice block-LDLQ (block $8$) and $10$ refinement iterations. The trace-normalized LDL metric is ridged by $\sigma_{\mathrm{reg}}$. We do not use recovery fine-tuning.
\end{itemize}
Our denoising objective (\autoref{prop:pretransform-exact}) reuses these same solvers as a drop-in rounder on the pretransformed weights $\tilde{\mathbf{W}}{=}\mathbf{W}\mathbf{F}$ in the metric $\mathbf{G}$. Its proximity ridge $\alpha$ follows the same convention as \texttt{percdamp} ($\alpha\cdot\mathrm{tr}(\cdot)/d$ of the metric being inverted; default $\alpha{=}0.01$), so the two are directly comparable. Because $\mathbf{G}$ already carries the $\alpha\mathbf{I}$ ridge, both solvers round in exactly $\mathbf{G}$ and add no further damping, avoiding a double ridge: the GPTQ sweep inverts $\mathbf{G}$ directly rather than re-damping with \texttt{percdamp}, and the QuIP\# solver sets $\sigma_{\mathrm{reg}}{=}10^{-6}$ instead of its default \texttt{percdamp}.

\textbf{Scheduling.} Sequential methods quantize modules in the order q/k/v\,$\to$\,o\,$\to$\,up/gate\,$\to$\,down. Parallel methods quantize every block from full-precision activations in a single pass.

\textbf{Noise model.} Injected Gaussian noise at scale $\gamma{=}0.05$: isotropic for GPTQ and per-channel for QuIP\#. For an ablation of the noise model, see \cref{app:noise-model}.

\textbf{Calibration.} $128$ WikiText-2 sequences of $2048$ tokens; no BOS token.

\textbf{Evaluation.} Perplexity on WikiText-2, C4, and FineWeb ($2048$-token sequences). Zero-shot accuracy on ARC-easy and ARC-challenge \citep{clark2018arc}, BoolQ \citep{clark2019boolq}, HellaSwag \citep{zellers2019hellaswag}, OBQA \citep{mihaylov2018CanAS}, PIQA \citep{bisk2020piqa}, SIQA \citep{sap2019socialiqa}, and WinoGrande \citep{sakaguchi2021winogrande}, following the setup of \citet{liu2025spinquant}.

\textbf{Determinism and hardware.} We use deterministic kernels and one single GPU per run, a single $40$\,GB NVIDIA A100 unless stated differently. 13B QuIP\# runs on a single NVIDIA H200 (consistently for all experiments). We fix the random seed for all experiments, including the QuaRot random rotations, so that all runs are deterministic and reproducible.

\subsection{QuaRot rotations and learned SpinQuant rotations}

The rotations described here apply to the GPTQ solver (W3/W4). The QuIP\# solver (W2) carries its own randomized-Hadamard incoherence transform (\cref{app:fixed-setting}) and is therefore run without an external rotation, using neither the QuaRot random rotation nor the learned SpinQuant rotation.

\paragraph{Fixed random rotations.} By default we apply the QuaRot randomized-Hadamard rotation ($R_1$, $R_2$, $R_4$) before weight quantization. We fix the random seed used to draw the Hadamard matrices, so that the same random rotations are used across all experiments (we ablate the randomization separately in \autoref{fig:randomness}, see \cref{app:randomness} for details).

\paragraph{Learned rotation.} We also replace the fixed random-Hadamard $R_1$ (and per-head $R_2$) by a learned SpinQuant-style rotation, optimized on the Stiefel manifold with Cayley~SGD against a straight-through weight-fake-quant loss. We use $100$ steps, learning rate $0.3$ with momentum $0.9$, a cosine schedule with $10\%$ linear warm-up, MSE weight clipping (matching the deployment grid), and a training sequence length of $1024$. Only $R_1$ and $R_2$ are learned. The online Hadamard rotation $R_4$ remains a fixed random Hadamard. The learned rotation depends only on the model and its weight grid/bit-width (not on the downstream quantizer or the parallel/sequential schedule), so we compute it once per model and reuse it across all experiments. The $1$B and $3$B rotations are learned on a single A100. For Llama-3-8B and Llama-2-13B we learn it on a single NVIDIA H200 ($141$\,GB) using gradient-checkpointing.

We learn rotations for each model and bit-width once and use the resulting rotation matrices as the fixed rotation. For W3, learning the rotation took $36$ minutes for Llama-3.2-1B (A100), $83$ minutes for Llama-3.2-3B (A100), $73$ minutes for Llama-3-8B (H200), and $122$ minutes for Llama-2-13B (H200). For W4, it took $33$ minutes for Llama-3.2-1B (A100), $77$ minutes for Llama-3.2-3B (A100), $76$ minutes for Llama-3-8B (H200), and $117$ minutes for Llama-2-13B (H200).

\subsection{Main tables hyperparameter sweeps}
\label{app:table-sweeps}

This section gives the hyperparameter sweeps behind the detailed per-model tables:
\begin{itemize}[leftmargin=0.5cm]
  \item \autoref{tab:w3-1b}, \autoref{tab:w3-3b}, \autoref{tab:w3-8b}, \autoref{tab:w3-13b}: Llama-3.2-1B/3B, Llama-3-8B, and Llama-2-13B at W3A16.
  \item \autoref{tab:w4-1b}: Llama-3.2-1B at W4A16.
  \item \autoref{tab:w2-1b}, \autoref{tab:w2-3b}, \autoref{tab:w2-8b}, \autoref{tab:w2-13b}: Llama-3.2-1B/3B, Llama-3-8B, Llama-2-13B at W2A16.
\end{itemize}

The rounding solver depends on the bit-width: the W3/W4 tables use the GPTQ \citep{frantar2023optq} solver (the same solver as the GPTQ baselines) and the W2 tables use the QuIP\# ($E_8$) solver \citep{tseng2024quipsharp}. For W3/W4 (GPTQ) we additionally repeat every configuration once with the fixed random rotation and once with the fixed SpinQuant-learned rotation.

\textbf{Baselines (W3/W4, GPTQ).}
\begin{itemize}[leftmargin=0.5cm]
  \item \textbf{Parallel / Sequential GPTQ (\texttt{percdamp} sweep).} GPTQ damping \texttt{percdamp} over $25$ values, $\{0.001,0.002,0.005,0.01,0.02,0.05\}\cup\{0.1,0.15,0.2,\dots,1.0\}$.
  \item \textbf{Sequential GPTAQ ($\beta$ sweep).} Correction strength $\beta$ over $25$ values evenly spaced in $[0,1]$, at \texttt{percdamp}${=}0.01$ ($\beta$ is called \texttt{alpha} in the GPTAQ code; see \cref{sec:relation}).
\end{itemize}

\textbf{Baselines (W2, QuIP\#).} The same schedules with the QuIP\# solver, where \texttt{percdamp} plays the role of the LDL ridge $\sigma_{\mathrm{reg}}$:
\begin{itemize}[leftmargin=0.5cm]
  \item \textbf{Parallel / Sequential QuIP\# (\texttt{percdamp} sweep).} The same $25$-value \texttt{percdamp} grid; the sequential run keeps the asymmetric correction off.
  \item \textbf{Sequential asymmetric QuIP\# ($\beta$ sweep).} Correction strength $\beta$ over $25$ values evenly spaced in $[0,1]$ at \texttt{percdamp}${=}0.3$ (the symmetric-sequential optimum).
\end{itemize}

\textbf{Our denoising objective.} We run the same $5\times5$ grid over the two knobs of our objective, $\rho\in\{0.25,0.35,0.45,0.55,0.65\}$ and $\delta\in\{0.00625,0.0125,0.025,0.0375,0.05\}$ with $\lambda=\rho+\delta$ ($25$ configurations per proxy). For W3/W4 we use the proximity ridge $\alpha{=}0.01$ and the following parallel proxies for the upstream error $\boldsymbol{\eta}$:
\begin{itemize}[leftmargin=0.5cm]
  \item \textbf{Gaussian noise injection.} Injected, propagated isotropic Gaussian noise at $\gamma{=}0.05$ (see ablation in \cref{app:noise-model}).
  \item \textbf{Parallel-GPTQ (W3A16) upstream error.} Upstream weight-quantization error of a parallel-GPTQ W3A16 model, no injected noise.
  \item \textbf{Both.} The parallel-GPTQ W3A16 upstream error together with injected Gaussian noise at $\gamma{=}0.05$.
\end{itemize}
For W2 we use $\alpha{=}0.3$ and the per-channel Gaussian proxy. We reuse the same fixed random seed for the injected Gaussian noise across all experiments (as for the fixed rotations), and ablate this randomness separately in \autoref{fig:randomness}, see \cref{app:randomness} for details.

\subsection{Randomness ablation}
\label{app:randomness}

We ablate the randomization of each method on Llama-3.2-1B at W3A16. For the two baselines the only source of randomness is the Hadamard rotation, which we resample over $100$ seeds. We additionally report a single run with a fixed learned rotation. For our denoising objective, we resample the injected noise over $100$ seeds, either together with resampled Hadamard rotations or at the fixed learned rotation.

\begin{itemize}[leftmargin=0.5cm]
  \item \textbf{Parallel GPTQ.} At fixed \texttt{percdamp}${=}0.1$, we run $100$ independent runs with different random seeds for the random Hadamard rotation.
  \item \textbf{Sequential GPTAQ.} At fixed \texttt{percdamp}${=}0.01$ and $\beta=0.5$, we run $100$ independent runs with different random seeds for the random Hadamard rotation. Note that GPTAQ is a special case of our denoising objective with $\boldsymbol{\eta}_{seq}$ the sequential upstream error and $\lambda=1$.
  \item \textbf{Our denoising objective with Gaussian noise injection (random rotation, random noise).} At fixed $\alpha{=}0.01$, $\rho=0.5$, $\delta=0.05$, we run $100$ independent runs with different random seeds for the injected Gaussian noise ($\gamma{=}0.05$; no quantized proxy, i.e.\ Gaussian noise only), and with different random seeds for the random Hadamard rotation.
  \item \textbf{Our denoising objective with Gaussian noise injection (learned rotation, random noise).} At fixed $\alpha{=}0.01$, $\rho=0.5$, $\delta=0.05$, and the fixed learned rotation, we run $100$ independent runs with different random seeds for the injected Gaussian noise ($\gamma{=}0.05$; no quantized proxy, i.e.\ Gaussian noise only).
\end{itemize}

\subsection{Frobenius-norm analysis setup}
\label{app:weight-outlier}

\Cref{sec:wiener-shrink} analyzes how our pretransform $\tilde{\mathbf{W}}=\mathbf{W}\mathbf{F}$ reshapes the weights, comparing the Frobenius norm before and after the correction (\autoref{fig:wiener-shrink}, left). We use the fixed setting of \cref{app:setup} on Llama-3.2-1B, with $\rho=0.5$, $\delta=0.05$ (so $\lambda=\rho+\delta=0.55$).

\clearpage
\section{GPTAQ and QEP as special cases}
\label{sec:relation}

In this section, we relate our denoising objective to the two recent works that aim to correct for upstream quantization noise using sequential quantization: GPTAQ \citep{li2025gptaq} and QEP \citep{arai2025qep}. We show that both methods can be expressed as special cases of our denoising objective when passing the sequential upstream error, and we discuss the differences in how they motivate the correction strength $\beta$. We show the closed forms of both methods in \autoref{prop:gptaq-qep}, and then show that they are both special cases of our denoising objective in \autoref{cor:gptaq}.

\subsection{Background on GPTAQ and QEP}

\textbf{GPTAQ} \citep{li2025gptaq} is a sequential method that separates the two roles the input plays in the objective, $\min_{\hat{\mW}} \mathbb{E}\|\mW\vx - \hat{\mW}\tilde{\vx}\|^2$: the target $\mW\vx$ uses the clean activations $\vx$ of the original model, while the quantized weights $\hat{\mW}$ act on the perturbed activations $\tilde{\vx}$ propagated from the previously quantized layers. Each layer thus corrects for the upstream quantization noise it will actually receive, at the price of a sequential pass to measure $\tilde{\vx}$.

\textbf{QEP} \citep{arai2025qep} shares this goal but realizes the correction differently: rather than folding it into GPTQ's column sweep, it first applies a closed-form (ordinary least-squares) pre-transform to the weights and then runs a standard GPTQ round on the transformed weights.

GPTAQ and QEP solve the same problem. Both keep GPTQ's damping ridge and additionally introduce a ``correction strength'' $\beta\in[0,1]$ scaling the asymmetric correction: GPTAQ's \texttt{alpha}, fixed at $0.25$ in its released code, and QEP's \texttt{perccorr}, $0.5$ by default.

\begin{proposition}[Closed forms of GPTAQ and QEP]
\label{prop:gptaq-qep}
Let $\tilde{\mathbf{x}}$ be the layer input once all previous layers are quantized, $\boldsymbol{\eta}\triangleq\tilde{\mathbf{x}}-\mathbf{x}$ the sequential upstream error, assume $\mathbf{D}\triangleq\mathbb{E}[\tilde{\mathbf{x}}\tilde{\mathbf{x}}^\top]$ is invertible, and let $\beta\in[0,1]$ be the correction strength. Both GPTAQ and QEP approximately solve the rounding problem $\min_{\hat{\mathbf{W}}\in\mathcal{G}_b}\operatorname{tr}\big[(\hat{\mathbf{W}}-\mathbf{W}^\star)\,\mathbf{D}\,(\hat{\mathbf{W}}-\mathbf{W}^\star)^\top\big]$ (\autoref{eq:gptq-operator} with metric $\mathbf{D}$) with target
\begin{equation*}
\begin{aligned}
\mathbf{W}^\star
&=\mathbf{W}+\beta\,\mathbf{W}\,\mathbb{E}\big[(\mathbf{x}-\tilde{\mathbf{x}})\tilde{\mathbf{x}}^\top\big]\,\mathbf{D}^{-1}\\
&=(1-\beta)\,\mathbf{W}+\beta\,\mathbf{W}\,\mathbb{E}\big[\mathbf{x}\tilde{\mathbf{x}}^\top\big]\,\mathbf{D}^{-1}.
\end{aligned}
\end{equation*}
GPTAQ realizes this correction inside GPTQ's column sweep, whereas QEP applies it as an explicit weight pretransform followed by a standard GPTQ round.
\end{proposition}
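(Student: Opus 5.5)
The proof treats the two methods separately and in each case rewrites the objective as a metric-weighted rounding problem (\autoref{eq:gptq-operator}) by completing the square.

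\textbf{The $\beta$-interpolated objective.} I read both methods as minimizing the interpolated asymmetric objective
\[
\mathbb{E}\|\hat{\mathbf{W}}\tilde{\mathbf{x}}-\mathbf{W}((1-\beta)\tilde{\mathbf{x}}+\beta\mathbf{x})\|^2 .
\]
This objective is the symmetric one at $\beta=0$ and GPTAQ's fully asymmetric one at $\beta=1$. Expanding in $\hat{\mathbf{W}}$ gives the quadratic form $\operatorname{tr}[\hat{\mathbf{W}}\mathbf{D}\hat{\mathbf{W}}^\top]$ and the linear term $-2\operatorname{tr}[\hat{\mathbf{W}}\,\mathbb{E}[\tilde{\mathbf{x}}((1-\beta)\tilde{\mathbf{x}}+\beta\mathbf{x})^\top]\mathbf{W}^\top]$. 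Since $\mathbf{D}$ is invertible, completing the square yields $\operatorname{tr}[(\hat{\mathbf{W}}-\mathbf{W}^\star)\mathbf{D}(\hat{\mathbf{W}}-\mathbf{W}^\star)^\top]$ plus a constant independent of $\hat{\mathbf{W}}$, with
\[
\mathbf{W}^\star=\mathbf{W}\,\mathbb{E}[((1-\beta)\tilde{\mathbf{x}}+\beta\mathbf{x})\tilde{\mathbf{x}}^\top]\mathbf{D}^{-1}=(1-\beta)\mathbf{W}+\beta\mathbf{W}\,\mathbb{E}[\mathbf{x}\tilde{\mathbf{x}}^\top]\mathbf{D}^{-1}.
\]
This is the second displayed form. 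The first form follows by writing $\mathbf{W}=\mathbf{W}\mathbf{D}\mathbf{D}^{-1}=\mathbf{W}\,\mathbb{E}[\tilde{\mathbf{x}}\tilde{\mathbf{x}}^\top]\mathbf{D}^{-1}$ and collecting the $\beta$ terms into $\mathbb{E}[(\mathbf{x}-\tilde{\mathbf{x}})\tilde{\mathbf{x}}^\top]$. The damping ridge is simply absorbed into $\mathbf{D}$.

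\textbf{QEP.} I would match this form directly against QEP's definition. QEP computes the least-squares correction $\mathbf{W}\,\mathbb{E}[(\mathbf{x}-\tilde{\mathbf{x}})\tilde{\mathbf{x}}^\top]\mathbf{D}^{-1}$ (written in its code via $\boldsymbol{\delta}=\mathbf{x}-\tilde{\mathbf{x}}$), scales it by \texttt{perccorr}${}=\beta$, adds it to $\mathbf{W}$, and then runs GPTQ with Hessian $\mathbf{D}$. This is literally the first form, followed by $\operatorname{Round}^b_{\mathbf{D}}$.

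\textbf{GPTAQ.} This step needs an argument about the column sweep, and I expect it to be the main obstacle. GPTAQ's update adds, at each column step, an asymmetric term built from $\mathbf{W}\,\mathbb{E}[(\mathbf{x}-\tilde{\mathbf{x}})\tilde{\mathbf{x}}^\top]$, scaled by \texttt{alpha}${}=\beta$ and propagated through the Cholesky factor of $\mathbf{D}^{-1}$. I would show by induction over columns that this sweep coincides with GPTQ's OBS sweep on the target $\mathbf{W}^\star$ in metric $\mathbf{D}$. The key fact is that GPTQ's error feedback through the upper Cholesky factor $\mathbf{U}$ of $\mathbf{D}^{-1}$ is linear in the target. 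Consequently, sweeping $\mathbf{W}^\star=\mathbf{W}+\beta\mathbf{P}\mathbf{D}^{-1}$, with $\mathbf{P}=\mathbf{W}\,\mathbb{E}[(\mathbf{x}-\tilde{\mathbf{x}})\tilde{\mathbf{x}}^\top]$, splits into the sweep on $\mathbf{W}$ plus a per-column contribution of $\mathbf{P}\mathbf{D}^{-1}$. Via $\mathbf{D}^{-1}=\mathbf{U}^\top\mathbf{U}$, that contribution is exactly GPTAQ's $\mathbf{P}\mathbf{U}^\top\mathbf{U}$-type term added before each column is rounded.

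The delicate points are the bookkeeping of which columns are already frozen and the exact normalization conventions in GPTAQ's released code. I would handle both by comparing against GPTAQ's stated closed-form optimum: in the unconstrained (continuous-grid) case their derivation gives exactly $\mathbf{W}^\star$. Since both procedures are the same greedy OBS recursion on the same quadratic, they then agree as approximate solvers.
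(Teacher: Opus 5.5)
Your QEP step and the algebra are fine. Deriving $\mathbf{W}^\star$ by completing the square on $\mathbb{E}\|\hat{\mathbf{W}}\tilde{\mathbf{x}}-\mathbf{W}((1-\beta)\tilde{\mathbf{x}}+\beta\mathbf{x})\|^2$ is a clean alternative to the paper's appeal to QEP's Prop.~5.1; it is exactly the paper's later corollary with $\lambda=1$, $\rho=1-\beta$.

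\textbf{The gap is the GPTAQ step: the sweep-level identity you plan to prove by induction is false.} GPTAQ's correction is causal. When column $k$ is rounded, its asymmetric mismatch is compensated only by the still-free columns $j>k$. In Cholesky form this is the strictly upper-triangular mask in GPTAQ's correction matrix $\beta\,\operatorname{triu}_1(\Delta\mathbf{X}\tilde{\mathbf{X}}^\top\mathbf{U}^\top)\,\mathbf{U}$, whose rows are added as each column is processed. GPTQ run on $\mathbf{W}^\star$ instead starts with every column shifted by the unmasked $\beta\,\mathbf{W}\,\mathbb{E}[(\mathbf{x}-\tilde{\mathbf{x}})\tilde{\mathbf{x}}^\top]\mathbf{U}^\top\mathbf{U}$, including the first column before it is rounded.
\begin{itemize}
\item \emph{The base case already breaks.} GPTAQ rounds $\mathbf{W}_{:,1}$, while GPTQ on $\mathbf{W}^\star$ rounds $\mathbf{W}^\star_{:,1}$.
\item \emph{Your continuous-grid fallback fails too.} Take $d=1$ and no rounding. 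GPTAQ has no later column to push the correction onto, so it returns $\mathbf{W}$. The continuous minimizer is $\mathbf{W}^\star\neq\mathbf{W}$ whenever $\beta>0$ and $\mathbb{E}[(x-\tilde{x})\tilde{x}]\neq 0$.
\end{itemize}
So the implemented sweep is not the greedy OBS recursion on the $\mathbf{W}^\star$-quadratic. Linearity of GPTQ's error feedback in the target is true but does not rescue the argument, because the two procedures feed in different corrections.

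What can be established is the weaker claim, and the paper's proof makes it only by citation. GPTAQ uses the same Hessian $\mathbf{D}$ and the same correction term $\Delta\mathbf{X}\tilde{\mathbf{X}}^\top$, scaled by $\beta$, inside the column sweep. In that sense it is an approximate solver of the rounding problem with target $\mathbf{W}^\star$. ``Approximately'' here must absorb this causal, triangular approximation, not only the greedy rounding. Drop the induction and state the GPTAQ part at that level.

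A smaller point: do not absorb the damping into $\mathbf{D}$. Passing between the two displayed forms uses $\mathbf{W}=\mathbf{W}\,\mathbb{E}[\tilde{\mathbf{x}}\tilde{\mathbf{x}}^\top]\mathbf{D}^{-1}$, which needs $\mathbf{D}=\mathbb{E}[\tilde{\mathbf{x}}\tilde{\mathbf{x}}^\top]$ exactly. With $\mathbf{D}+\epsilon\mathbf{I}$ the two forms differ by $\beta\epsilon\,\mathbf{W}(\mathbf{D}+\epsilon\mathbf{I})^{-1}$.
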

\begin{proof}
At $\beta=1$, the target $\mathbf{W}^\star$ is the continuous minimizer of the asymmetric objective $\mathbb{E}\|\hat{\mathbf{W}}\tilde{\mathbf{x}}-\mathbf{W}\mathbf{x}\|^2$ \citep[Prop.~5.1]{arai2025qep}. With $\boldsymbol{\delta}=\mathbf{X}-\tilde{\mathbf{X}}$ and $\hat{\mathbf{H}}=\tilde{\mathbf{X}}\tilde{\mathbf{X}}^\top$ stacking the calibration tokens as columns, QEP writes the scaled target as $\mathbf{W}+\beta\,\mathbf{W}\boldsymbol{\delta}\tilde{\mathbf{X}}^\top\hat{\mathbf{H}}^{-1}$ \citep[Eq.~(6)]{arai2025qep} and rounds it in the metric $\hat{\mathbf{H}}$ \citep[Eq.~(5)]{arai2025qep}; the $1/n$ normalizations of $\mathbb{E}[(\mathbf{x}-\tilde{\mathbf{x}})\tilde{\mathbf{x}}^\top]$ and $\mathbf{D}$ cancel. GPTAQ scales the same correction term ($\Delta\mathbf{X}\tilde{\mathbf{X}}^\top$ in its notation) by $\beta$ within its column sweep, with $\mathbf{D}$ as its Hessian \citep{li2025gptaq}. The second form follows by linearity of the expectation and $\mathbb{E}[\tilde{\mathbf{x}}\tilde{\mathbf{x}}^\top]=\mathbf{D}$:
\begin{equation*}
\begin{aligned}
\mathbf{W}+\beta\,\mathbf{W}\,\mathbb{E}\big[(\mathbf{x}-\tilde{\mathbf{x}})\tilde{\mathbf{x}}^\top\big]\,\mathbf{D}^{-1}
&=\mathbf{W}+\beta\,\mathbf{W}\big(\mathbb{E}[\mathbf{x}\tilde{\mathbf{x}}^\top]-\mathbb{E}[\tilde{\mathbf{x}}\tilde{\mathbf{x}}^\top]\big)\mathbf{D}^{-1}\\
&=\mathbf{W}+\beta\,\mathbf{W}\big(\mathbb{E}[\mathbf{x}\tilde{\mathbf{x}}^\top]-\mathbf{D}\big)\mathbf{D}^{-1}\\
&=(1-\beta)\,\mathbf{W}+\beta\,\mathbf{W}\,\mathbb{E}[\mathbf{x}\tilde{\mathbf{x}}^\top]\,\mathbf{D}^{-1}.
\end{aligned}
\end{equation*}
\end{proof}
Here $\beta$ linearly interpolates in weight space between the full-precision weights ($\beta=0$, symmetric sequential GPTQ) and the fully corrected solution ($\beta=1$). 
But the two works motivate this parameter $\beta$ differently. QEP presents $\beta$ as a tunable propagation mechanism that prevents overfitting the correction to the calibration set and reports that a fixed $\beta=1/2$ works well in most cases.
GPTAQ does not present its $\beta$ in the paper at all; it appears only in the released code, justified in a GitHub issue as an empirical choice.\footnote{\url{https://github.com/Intelligent-Computing-Lab-Panda/GPTAQ/issues/4}} We argue that neither provides a principled motivation for this parameter, and find that the mechanism of their empirical correction is better understood as noise scaling in our framework, which we show below.

\subsection{GPTAQ and QEP as special cases of activation denoising}

We show that both GPTAQ and QEP are special cases of our denoising objective (\autoref{eq:wiener-objective}) when passing the sequential upstream error $\boldsymbol{\eta}=\tilde{\mathbf{x}}-\mathbf{x}$, with $\lambda{=}1$ (and $\alpha{=}0$ for simplicity). The correction strength $\beta$ of GPTAQ and QEP is exactly $1{-}\rho$, where $\rho$ is the noise scaling parameter of our denoising objective.

\begin{corollary}[GPTAQ and QEP as special cases of activation denoising]
\label{cor:gptaq}
Let $\boldsymbol{\eta}=\tilde{\mathbf{x}}-\mathbf{x}$ be the sequential upstream error and assume $\mathbf{D}=\mathbb{E}[\tilde{\mathbf{x}}\tilde{\mathbf{x}}^\top]$ is invertible. Then our denoising objective (\autoref{eq:wiener-objective}) with noise $\boldsymbol{\eta}$ at $\lambda=1$ (and $\alpha=0$ for simplicity) satisfies, for every $\hat{\mathbf{W}}$,
\begin{equation*}
\mathbb{E}\big\|\hat{\mathbf{W}}\tilde{\mathbf{x}}-\mathbf{W}(\mathbf{x}+\rho\,\boldsymbol{\eta})\big\|^2
=\operatorname{tr}\!\big[(\hat{\mathbf{W}}-\mathbf{W}\mathbf{F})\,\mathbf{D}\,(\hat{\mathbf{W}}-\mathbf{W}\mathbf{F})^\top\big]+c,
\end{equation*}
where $c$ does not depend on $\hat{\mathbf{W}}$, so both objectives have the same minimizers over $\mathcal{G}_b$, and
\begin{equation*}
\begin{aligned}
\mathbf{W}\mathbf{F}
&=\mathbf{W}+(1-\rho)\,\mathbf{W}\,\mathbb{E}\big[(\mathbf{x}-\tilde{\mathbf{x}})\tilde{\mathbf{x}}^\top\big]\,\mathbf{D}^{-1}\\
&=\rho\,\mathbf{W}+(1-\rho)\,\mathbf{W}\,\mathbb{E}\big[\mathbf{x}\tilde{\mathbf{x}}^\top\big]\,\mathbf{D}^{-1}.
\end{aligned}
\end{equation*}
Comparing with \autoref{prop:gptaq-qep}, this is exactly the GPTAQ and QEP rounding problem with correction strength $\beta=1-\rho$.
\end{corollary}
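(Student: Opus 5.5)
The plan is to specialize the algebra of \autoref{prop:pretransform-exact} to $\lambda=1$, $\alpha=0$, and then rewrite the resulting filter in terms of $\tilde{\mathbf{x}}$. The pretransform theorem is stated for $\alpha>0$, only to guarantee invertibility of $\mathbf{G}$. So I will not invoke it as a black box; I will repeat its completing-the-square step directly, using the assumed invertibility of $\mathbf{D}$ instead.

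\textbf{Step 1: expand the objective.} With $\lambda=1$ the quantized layer receives $\mathbf{x}+\boldsymbol{\eta}=\tilde{\mathbf{x}}$, so the objective is $\mathbb{E}\|\hat{\mathbf{W}}\tilde{\mathbf{x}}-\mathbf{W}\mathbf{y}\|^2$ with $\mathbf{y}\triangleq\mathbf{x}+\rho\boldsymbol{\eta}$. Expanding gives
\[
\operatorname{tr}[\hat{\mathbf{W}}\mathbf{D}\hat{\mathbf{W}}^\top]-2\operatorname{tr}[\hat{\mathbf{W}}\,\mathbb{E}[\tilde{\mathbf{x}}\mathbf{y}^\top]\mathbf{W}^\top]+\operatorname{tr}[\mathbf{W}\,\mathbb{E}[\mathbf{y}\mathbf{y}^\top]\mathbf{W}^\top].
\]
Set $\mathbf{P}\triangleq\mathbb{E}[\mathbf{y}\tilde{\mathbf{x}}^\top]$ and $\mathbf{W}\mathbf{F}\triangleq\mathbf{W}\mathbf{P}\mathbf{D}^{-1}$. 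Completing the square gives $\operatorname{tr}[(\hat{\mathbf{W}}-\mathbf{W}\mathbf{F})\mathbf{D}(\hat{\mathbf{W}}-\mathbf{W}\mathbf{F})^\top]$. The leftover term $c=\operatorname{tr}[\mathbf{W}(\mathbb{E}[\mathbf{y}\mathbf{y}^\top]-\mathbf{P}\mathbf{D}^{-1}\mathbf{P}^\top)\mathbf{W}^\top]$ does not depend on $\hat{\mathbf{W}}$. Symmetry of $\mathbf{D}$ is what makes the cross term match. Since the two objectives differ by a constant, they have the same minimizers over $\mathcal{G}_b$.

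\textbf{Step 2: identify $\mathbf{F}$ and consistency.} In the moments of the paper, $\mathbf{D}=\mathbf{H}+\mathbf{C}+\mathbf{C}^\top+\mathbf{N}$, which equals $\mathbf{G}$ at $\lambda=1$, $\alpha=0$. Similarly, $\mathbf{P}=\mathbb{E}[(\mathbf{x}+\rho\boldsymbol{\eta})(\mathbf{x}+\boldsymbol{\eta})^\top]=\mathbf{H}+\mathbf{C}+\rho\mathbf{C}^\top+\rho\mathbf{N}$. This is exactly the first factor of $\mathbf{F}$ in \autoref{eq:wiener-filter}, so the result is consistent with the theorem.

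\textbf{Step 3: the two closed forms.} Write $\mathbf{y}=\rho\tilde{\mathbf{x}}+(1-\rho)\mathbf{x}$. Then $\mathbf{P}=\rho\mathbf{D}+(1-\rho)\mathbb{E}[\mathbf{x}\tilde{\mathbf{x}}^\top]$, which immediately gives
\[
\mathbf{W}\mathbf{F}=\rho\mathbf{W}+(1-\rho)\mathbf{W}\,\mathbb{E}[\mathbf{x}\tilde{\mathbf{x}}^\top]\mathbf{D}^{-1}.
\]
Rewriting $\rho\mathbf{W}=\mathbf{W}-(1-\rho)\mathbf{W}\mathbf{D}\mathbf{D}^{-1}$ gives the first form, with $\mathbb{E}[(\mathbf{x}-\tilde{\mathbf{x}})\tilde{\mathbf{x}}^\top]$. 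This mirrors the linearity argument in the proof of \autoref{prop:gptaq-qep}. Comparing the metric ($\mathbf{D}$ in both) and the target term by term then yields $\beta=1-\rho$.

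\textbf{Main obstacle.} The only delicate point is the bookkeeping of transposes in the cross term. One must confirm that the completed square uses $\mathbf{P}\mathbf{D}^{-1}$ and not $\mathbf{P}^\top\mathbf{D}^{-1}$, i.e.\ that the target is $\mathbf{W}\,\mathbb{E}[\mathbf{y}\tilde{\mathbf{x}}^\top]\mathbf{D}^{-1}$. I would check this against the $\rho=1$ case, where $\mathbf{F}=\mathbf{I}$ must hold.
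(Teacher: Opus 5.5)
Your proposal is correct and follows essentially the same route as the paper. Both complete the square directly in the metric $\mathbf{D}=\mathbf{G}$, which needs only symmetry and invertibility rather than $\alpha>0$, identify the first factor of $\mathbf{F}$ as $\mathbb{E}[(\mathbf{x}+\rho\boldsymbol{\eta})\tilde{\mathbf{x}}^\top]$, and split it as $\rho\mathbf{D}+(1-\rho)\mathbb{E}[\mathbf{x}\tilde{\mathbf{x}}^\top]$ to read off both closed forms. Your convex-combination step $\mathbf{x}+\rho\boldsymbol{\eta}=\rho\tilde{\mathbf{x}}+(1-\rho)\mathbf{x}$ is a compact version of the paper's regrouping $\mathbf{H}+\mathbf{C}+\rho(\mathbf{C}^\top+\mathbf{N})$.
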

\begin{proof}
Setting $\lambda=1$ and $\alpha=0$ in the metric of \autoref{prop:pretransform-exact} gives
\begin{equation*}
\mathbf{G}=\mathbf{H}+\alpha\mathbf{I}+\lambda^2\mathbf{N}+\lambda(\mathbf{C}+\mathbf{C}^\top)
=\mathbf{H}+\mathbf{C}+\mathbf{C}^\top+\mathbf{N}.
\end{equation*}
Expanding $\mathbf{D}$ with $\tilde{\mathbf{x}}=\mathbf{x}+\boldsymbol{\eta}$ and using linearity of the expectation,
\begin{equation*}
\mathbf{D}=\mathbb{E}\big[(\mathbf{x}+\boldsymbol{\eta})(\mathbf{x}+\boldsymbol{\eta})^\top\big]
=\mathbb{E}[\mathbf{x}\mathbf{x}^\top]+\mathbb{E}[\mathbf{x}\boldsymbol{\eta}^\top]+\mathbb{E}[\boldsymbol{\eta}\mathbf{x}^\top]+\mathbb{E}[\boldsymbol{\eta}\boldsymbol{\eta}^\top]
=\mathbf{H}+\mathbf{C}+\mathbf{C}^\top+\mathbf{N},
\end{equation*}
so $\mathbf{G}=\mathbf{D}$. 

Likewise, the filter becomes $\mathbf{F}=(\mathbf{H}+\alpha\mathbf{I}+\lambda\mathbf{C}+\rho\mathbf{C}^\top+\rho\lambda\mathbf{N})\mathbf{G}^{-1}=(\mathbf{H}+\mathbf{C}+\rho\,\mathbf{C}^\top+\rho\,\mathbf{N})\mathbf{D}^{-1}$. The stated identity is the completed square from the proof of \autoref{prop:pretransform-exact}, which only uses that $\mathbf{G}$ is symmetric and invertible; this holds for $\alpha=0$ since $\mathbf{G}=\mathbf{D}$ is invertible by assumption.

It remains to write $\mathbf{W}\mathbf{F}$ in terms of the moments of $\tilde{\mathbf{x}}$. We group the numerator of $\mathbf{F}$ into the terms without and with $\rho$, $\mathbf{F}=\big(\mathbf{H}+\mathbf{C}+\rho\,(\mathbf{C}^\top+\mathbf{N})\big)\mathbf{D}^{-1}$. The first group is $\mathbb{E}[\mathbf{x}\tilde{\mathbf{x}}^\top]=\mathbb{E}[\mathbf{x}(\mathbf{x}+\boldsymbol{\eta})^\top]=\mathbf{H}+\mathbf{C}$, and by the expansion of $\mathbf{D}$ above, the second group is $\mathbf{C}^\top+\mathbf{N}=\mathbf{D}-\mathbb{E}[\mathbf{x}\tilde{\mathbf{x}}^\top]$. Hence
\begin{equation*}
\begin{aligned}
\mathbf{W}\mathbf{F}
&=\mathbf{W}\big(\mathbb{E}[\mathbf{x}\tilde{\mathbf{x}}^\top]+\rho\,(\mathbf{D}-\mathbb{E}[\mathbf{x}\tilde{\mathbf{x}}^\top])\big)\mathbf{D}^{-1}\\
&=\rho\,\mathbf{W}+(1-\rho)\,\mathbf{W}\,\mathbb{E}[\mathbf{x}\tilde{\mathbf{x}}^\top]\,\mathbf{D}^{-1},
\end{aligned}
\end{equation*}
which is the second form. To obtain the first form, we split $\rho\,\mathbf{W}=\mathbf{W}-(1-\rho)\,\mathbf{W}$ and collect the terms with $(1-\rho)$:
\begin{equation*}
\begin{aligned}
\rho\,\mathbf{W}+(1-\rho)\,\mathbf{W}\,\mathbb{E}[\mathbf{x}\tilde{\mathbf{x}}^\top]\,\mathbf{D}^{-1}
&=\mathbf{W}-(1-\rho)\,\mathbf{W}+(1-\rho)\,\mathbf{W}\,\mathbb{E}[\mathbf{x}\tilde{\mathbf{x}}^\top]\,\mathbf{D}^{-1}\\
&=\mathbf{W}+(1-\rho)\,\mathbf{W}\big(\mathbb{E}[\mathbf{x}\tilde{\mathbf{x}}^\top]-\mathbf{D}\big)\mathbf{D}^{-1}\\
&=\mathbf{W}+(1-\rho)\,\mathbf{W}\,\mathbb{E}\big[(\mathbf{x}-\tilde{\mathbf{x}})\tilde{\mathbf{x}}^\top\big]\,\mathbf{D}^{-1},
\end{aligned}
\end{equation*}
where the second step uses $\mathbf{W}=\mathbf{W}\mathbf{D}\mathbf{D}^{-1}$ and the last step uses $\mathbf{D}=\mathbb{E}[\tilde{\mathbf{x}}\tilde{\mathbf{x}}^\top]$ and linearity of the expectation.
\end{proof}
The correction strength of GPTAQ and QEP therefore corresponds to our target noise scale: full correction $\beta=1$ is full denoising ($\rho{=}0$), and $\beta=0$ is symmetric sequential GPTQ ($\rho=\lambda=1$). We believe our noise scaling perspective is more principled than the empirical motivations of GPTAQ and QEP, and we find that it generalizes to other upstream error proxies beyond the sequential upstream error.

\clearpage\newpage
\section{Full proofs and derivations}
\label{app:proofs}

\providecommand{\lemmaautorefname}{Lemma}\providecommand{\corollaryautorefname}{Corollary}%
\newenvironment{restatedtheorem}[1]
  {\renewcommand{\thetheorem}{\ref{#1}}\renewcommand{\theHtheorem}{restated.#1}\begin{theorem}}
  {\end{theorem}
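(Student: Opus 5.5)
The statement is the restated \autoref{prop:pretransform-exact}. I will prove it by expanding the quadratic objective in $\hat{\mathbf{W}}$ and completing the square. Write $\mathbf{u}\triangleq\mathbf{x}+\lambda\boldsymbol{\eta}$ for the input seen by the quantized weights and $\mathbf{v}\triangleq\mathbf{x}+\rho\boldsymbol{\eta}$ for the target input. Assume the second moments of $(\mathbf{x},\boldsymbol{\eta})$ are finite, so that $\mathbf{H},\mathbf{C},\mathbf{N}$ exist.

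\emph{Step 1: the two moment matrices.} Using $\|\mathbf{a}\|^2=\operatorname{tr}(\mathbf{a}\mathbf{a}^\top)$ and linearity of expectation and trace, the reconstruction term becomes
\begin{equation*}
\operatorname{tr}\big(\hat{\mathbf{W}}\,\mathbb{E}[\mathbf{u}\mathbf{u}^\top]\,\hat{\mathbf{W}}^\top\big)-2\operatorname{tr}\big(\mathbf{W}\,\mathbb{E}[\mathbf{v}\mathbf{u}^\top]\,\hat{\mathbf{W}}^\top\big)+\operatorname{tr}\big(\mathbf{W}\,\mathbb{E}[\mathbf{v}\mathbf{v}^\top]\,\mathbf{W}^\top\big).
\end{equation*}
Expanding the two relevant moments with the definitions of $\mathbf{H},\mathbf{C},\mathbf{N}$ gives
\begin{equation*}
\mathbb{E}[\mathbf{u}\mathbf{u}^\top]=\mathbf{H}+\lambda(\mathbf{C}+\mathbf{C}^\top)+\lambda^2\mathbf{N},\qquad
\mathbb{E}[\mathbf{v}\mathbf{u}^\top]=\mathbf{H}+\lambda\mathbf{C}+\rho\mathbf{C}^\top+\rho\lambda\mathbf{N}.
\end{equation*}
The second identity needs care with the order of the factors: the $\lambda$ attaches to $\mathbf{C}=\mathbb{E}[\mathbf{x}\boldsymbol{\eta}^\top]$ because the $\boldsymbol{\eta}$ comes from $\mathbf{u}^\top$, and the $\rho$ attaches to $\mathbf{C}^\top$.

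\emph{Step 2: add the ridge.} The proximity term expands as
\begin{equation*}
\alpha\|\hat{\mathbf{W}}-\mathbf{W}\|_F^2=\alpha\operatorname{tr}(\hat{\mathbf{W}}\hat{\mathbf{W}}^\top)-2\alpha\operatorname{tr}(\mathbf{W}\hat{\mathbf{W}}^\top)+\text{const}.
\end{equation*}
Adding it to Step 1, the full objective equals
\begin{equation*}
\operatorname{tr}(\hat{\mathbf{W}}\mathbf{G}\hat{\mathbf{W}}^\top)-2\operatorname{tr}(\mathbf{W}\mathbf{K}\hat{\mathbf{W}}^\top)+c,
\end{equation*}
where $\mathbf{G}$ is exactly as in \autoref{eq:wiener-filter}, $\mathbf{K}\triangleq\mathbf{H}+\alpha\mathbf{I}+\lambda\mathbf{C}+\rho\mathbf{C}^\top+\rho\lambda\mathbf{N}$, and $c$ does not depend on $\hat{\mathbf{W}}$.

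\emph{Step 3: $\mathbf{G}$ is invertible.} $\mathbf{G}$ is symmetric, and $\mathbf{G}=\mathbb{E}[\mathbf{u}\mathbf{u}^\top]+\alpha\mathbf{I}\succeq\alpha\mathbf{I}\succ0$ because $\alpha>0$. This is the only place the hypothesis $\alpha>0$ is used. It makes $\mathbf{F}=\mathbf{K}\mathbf{G}^{-1}$ well defined and $\mathbf{G}$ a valid positive-definite metric for \autoref{eq:gptq-operator}.

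\emph{Step 4: complete the square.} Expand the target objective with $\tilde{\mathbf{W}}=\mathbf{W}\mathbf{F}$, using $\mathbf{G}=\mathbf{G}^\top$ to merge the two cross terms:
\begin{equation*}
\operatorname{tr}\big[(\hat{\mathbf{W}}-\tilde{\mathbf{W}})\mathbf{G}(\hat{\mathbf{W}}-\tilde{\mathbf{W}})^\top\big]=\operatorname{tr}(\hat{\mathbf{W}}\mathbf{G}\hat{\mathbf{W}}^\top)-2\operatorname{tr}(\mathbf{W}\mathbf{F}\mathbf{G}\hat{\mathbf{W}}^\top)+\operatorname{tr}(\tilde{\mathbf{W}}\mathbf{G}\tilde{\mathbf{W}}^\top).
\end{equation*}
Since $\mathbf{F}\mathbf{G}=\mathbf{K}$, this matches the Step 2 expression up to a constant independent of $\hat{\mathbf{W}}$. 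The two objectives therefore have the same minimizers over any set, in particular over $\mathcal{G}_b$, which proves the equivalence.

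The main obstacle is only bookkeeping: keeping $\mathbf{K}$ non-symmetric with the correct placement of $\lambda\mathbf{C}$ versus $\rho\mathbf{C}^\top$, and checking that the $\mathbf{F}$ in the statement is $\mathbf{K}\mathbf{G}^{-1}$ and not $\mathbf{K}^\top\mathbf{G}^{-1}$. I would settle this by writing every cross term consistently in the form $\operatorname{tr}(\mathbf{W}\,\cdot\,\hat{\mathbf{W}}^\top)$.
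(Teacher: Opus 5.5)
Your proof is correct and follows the paper's argument essentially step for step. The paper also writes the objective in terms of $\mathbf{u},\mathbf{v}$ and expands it as $\operatorname{tr}[\hat{\mathbf{W}}\mathbf{G}\hat{\mathbf{W}}^\top]-2\operatorname{tr}[\mathbf{W}\mathbf{B}\hat{\mathbf{W}}^\top]+\text{const}$, where its $\mathbf{B}$ is your $\mathbf{K}$, and then completes the square using $\tilde{\mathbf{W}}\mathbf{G}=\mathbf{W}\mathbf{B}$ and the symmetry of $\mathbf{G}$; it additionally records the constant explicitly as $\operatorname{tr}[\mathbf{W}(\mathbf{D}-\mathbf{F}\mathbf{G}\mathbf{F}^\top)\mathbf{W}^\top]$ with $\mathbf{D}=\mathbb{E}[\mathbf{v}\mathbf{v}^\top]+\alpha\mathbf{I}$, which is not needed for the equivalence.
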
\addtocounter{theorem}{-1}}

In this section we restate the theorems of the main text and provide their full proofs.

\subsection{Setup and notation}
\label{app:proofs-setup}

Let $\mathbf{W}\in\sR^{d_{\mathrm{out}}\times d}$ be the full-precision weights of a linear layer with input dimension $d$. Let $\mathbf{x}\in\sR^{d}$ be the clean input activation and $\boldsymbol{\eta}\in\sR^{d}$ the input noise, jointly distributed with finite second moments; all expectations are over this joint distribution (for empirical moments, the uniform distribution over the $n$ calibration tokens). We abbreviate the noisy input and the target input of the denoising objective in \autoref{eq:wiener-objective} as
\begin{equation*}
\mathbf{u}\;\triangleq\;\mathbf{x}+\lambda\boldsymbol{\eta},\qquad \mathbf{v}\;\triangleq\;\mathbf{x}+\rho\boldsymbol{\eta},
\end{equation*}
so that the objective reads
\begin{equation}
J(\hat{\mathbf{W}})\;\triangleq\;\mathbb{E}\big\|\hat{\mathbf{W}}\mathbf{u}-\mathbf{W}\mathbf{v}\big\|^2+\alpha\,\|\hat{\mathbf{W}}-\mathbf{W}\|_F^2 .
\label{eq:app-objective}
\end{equation}
With $\mathbf{H}=\mathbb{E}[\mathbf{x}\mathbf{x}^\top]$, $\mathbf{C}=\mathbb{E}[\mathbf{x}\boldsymbol{\eta}^\top]$ (hence $\mathbb{E}[\boldsymbol{\eta}\mathbf{x}^\top]=\mathbf{C}^\top$) and $\mathbf{N}=\mathbb{E}[\boldsymbol{\eta}\boldsymbol{\eta}^\top]$, expanding the outer products gives the second moments
\begin{align}
\mathbb{E}[\mathbf{u}\mathbf{u}^\top] &= \mathbf{H}+\lambda(\mathbf{C}+\mathbf{C}^\top)+\lambda^2\mathbf{N}, \label{eq:app-uu}\\
\mathbb{E}[\mathbf{v}\mathbf{u}^\top] &= \mathbf{H}+\lambda\mathbf{C}+\rho\mathbf{C}^\top+\rho\lambda\mathbf{N}, \label{eq:app-vu}\\
\mathbb{E}[\mathbf{v}\mathbf{v}^\top] &= \mathbf{H}+\rho(\mathbf{C}+\mathbf{C}^\top)+\rho^2\mathbf{N}. \label{eq:app-vv}
\end{align}
In terms of these, the two matrices of \autoref{eq:wiener-filter} are
\begin{equation}
\mathbf{G}=\mathbb{E}[\mathbf{u}\mathbf{u}^\top]+\alpha\mathbf{I},\qquad
\mathbf{B}\;\triangleq\;\mathbb{E}[\mathbf{v}\mathbf{u}^\top]+\alpha\mathbf{I}=\mathbf{H}+\alpha\mathbf{I}+\lambda\mathbf{C}+\rho\mathbf{C}^\top+\rho\lambda\mathbf{N},
\qquad \mathbf{F}=\mathbf{B}\mathbf{G}^{-1}.
\label{eq:app-GB}
\end{equation}

\textbf{Assumption.} Throughout, we assume $\alpha>0$, as in \autoref{prop:pretransform-exact}. Since $\mathbb{E}[\mathbf{u}\mathbf{u}^\top]$ is a second-moment matrix and therefore positive semidefinite, $\mathbf{G}$ is positive definite. In particular, $\mathbf{G}$ is symmetric and invertible, so $\mathbf{F}$ is well defined.

\subsection{Proof of \autoref{prop:pretransform-exact}}
\label{app:proofs-pretransform}

\begin{restatedtheorem}{prop:pretransform-exact}[Pretransform-round equivalence]
For $\alpha>0$, the optimization problem in \autoref{eq:wiener-objective} is equivalent to
\begin{equation*}
\min_{\hat{\mathbf{W}}\in\mathcal{G}_b}\ \operatorname{tr}\!\big[(\hat{\mathbf{W}}-\mathbf{W}\mathbf{F})\,\mathbf{G}\,(\hat{\mathbf{W}}-\mathbf{W}\mathbf{F})^\top\big],
\end{equation*}
where
\begin{equation*}
\begin{aligned}
\mathbf{G}&\;\triangleq\;\mathbf{H}+\alpha\mathbf{I}+\lambda^2\mathbf{N}+\lambda(\mathbf{C}+\mathbf{C}^\top),\\
\mathbf{F}&\;\triangleq\;\big(\mathbf{H}+\alpha\mathbf{I}+\lambda\mathbf{C}+\rho\mathbf{C}^\top+\rho\lambda\mathbf{N}\big)\,\mathbf{G}^{-1}.
\end{aligned}
\end{equation*}
Here $\mathbf{H}=\mathbb{E}[\mathbf{x}\mathbf{x}^\top]$, $\mathbf{C}=\mathbb{E}[\mathbf{x}\boldsymbol{\eta}^\top]$, and $\mathbf{N}=\mathbb{E}[\boldsymbol{\eta}\boldsymbol{\eta}^\top]$ are assumed finite, with expectations over the joint distribution of $(\mathbf{x},\boldsymbol{\eta})$.
\end{restatedtheorem}
\begin{proof}
Using the notation of \autoref{app:proofs-setup}, let $\mathbf{D}\triangleq\mathbb{E}[\mathbf{v}\mathbf{v}^\top]+\alpha\mathbf{I}$. Expanding the reconstruction and proximity terms in \autoref{eq:app-objective} and using the invariance of trace under transposition yields
\begin{equation*}
J(\hat{\mathbf{W}})
=\operatorname{tr}\!\big[\hat{\mathbf{W}}\mathbf{G}\hat{\mathbf{W}}^\top\big]
-2\operatorname{tr}\!\big[\mathbf{W}\mathbf{B}\hat{\mathbf{W}}^\top\big]
+\operatorname{tr}\!\big[\mathbf{W}\mathbf{D}\mathbf{W}^\top\big].
\end{equation*}
Set $\tilde{\mathbf{W}}=\mathbf{W}\mathbf{F}$. Since $\mathbf{F}=\mathbf{B}\mathbf{G}^{-1}$, we have $\tilde{\mathbf{W}}\mathbf{G}=\mathbf{W}\mathbf{B}$. Using this equality and the symmetry of $\mathbf{G}$ to complete the square gives
\begin{equation*}
\begin{aligned}
J(\hat{\mathbf{W}})
&=\operatorname{tr}\!\big[(\hat{\mathbf{W}}-\tilde{\mathbf{W}})\mathbf{G}(\hat{\mathbf{W}}-\tilde{\mathbf{W}})^\top\big]+c,\\
c&=\operatorname{tr}\!\big[\mathbf{W}\big(\mathbf{D}-\mathbf{F}\mathbf{G}\mathbf{F}^\top\big)\mathbf{W}^\top\big].
\end{aligned}
\end{equation*}
The constant $c$ does not depend on $\hat{\mathbf{W}}$. This identity holds for every real candidate, hence for every candidate on the same grid $\mathcal{G}_b$. Adding a constant does not change the set of minimizers, proving the stated equivalence.
\end{proof}

\subsection{Proof of \autoref{thm:inject-lipschitz}}
\label{app:proofs-injection}

We consider a single layer with propagated upstream noise $\boldsymbol{\eta}^{\uparrow}$ and independent local injection $\boldsymbol{\xi}$, so the total noise is $\boldsymbol{\eta}=\boldsymbol{\eta}^{\uparrow}+\boldsymbol{\xi}$. The proof uses only $\mathbb{E}[\boldsymbol{\xi}]=\mathbf{0}$, $\mathbb{E}[\boldsymbol{\xi}\boldsymbol{\xi}^\top]=\boldsymbol{\Sigma}=\operatorname{diag}(\sigma_1^2,\dots,\sigma_d^2)$, and independence of $\boldsymbol{\xi}$ from $(\mathbf{x},\boldsymbol{\eta}^{\uparrow})$; Gaussianity is not needed.

\begin{restatedtheorem}{thm:inject-lipschitz}[Local noise injection acts as a magnitude-reducing ridge]
Let $\boldsymbol{\eta}=\boldsymbol{\eta}^{\uparrow}+\boldsymbol{\xi}$, where $\boldsymbol{\xi}\sim\mathcal{N}(\mathbf{0},\boldsymbol{\Sigma})$ is independent of $(\mathbf{x},\boldsymbol{\eta}^{\uparrow})$ and $\boldsymbol{\Sigma}=\operatorname{diag}(\sigma_1^2,\dots,\sigma_d^2)$. Up to a constant independent of the weights, Eq.\ \ref{eq:wiener-objective} equals its reconstruction term with $\boldsymbol{\eta}=\boldsymbol{\eta}^{\uparrow}$ plus the modified ridge regularization
\begin{equation*}
\sum_c\big(\alpha+\lambda^2\sigma_{c}^2\big)\,\big\|\hat{\mathbf{W}}_{:,c}-s_{c}\,\mathbf{W}_{:,c}\big\|_2^2, \qquad s_{c}=\frac{\alpha+\rho\lambda\sigma_{c}^2}{\alpha+\lambda^2\sigma_{c}^2},
\end{equation*}
assuming $\alpha+\lambda^2\sigma_c^2>0$ for every $c$. For $\rho<\lambda$ and $\sigma_c>0$, the penalty favors a down-scaled target $s_c\mathbf{W}_{:,c}$ for column $c$.
\end{restatedtheorem}
\begin{proof}
\emph{Step 1: the injection adds a column-wise penalty.} With the injected noise, the reconstruction term of the objective is
\begin{equation*}
\mathbb{E}_{\mathbf{x},\boldsymbol{\eta}^{\uparrow},\boldsymbol{\xi}}\big\|\hat{\mathbf{W}}\big(\mathbf{x}+\lambda(\boldsymbol{\eta}^{\uparrow}+\boldsymbol{\xi})\big)-\mathbf{W}\big(\mathbf{x}+\rho(\boldsymbol{\eta}^{\uparrow}+\boldsymbol{\xi})\big)\big\|_2^2
=\mathbb{E}\big\|\mathbf{r}+\mathbf{P}\boldsymbol{\xi}\big\|_2^2,
\end{equation*}
where $\mathbf{r}\triangleq\hat{\mathbf{W}}(\mathbf{x}+\lambda\boldsymbol{\eta}^{\uparrow})-\mathbf{W}(\mathbf{x}+\rho\boldsymbol{\eta}^{\uparrow})$ collects everything that does not depend on $\boldsymbol{\xi}$, and $\mathbf{P}\triangleq\lambda\hat{\mathbf{W}}-\rho\mathbf{W}$. Expanding the square,
\begin{equation*}
\mathbb{E}\big\|\mathbf{r}+\mathbf{P}\boldsymbol{\xi}\big\|_2^2=\mathbb{E}\|\mathbf{r}\|_2^2+2\,\mathbb{E}\big[\mathbf{r}^\top\mathbf{P}\boldsymbol{\xi}\big]+\mathbb{E}\big\|\mathbf{P}\boldsymbol{\xi}\big\|_2^2 .
\end{equation*}
The cross term vanishes: $\mathbf{r}$ is a function of $(\mathbf{x},\boldsymbol{\eta}^{\uparrow})$ only, and $\boldsymbol{\xi}$ is independent of $(\mathbf{x},\boldsymbol{\eta}^{\uparrow})$ with zero mean, so $\mathbb{E}[\mathbf{r}^\top\mathbf{P}\boldsymbol{\xi}]=\mathbb{E}[\mathbf{r}]^\top\mathbf{P}\,\mathbb{E}[\boldsymbol{\xi}]=0$. For the last term, using $\mathbb{E}[\boldsymbol{\xi}\boldsymbol{\xi}^\top]=\boldsymbol{\Sigma}$ and writing $\mathbf{P}_{:,c}$ for column $c$ of $\mathbf{P}$,
\begin{equation*}
\mathbb{E}\big\|\mathbf{P}\boldsymbol{\xi}\big\|_2^2=\operatorname{tr}\!\big[\mathbf{P}\,\boldsymbol{\Sigma}\,\mathbf{P}^\top\big]=\big\|\mathbf{P}\boldsymbol{\Sigma}^{1/2}\big\|_F^2=\sum_c\sigma_c^2\big\|\mathbf{P}_{:,c}\big\|_2^2=\sum_c\sigma_c^2\big\|\lambda\hat{\mathbf{W}}_{:,c}-\rho\mathbf{W}_{:,c}\big\|_2^2,
\end{equation*}
the last expression defining the injection penalty $\mathcal{R}(\hat{\mathbf{W}})\triangleq\sum_c\sigma_c^2\big\|\lambda\hat{\mathbf{W}}_{:,c}-\rho\mathbf{W}_{:,c}\big\|_2^2$. The first term $\mathbb{E}\|\mathbf{r}\|_2^2$ is exactly the reconstruction term of \autoref{eq:wiener-objective} with $\boldsymbol{\eta}=\boldsymbol{\eta}^{\uparrow}$. Hence the objective with injection equals the objective without local injection plus $\mathcal{R}(\hat{\mathbf{W}})$.

\emph{Step 2: combining $\mathcal{R}$ with the proximity ridge.} Because $\boldsymbol{\Sigma}$ is diagonal and the proximity ridge $\alpha\|\hat{\mathbf{W}}-\mathbf{W}\|_F^2=\alpha\sum_c\|\hat{\mathbf{W}}_{:,c}-\mathbf{W}_{:,c}\|_2^2$ is column-separable, the combined penalty splits over columns:
\begin{equation*}
\mathcal{R}(\hat{\mathbf{W}})+\alpha\|\hat{\mathbf{W}}-\mathbf{W}\|_F^2=\sum_c\Big(\sigma_c^2\,\|\lambda\hat{\mathbf{W}}_{:,c}-\rho\mathbf{W}_{:,c}\|_2^2+\alpha\,\|\hat{\mathbf{W}}_{:,c}-\mathbf{W}_{:,c}\|_2^2\Big).
\end{equation*}
Fix a column $c$ and let $\langle\cdot,\cdot\rangle$ denote the Euclidean inner product. Expanding the two squared norms of that column,
\begin{equation*}
\sigma_c^2\,\|\lambda\hat{\mathbf{W}}_{:,c}-\rho\mathbf{W}_{:,c}\|_2^2+\alpha\,\|\hat{\mathbf{W}}_{:,c}-\mathbf{W}_{:,c}\|_2^2
\end{equation*}
\begin{equation*}
=(\alpha+\lambda^2\sigma_c^2)\,\|\hat{\mathbf{W}}_{:,c}\|_2^2-2(\alpha+\rho\lambda\sigma_c^2)\,\langle\hat{\mathbf{W}}_{:,c},\mathbf{W}_{:,c}\rangle+(\alpha+\rho^2\sigma_c^2)\,\|\mathbf{W}_{:,c}\|_2^2 .
\end{equation*}
Let $\kappa_c\triangleq\alpha+\lambda^2\sigma_c^2$, positive whenever $\alpha>0$ or $\lambda\sigma_c\neq0$, and $s_c=(\alpha+\rho\lambda\sigma_c^2)/\kappa_c$ as in the statement. Then
\begin{equation*}
\kappa_c\,\|\hat{\mathbf{W}}_{:,c}-s_c\mathbf{W}_{:,c}\|_2^2=\kappa_c\,\|\hat{\mathbf{W}}_{:,c}\|_2^2-2(\alpha+\rho\lambda\sigma_c^2)\,\langle\hat{\mathbf{W}}_{:,c},\mathbf{W}_{:,c}\rangle+\kappa_c\,s_c^2\,\|\mathbf{W}_{:,c}\|_2^2 ,
\end{equation*}
which matches the previous display in every term that depends on $\hat{\mathbf{W}}_{:,c}$. The difference is the column constant
\begin{equation*}
\mathrm{const}_c=\Big(\alpha+\rho^2\sigma_c^2-\frac{(\alpha+\rho\lambda\sigma_c^2)^2}{\alpha+\lambda^2\sigma_c^2}\Big)\|\mathbf{W}_{:,c}\|_2^2
=\frac{\alpha\,\sigma_c^2(\lambda-\rho)^2}{\alpha+\lambda^2\sigma_c^2}\,\|\mathbf{W}_{:,c}\|_2^2,
\end{equation*}
where the last equality uses $(\alpha+\rho^2\sigma_c^2)(\alpha+\lambda^2\sigma_c^2)-(\alpha+\rho\lambda\sigma_c^2)^2=\alpha\sigma_c^2(\lambda^2-2\lambda\rho+\rho^2)$. Summing over $c$ gives the stated decomposition with $\mathrm{const}=\sum_c\mathrm{const}_c$, which does not depend on $\hat{\mathbf{W}}$.
\end{proof}

\textbf{Remark (down-scaling).} For any channel with $\sigma_c>0$ and $\lambda>0$ we have $s_c<1$ if and only if $\rho\lambda\sigma_c^2<\lambda^2\sigma_c^2$, i.e.\ if and only if $\rho<\lambda$. In this regime the combined regularizer pulls each column $\hat{\mathbf{W}}_{:,c}$ toward the down-scaled $s_c\mathbf{W}_{:,c}$, with $s_c\to\rho/\lambda$ as $\alpha\to0$ and $s_c\to1$ as $\alpha\to\infty$. For $\rho=\lambda$ we obtain $s_c=1$ and the regularizer is centered at the original weight column.

\end{document}